\def\eqref#1{equation~\ref{#1}}
\def\1{\bm{1}}
\DeclareMathAlphabet{\mathsfit}{\encodingdefault}{\sfdefault}{m}{sl}
\SetMathAlphabet{\mathsfit}{bold}{\encodingdefault}{\sfdefault}{bx}{n}
\def\gC{{\mathcal{C}}}
\def\gH{{\mathcal{H}}}
\def\gL{{\mathcal{L}}}
\def\gN{{\mathcal{N}}}
\def\gU{{\mathcal{U}}}
\newcommand{\xhdr}[1]{{\noindent\bfseries #1}.}
\newcommand{\cut}[1]{}
\newcommand{\namelong}{\textsc{OXtal}\xspace}
\newcommand{\shellsamplong}{\textsc{Stoichiometric Stochastic Shell Sampling}\xspace}
\newcommand{\shellsampshort}{$S^4$\xspace}
\def\dt{\,\mathrm{d}t}
\newcommand{\ddd}{\mathrm{d}}
\newcommand{\mb}[1]{\mathbf{#1}}
\newtheorem{lemma}{Lemma} 
\renewcommand*{\backrefalt}[4]{%
    \ifcase #1 \footnotesize{(Not cited.)}%
    \or        \footnotesize{(Cited on page~#2)}%
    \else      \footnotesize{(Cited on pages~#2)}%
    \fi}
\title{\namelong: An All-Atom Diffusion Model for \\ Organic Crystal Structure Prediction}
\author{%
Emily Jin$^{1}$\thanks{Equal Contribution.}\, \thanks{Corresponding authors: \texttt{emily.jin@cs.ox.ac.uk}; \texttt{chl@caltech.edu}},
Andrei Cristian Nica$^{2}$\footnotemark[1] \,\thanks{Work completed while at Mila.},
Mikhail Galkin$^{3}$\thanks{Work completed while at Intel Labs.},
Jarrid Rector-Brooks$^{4,5,6}$, \\
\textbf{Kin Long Kelvin Lee$^{7}$\footnotemark[4],
Santiago Miret$^{8}$\footnotemark[4],
Frances H. Arnold$^{6}$,
Michael Bronstein$^{1,9}$,} \\
\textbf{Avishek Joey Bose$^{1,4,11}$,
Alexander Tong$^{9}$,
Cheng-Hao Liu$^{6,10}$\footnotemark[2]}\\
 $^1$University of Oxford, $^2$Synteny, $^3$Google, $^4$Mila, 
 $^5$Université de Montréal, $^6$Caltech, \\
 $^7$NVIDIA, $^8$Lila, $^9$AITHYRA, $^{10}$FutureHouse, $^{11}$ Imperial College London
}
\begin{document}

\maketitle

\begin{abstract}

\looseness=-1
Accurately predicting experimentally realizable 3D molecular crystal structures from their 2D chemical graphs is a long-standing open challenge in computational chemistry called {\em crystal structure prediction} (CSP). Efficiently solving this problem has implications ranging from pharmaceuticals to organic semiconductors, as crystal packing directly governs the physical and chemical properties of organic solids. In this paper, we introduce \namelong, a large-scale $100\textrm{M}$ parameter all-atom diffusion model that directly learns the conditional joint distribution over intramolecular conformations and periodic packing. To efficiently scale \namelong, we abandon explicit equivariant architectures imposing inductive bias arising from crystal symmetries in favor of data augmentation strategies. We further propose a novel crystallization-inspired lattice-free training scheme, \shellsamplong (\shellsampshort), that efficiently captures long-range interactions while sidestepping explicit lattice parametrization---thus enabling more scalable architectural choices at all-atom resolution.
By leveraging a large dataset of $600 \text{K}$ experimentally validated crystal structures (including rigid and flexible molecules, co-crystals, and solvates), \namelong achieves orders-of-magnitude improvements over prior \emph{ab initio} machine learning CSP methods, while remaining orders of magnitude cheaper than traditional quantum-chemical approaches. Specifically, \namelong recovers experimental structures with conformer $\mathrm{RMSD}_1<0.5$ Å and attains over 80\% packing similarity rate, demonstrating its ability to model both thermodynamic and kinetic regularities of molecular crystallization.
\end{abstract}

\section{Introduction}
\label{sec:introduction}

\looseness=-1
A landmark open challenge in computational chemistry is identifying a molecule's $3\text{D}$ crystal structure given knowledge of its chemical composition \citep{cspcomp5,cspcomp6,cspcomp7}. In particular, given only a molecule's $2\text{D}$ chemical graph, \emph{ab initio} molecular crystal structure prediction (CSP) seeks to estimate the distribution of experimentally realizable crystal packings in an accurate and scalable manner. This $3\text{D}$ arrangement of molecules within a periodic lattice dictates the macroscopic behavior of organic solids. For instance, in pharmaceuticals, crystal packing governs solubility, bioavailability, and the long-term stability of active ingredients \citep{schultheiss2009pharmaceutical,chen2011pharmaceutical}; in material science, intermolecular geometry dictates charge transport, porosity, and optical response---enabling applications across electronics, photonics, sensing, and energy storage \citep{zhang2018organic,wang2019organic}.

\begin{figure}[h!]
    \centering
    \vspace{-20pt}
    \includegraphics[width=\textwidth]{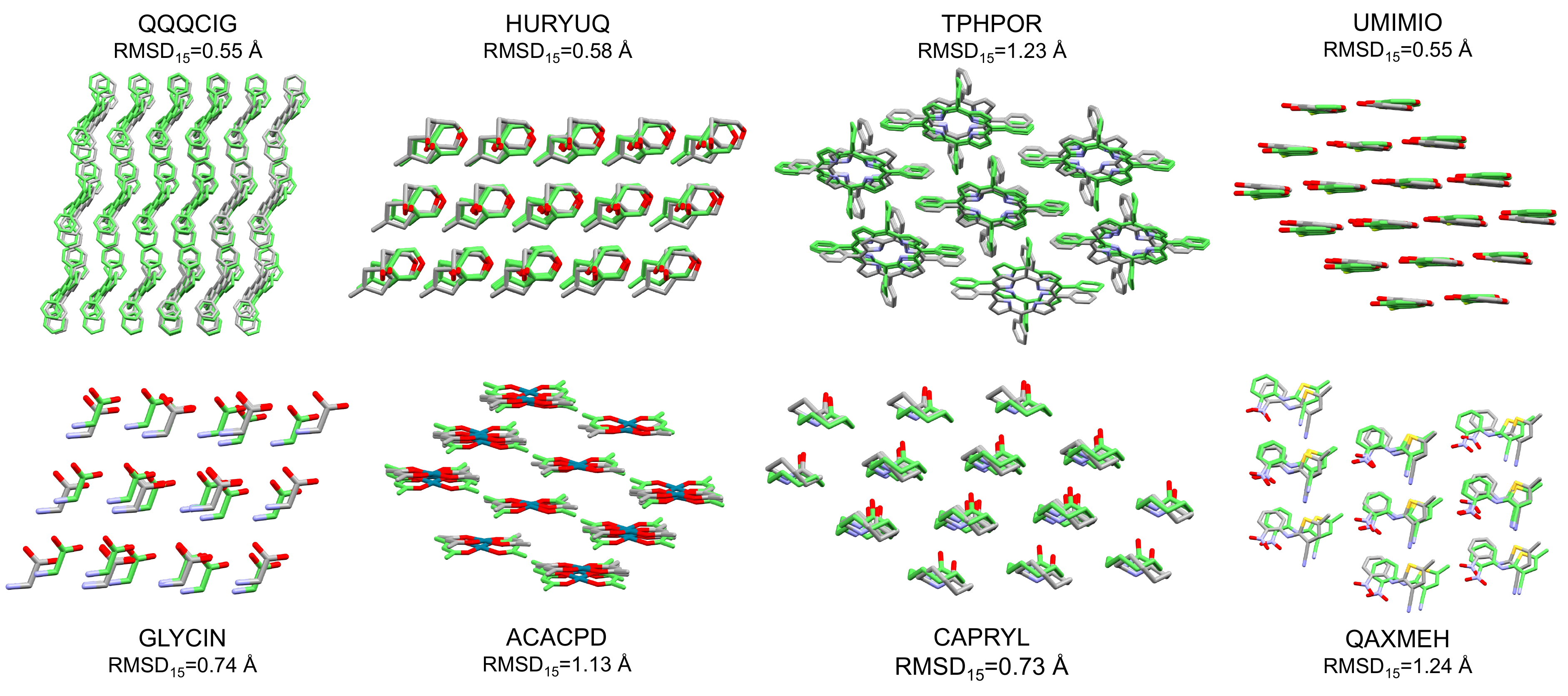}
    \vspace{-15pt}
    \caption{Molecular crystal structures generated by \namelong (color) compared to ground truth (grey).} %
    \vspace{-20pt}
    \label{fig:example}
\end{figure}

\looseness=-1
The complexity of CSP is underscored by the nature of crystal formation, wherein experimentally realized structures often occupy local minima of a highly non-smooth and well-separated Gibbs free energy landscape (\Cref{fig:gibbs}). This thermodynamic energy landscape is determined by the competition between the \emph{intramolecular} interactions that set the molecule’s own (flexible) conformation and the long-range and weak \emph{intermolecular} forces that dictate how molecules pack together periodically \citep{chernov2012modern}. As a result, classical CSP approaches combine a search procedure (e.g., enumeration or evolutionary algorithms) with oracle access to energy/ranking models, such as force fields or quantum-chemical density functional theory (DFT) \citep{engel2011density,cspcomp7}. However, classical CSP approaches often fail to capture realistic \textit{kinetic} conditions that lead to the \emph{distribution} of experimentally sampled energy minima. Consequently, these methods require the generation and optimization of \raisebox{0.35ex}{$\scriptstyle\sim$}$1{,}000$ to $100{,}000$ structures per molecule---the majority of which struggle to go beyond unfavourable local energy minima despite extensive computation.%

\looseness=-1
Recently, \emph{generative} approaches to modeling atomic systems--e.g., AlphaFold3~\citep{af3} for biomolecules and MatterGen~\citep{mattergen} for inorganic materials--have demonstrated their ability to capture intricate $3\text{D}$ atomistic interactions directly from data. 
Molecular CSP generalizes both of these, as proteins and inorganic crystals have a smaller set of interactions; proteins pack into lattices under strong intramolecular constraints mostly governed by their backbones \citep{branden2012introduction}, and inorganic crystals possess strong covalent or ionic bonds across a smaller number of atoms (\Cref{fig:inorganic}). Protein generative models also heavily rely on biological priors based on evolutionary information captured by multiple-sequence alignment~\citep{af2}. In contrast, small-molecule crystals span chemically diverse scaffolds, exhibit rich conformational flexibility, and often contain many molecular copies within a unit cell (\Cref{fig:organic}). Accurately capturing these interactions requires a large and diverse training set, highly expressive yet efficient to sample models, and training schemes that consider periodic interactions in a crystal lattice without impeding scalability for large model training. %

\looseness=-1
\xhdr{Main contributions} In this paper, we introduce \namelong, an all-atom diffusion transformer model for molecular CSP. Conditioned solely on the $2\text{D}$ molecular graph, \namelong learns to sample experimentally realistic crystal structures with accurate descriptions of molecular conformers as well as their periodic packing. To enable large-scale modeling, we train \namelong on over $600$k experimental molecular crystal structures spanning rigid and flexible molecules, co-crystals, and solvates. \namelong builds on recent advances in all-atom generative modeling~\citep{af3}, discarding symmetry representations of lattice vectors and associated crystal symmetries in favor of training directly on Cartesian coordinates and employing $\mathrm{SE}(3)$ data augmentation. We further introduce \shellsamplong (\shellsampshort), a novel lattice-free training scheme that retains the rich information of long-range interactions. We summarize our key contributions as follows:

\begin{itemize}[topsep=0pt, partopsep=0pt, itemsep=0pt, parsep=0pt, leftmargin=*]
    \item We present \namelong, the first large-scale all-atom diffusion model for molecular CSP, that samples molecular crystal packing directly from $2\text{D}$ molecular graphs (\S\ref{sec:methods}). %
    \item \looseness=-1 We introduce a crystallization-inspired training scheme for periodic structures, \shellsampshort, which removes explicit lattice parametrization and enables more scalable training (\S\ref{methods:s4}). 
    \item \looseness=-1 Empirically, \namelong significantly outperforms existing ML-based {\em ab initio} CSP methods, achieving $\mathrm{RMSD}_1<0.5$\,\AA{} and packing similarity rates above $80\%$ within $30$ samples (\S\ref{exp:ml-baselines}),  while being several orders of magnitude cheaper than traditional quantum chemical methods in DFT (\S\ref{exp:competition}).   
    \item \looseness=-1 We provide additional chemical analysis highlighting \namelong's ability to capture diverse intramolecular and intermolecular interactions, including crystal polymorphs, and generalize to complex co-crystal and biomolecular interactions (\S\ref{exp:chem-analysis}).
\end{itemize}

\section{Background and preliminaries}
\label{sec:background}

\subsection{Crystal Representations}
\looseness=-1
Formally, a periodic crystal structure $\mathcal{C}$ is defined by a pair $(L, \mathcal{B})$. The first component $L \in \mathbb{R}^{3 \times 3}$ defines the lattice vectors forming a parallelepiped known as the unit cell. The second component $\mathcal{B} = \{(z_i, u_i)\}_{i=1}^N$ is the basis, which consists of the $N$ atoms within this unit cell. Each atom is described by its species $\{z_i\}_{i=1}^N$ and its fractional coordinates $\{u_i \in [0, 1)^3\}_{i=1}^N$ relative to the lattice vectors, or equivalently, its Cartesian coordinates $Lu_i$. For molecular crystals, \(\mathcal{B}\) naturally decomposes into \(Z\) molecules. The connectivity of each molecule $m_k$ is given by a graph \(\{g_k=(V_k,E_k)\}_{k=1}^Z\) (vertices/atoms \(V_k\) and edges \(E_k\); species labels \(z_i\) are carried by the vertices). We next recall the symmetries present in the periodic crystal structure $\gC$.

\looseness=-1
\xhdr{Symmetries} Given  \(\mathcal{C}=(L,\mathcal{B})\) where $(u_i\in\mathbb{T}^3:=\mathbb{R}^3/\mathbb{Z}^3)$,
the Cartesian positions of atoms are,
\begin{equation}
    X(L,\mathcal{B})=\{\,L(n+u_i):\ n\in\mathbb{Z}^3,\ i=1,\dots,N\,\}.
    \label{eq:crystal_cartesian_pos}
\end{equation}
\looseness=-1
Furthermore, two descriptions \((L,\mathcal{B})\) and \((L',\mathcal{B}')\) encode the same structure if and only if there is a group action $g$ in $3\text{D}$, $g:= (R, t) \in \mathrm{SE}(3)$ such that,
$
X(L',\mathcal{B}')= g \circ  X(L,\mathcal{B})
$.

A periodic crystal admits an \emph{asymmetric unit} $\mathcal{A}$, the minimal subset that recovers the entire unit cell by applying symmetry transformations of the crystal's space group. Conversely, a supercell can be obtained by an integer matrix \(U\in\mathbb Z^{3\times3}\) with \(m:=|\det U|\ge1\), yielding \(\mathcal C^{(U)}=(LU,\mathcal B^{(U)})\), the same infinite crystal in a differing tiling. 
For example, $U=m I_3$ yields a cubic $m \times m \times m$ supercell. A formal summary of crystal symmetries is outlined below.

\begin{tcolorbox}[colback=gray!10,colframe=gray!60!black,title={Crystal representation invariances}]
\begin{enumerate}[label=\textbf{(S\arabic*)},leftmargin=*,topsep=0pt, partopsep=0pt, itemsep=0pt, parsep=0pt]
\item \emph{Global translation.} For \(t\in\mathbb{T}^3\), $(L,\{(z_i,u_i)\})\mapsto(L,\{(z_i,u_i+t)\})$.
\vspace{5pt}

\item \emph{Global rotation.} For \(R\in\mathrm{SO}(3)\), $(L,\{(z_i,u_i)\})\mapsto(RL,\{(z_i,u_i)\})$.
\vspace{5pt}

\item \emph{Permutation (reindexing).} For \(\zeta\in S_N\),$(L,\{(z_i,u_i)\})\mapsto(L,\{(z_{\zeta(i)},u_{\zeta(i)})\})$.
\vspace{5pt}

\item \emph{Unit-cell change.} Let \(U\in\mathbb{Z}^{3\times3}\) with \(m:=|\det U|\ge1\). 
\begin{itemize}[leftmargin=*,topsep=0pt, partopsep=0pt, itemsep=0pt, parsep=0pt]
\item \emph{Unimodular basis change} (\(m=1\), \(U\in\mathrm{GL}(3,\mathbb{Z})\)):
\[
(L,\{(z_i,u_i)\})\mapsto(LU,\{(z_i,U^{-1}u_i)\}),
\quad (LU)(n+U^{-1}u_i)=L(Un+u_i).
\]
\item \label{back:supercell} \emph{Supercell expansion} (\(m>1\)): Let \(\mathcal R(U)\subset\mathbb T^3\) be a fixed set of coset representatives for \(\mathbb Z^3/U\mathbb Z^3\) (so \(|\mathcal R(U)|=m\)). Then
\[
(L,\{(z_i,u_i)\})\mapsto\Big(LU,\ \{(z_i,U^{-1}(u_i+r)):\ i=1,\dots,N,\ r\in\mathcal{R}(U)\}\Big),
\]
since \((LU)(n+U^{-1}(u_i+r))=L(Un+u_i+r)\).
\end{itemize}
\end{enumerate}
\label{tab:invariance}
\end{tcolorbox}

\begin{wrapfigure}{r}{0.50\textwidth}
  \vspace{-10pt}
  \centering
  \begin{subcaptionbox}{$\text{CaTiO}_3$ (inorganic)\label{fig:inorganic}}[0.24\textwidth]
    {\includegraphics[width=0.75\linewidth]{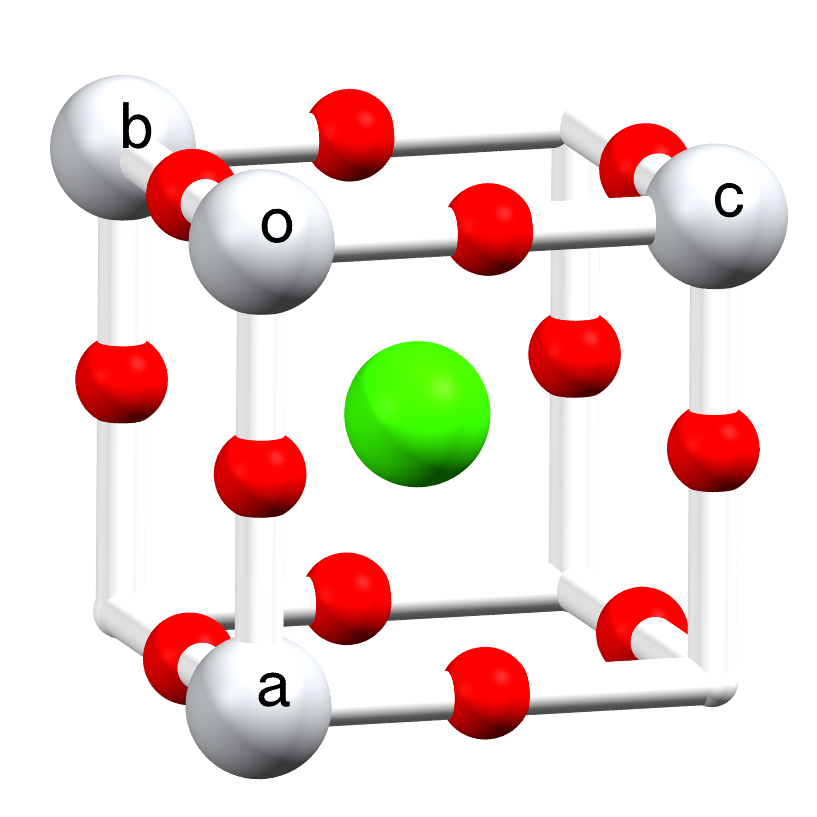}}
    \vspace{-2pt}
  \end{subcaptionbox}
  \begin{subcaptionbox}{Asprin (organic)\label{fig:organic}}[0.22\textwidth]
    {\includegraphics[width=0.87\linewidth]{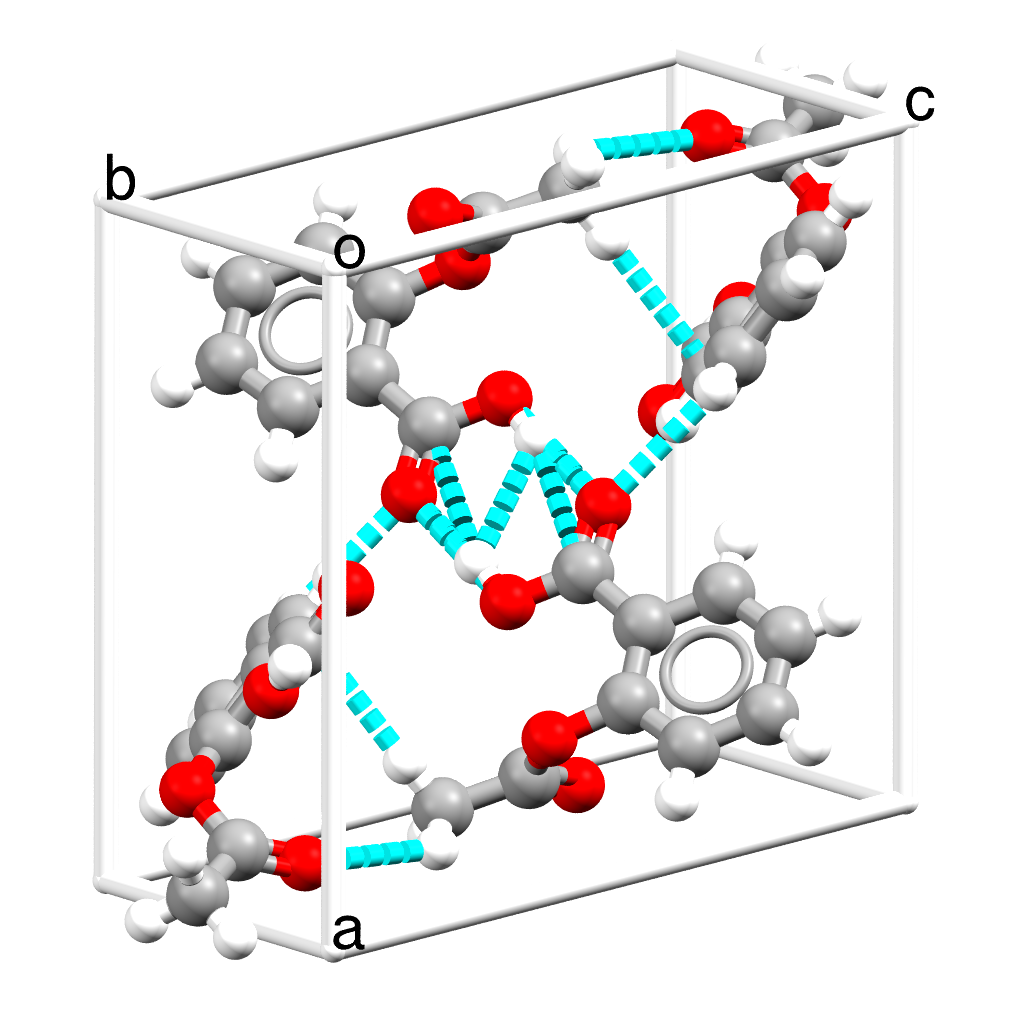}}
    \vspace{-2pt}
  \end{subcaptionbox}
  \vspace{-3pt}
  \caption{Molecular crystals consist of distinct molecules held together via long-range, weak interactions. They typically contain many atoms per unit cell and unknown molecule copies $Z$.}
  \label{fig:unit-cell-comp}
  \vspace{-10pt}
\end{wrapfigure}

\textbf{Molecular crystallization.}
\label{subsec:crystallization}
Let us denote \(g=\{g_k=(V_k,E_k)\}_{k=1}^Z\) as the set of molecular graph(s) and \(\mathcal{X}(g)\) as the set of periodic, all-atom crystal structures compatible with \(g\).

Due to the invariances, the physically distinct configurations form a quotient space \(\mathcal{M}(g)=\mathcal{X}(g)/\!\sim\). \emph{Ab initio} CSP can then be posed as conditional probabilistic inference over equivalence classes \([\,\mathcal{C}\,]\equiv[\,(L,\mathcal{B})\,]\in\mathcal{M}(g)\). An approximation of the experimentally realized distribution under crystallization conditions \(\aleph\) is:
\begin{equation*}
p_{\aleph}\left ([\mathcal{C}]\mid g\right)
\propto
\kappa_{\aleph}\left([\mathcal{C}]\mid g\right)
\exp \left(-\beta \Delta G([\mathcal{C}])\right) %
\end{equation*}
where \(\Delta G\) is the Gibbs free energy (thermodynamics), \(\kappa_{\aleph}\) summarizes kinetic accessibility (nucleation and growth pathways), $\beta=\tfrac{1}{k_{\mathrm B}T}$, $k_B$ is the Boltzmann constant, and $T$ is temperature. 

In practice, learning and sampling must respect the symmetry of \(\mathcal{M}(g)\), handle strong coupling between intramolecular conformation and intermolecular packing (especially in flexible molecules), navigate rugged kinetic and energy landscapes arising from large unit cells (often \raisebox{0.35ex}{$\scriptstyle >$} \(100\) atoms) and \emph{weak, long-range} interactions, and marginalize over unknown \(Z\).

\looseness=-1
These challenges are distinct from inorganic crystals, which are characterized by continuous, strong ionic and covalent bonds, no molecular conformers, and often \raisebox{0.35ex}{$\scriptstyle <$} \(30\) atoms per unit cell (\Cref{fig:unit-cell-comp}).

\subsection{Continuous-Time Diffusion Models} 
\label{sec:background_diffusion}

\looseness=-1
A diffusion model solves the generative modeling problem by being the solution to a (It\^o) stochastic differential equation (SDE)~\citep{oksendal2003stochastic},
\begin{equation}
      \ddd \mb{X}_t = f_t (\mb{X}_t) \dt + \sigma_t \ddd \mb{W}_t ,  \quad \mb{X}_0 \sim p_0.
    \label{eq:diffusion_sde_Ito}\end{equation}
\looseness=-1
In~\cref{eq:diffusion_sde_Ito}, we use boldface to denote random variables and normal script to denote functions or samples.
The function $f_t: \mathbb{R}^d \to \mathbb{R}^d$ corresponds to the average direction of evolution, while $\sigma_t: \mathbb{R} \to \mathbb{R}$ is the diffusion coefficient for the Wiener process $\mb{W}_t$. By convention, this is termed the forward noising process that starts from time $t=0$ and progressively corrupts data until the terminal time $t=1$ where we reach a structureless prior $p_1(x_1) := \gN(0,I)$. 

\looseness=-1
Under mild regularity assumptions, forward SDEs of the form of~\cref{eq:diffusion_sde_Ito} admit a time reversal, which itself is another SDE in the reverse direction $t=1$ to $t=0$ and transmutes a prior sample $x_1 \sim p_1$ to a sample from the target distribution $x_0 \sim p_0$~\citep{anderson1982reverse},
\begin{equation}
      \ddd \mb{X}_t = \left(f_t (\mb{X}_t) \dt -\sigma_t^2 \nabla_x \log p_t (\mb{X}_t) \right)\dt  + \sigma_t \ddd \mb{\overline{W}}_t ,  \quad \mb{X}_1 \sim p_1.
    \label{eq:reverse_diffusion_sde_Ito}
\end{equation}
\looseness=-1
Here $\mb{\overline{W}}_t$ is another standard Wiener process, and the \emph{reverse-time} SDE shares the same time marginal density $p_t$ as the forward SDE. Critically, the forward and reverse SDEs are linked via the Stein score $\nabla_x \log p_t(x_t)$, which is the key quantity of interest in the design of diffusion models. More precisely, diffusion models estimate the score function by forming an $\ell_2$-regression objective using a denoising network $D_{\theta}(x_t, t)$. For example, for the variance exploding (VE) SDE family of forward processes: $p_t = p_0 * \gN(0, \sigma^2_t)$, this corresponds to learning the set of optimal denoisers, i.e., the set of conditional expectations, across time $D(x_t,t) = \mathbb{E}\left[\mb{X}_0 | \mb{X}_t = x_t \right], \forall t \in [0, 1]$. 
Owing to the nature of Gaussian convolution, we can convert this to a simple \emph{simulation-free} training \textit{conditional} objective for any Bregman divergence with convex $\text{F}$, $\mathbb{B}_{\text{F}}$~\citep[Prop. 2]{holderrieth2024generator}
\begin{equation}
     \gL_c(\theta) =\mathbb{E}_{t \sim \gU(0,1), x_0 \sim p(x_0), x_t \sim p_t(x_t | x_0)} \left[ \lambda(t) \mathbb{B}_{\text{F}} (x_0, D_\theta(x_t,t))\right],
\label{eqn:denoising_score_matching_loss_VE}
\end{equation}
\looseness=-1
where $\lambda(t): [0,1] \to \mathbb{R}_+$ is any weighting function.
For a sufficiently expressive family of denoisers $\gH = \{D_{\theta}: \theta \in \Theta\}$, the minimizer to~\cref{eqn:denoising_score_matching_loss_VE} is $D^*_{\theta}(x_t, t) = D(x_t, t) = \mathbb{E}\left[\mb{X}_0 | \mb{X}_t = x_t \right]$. Given a trained denoising model, diffusion models generate samples at inference by simulating the reverse-time dynamics of (\ref{eq:reverse_diffusion_sde_Ito}) with the learned score $s_\theta$ computed as $s_\theta \leq(D_{\theta}(x_t, t) - x_t) / \sigma_t^2$. %

\section{\namelong}
\label{sec:methods}

\begin{figure}[t!]
    \captionsetup[subfigure]{labelformat=simple, labelsep=none,
        justification=raggedright, singlelinecheck=false, position=top}
    \renewcommand\thesubfigure{(\alph{subfigure})}

    \vspace{-10pt}
    \centering
    \begin{subfigure}[t]{0.31\textwidth}
        \caption{}\vspace{-7pt}
        \centering
        \includegraphics[width=\linewidth]{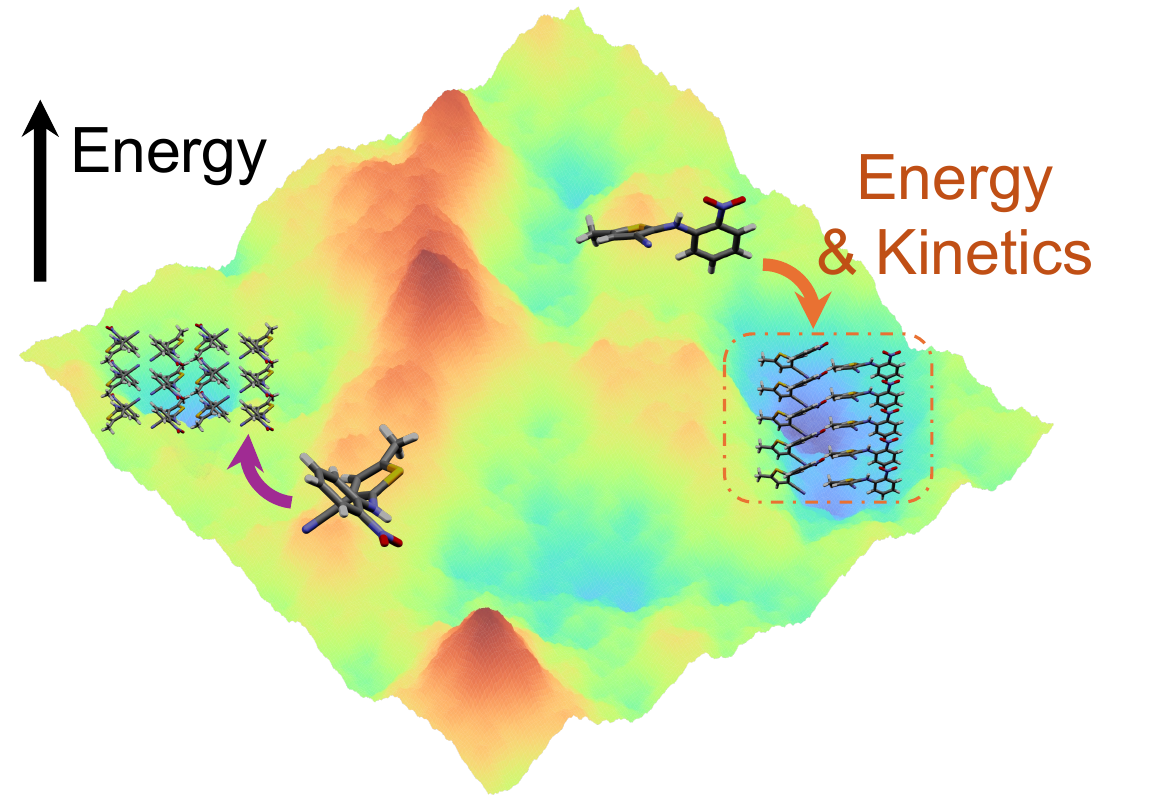}
        \label{fig:gibbs}
    \end{subfigure}%
    \hspace{4pt}%
    \begin{subfigure}[t]{0.31\textwidth}
        \caption{}\vspace{-7pt}
        \centering
        \includegraphics[width=0.8\linewidth]{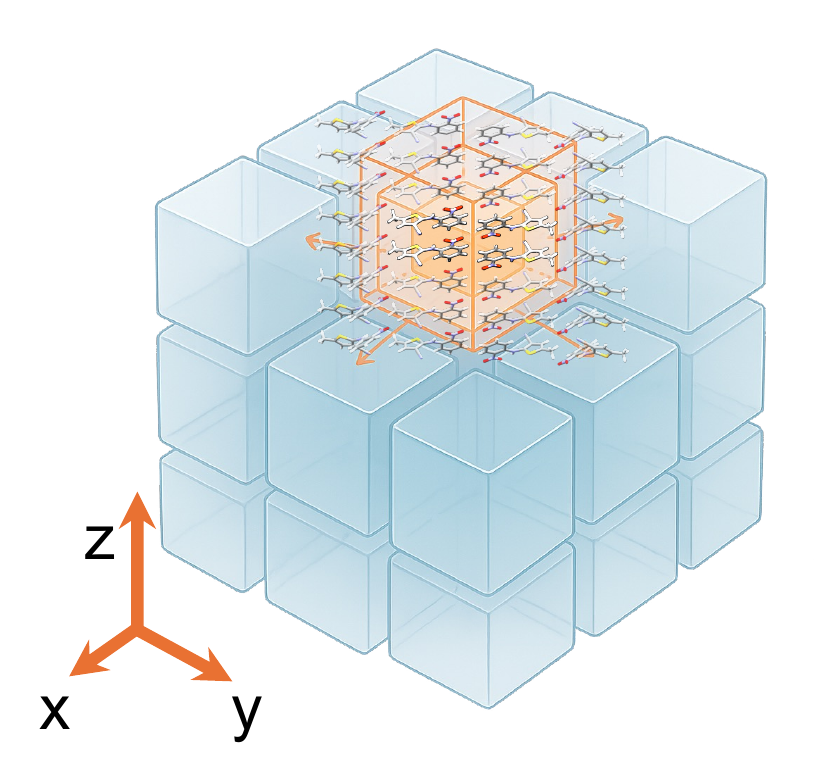} %
        \label{fig:shell-growth}
    \end{subfigure}%
    \hspace{4pt}%
    \begin{subfigure}[t]{0.31\textwidth}
        \caption{}
        \centering
        \includegraphics[width=\linewidth]{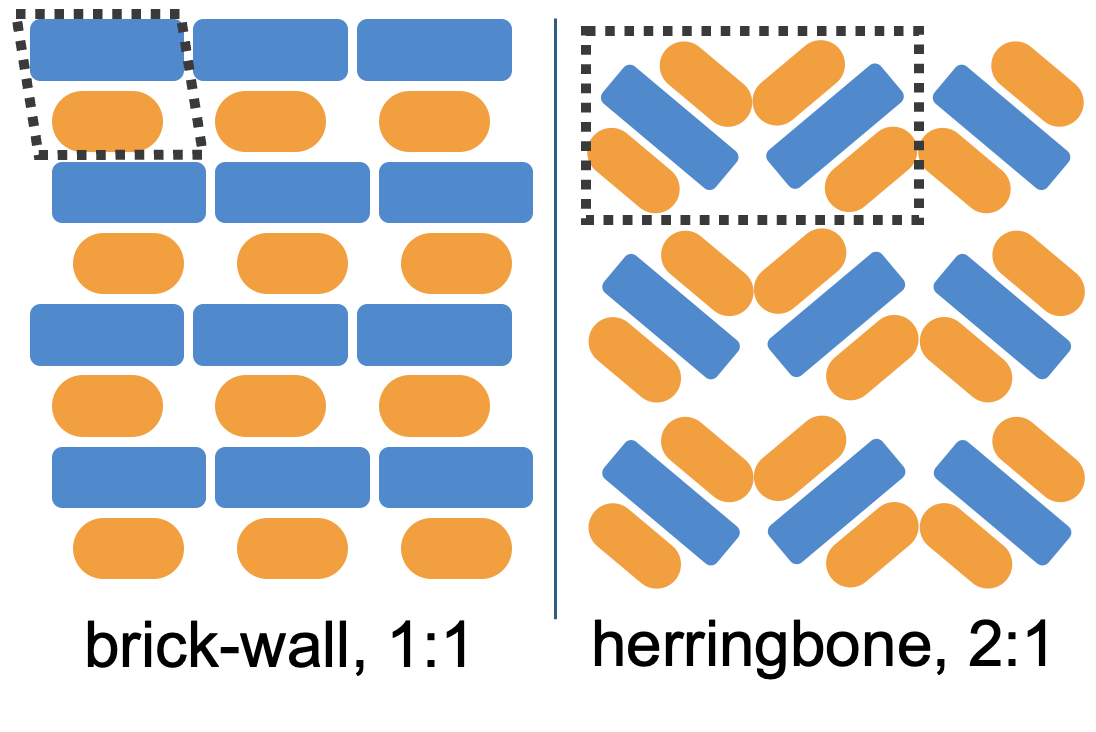}
        \label{fig:co-crystal}
    \end{subfigure}

    \begin{subfigure}{\textwidth}
        \vspace{-10pt}
        \caption{}
        \centering
        \includegraphics[width=\linewidth]{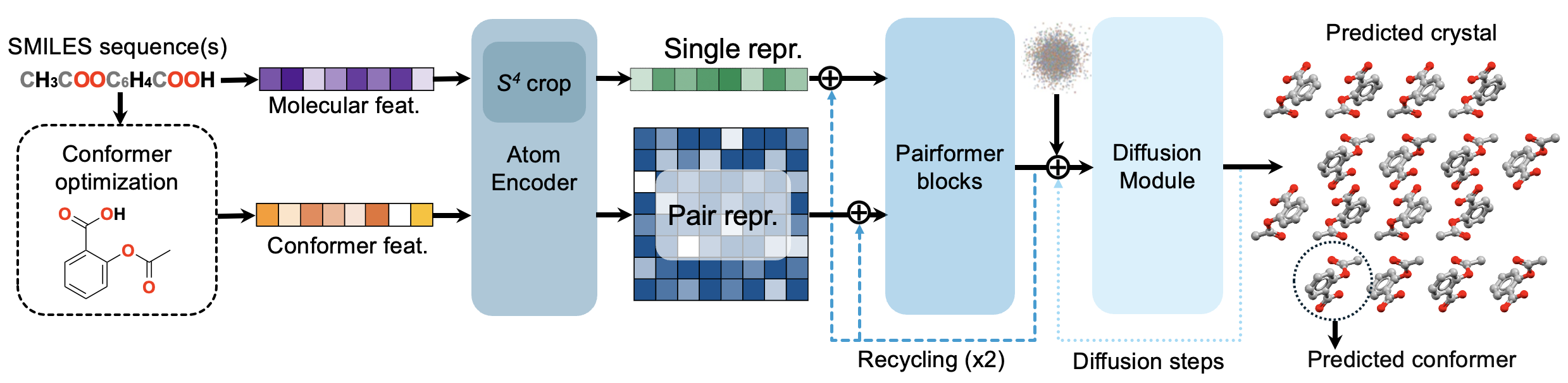}
        \label{fig:model-architecture}
    \end{subfigure}
    \vspace{-15pt}

    \caption{(a) Schematic of a rugged crystallization Gibbs free energy landscape with many local minima. Kinetic conditions often dictate which experimental minimum is formed. (b) Molecular crystallization, showing \textit{nucleation} and \textit{growth} in successive layers, which is the inspiration for \shellsampshort. (c) Common packing motifs exemplified in co-crystal polymorphs with 1:1 and 2:1 stoichiometric ratio. (d) Overview of \namelong\ architecture.}
    \label{fig:oxtal-overview}

    \captionsetup[subfigure]{labelformat=parens, labelsep=colon, position=bottom}
    \vspace{-15pt}

\end{figure}

\looseness=-1
We now describe \namelong, our all-atom diffusion model for molecular CSP. As outlined in~\S\ref{subsec:crystallization}, existing inorganic CSP models~\citep{flowmm,gasteiger2021gemnet} that rely on equivariant architectures and explicit unit-cell parametrizations face scalability challenges for large molecular crystals with unknown multiplicity \(Z\). We thus introduce \shellsampshort training, which exposes the model to long-range periodic cues without ever parametrizing a lattice (\S\ref{methods:s4}). Second, we implement a high-capacity, non-equivariant Transformer with data augmentation and strong molecular embeddings to capture symmetries (\S\ref{methods:architecture}). Together, these choices decouple what to generate (conformations and packing) from how the crystal is represented during training (unit cell, $Z$, etc.).

\subsection{\shellsamplong (\shellsampshort)}
\label{methods:s4}
\looseness=-1
Crystallization is a local-to-global process: once molecules approach contact distances, weak but specific interactions induce recurring motifs that propagate periodically. Learning to denoise such local consistent neighborhoods should therefore recover larger periodicity at inference time. Training on such subsampled blocks reduces token size, provides natural augmentation, and mirrors the partial observability of nucleation and growth (\Cref{fig:shell-growth}).

\looseness=-1
We formalize this idea in \shellsampshort. Let $\mathcal{C}^{(U)}=(LU,\mathcal{B}^{(U)})$ be a supercell with molecules $m \in \mathcal{M}$, where $X(m)\subset\mathbb{R}^3$ denotes $m$'s non-hydrogen Cartesian coordinates. We next define the \emph{minimum-image} intermolecular distance between two molecules, $d_{\min}(m,m') \;=\; \min_{x\in X(m),\,x'\in X(m')} \|x-x'\|_2$. 

\looseness=-1
Given a fixed contact radius $r_{\mathrm{cut}}$, \shellsampshort builds concentric shells $\mathcal{S}$ around a uniformly-sampled central molecule $m_c\sim \mathrm{Uniform}(\mathcal{A})$ based on the molecular contact graph induced by $d_{\min}(\cdot,\cdot)\le r_{\mathrm{cut}}$:
\begin{equation}
    \mathcal{S}_k(m_c) = \{ m_i \in \mathcal{M} : kr_{\text{cut}} \leq d_{\min}(m_c, m_i) < (k+1)r_{\text{cut}} \}, \quad k=0,1,2,\ldots
\end{equation}

We sample the number of shells $K\sim\mathrm{Uniform}(0,k_{\max})$ with probability ($1-p_{\max}$), and define the block of molecules $V_K=\cup_{j=0}^{K}S_j$. We cap block size by a token budget \(T_{\max}\) (atoms). If \(|V_K|\le T_{\max}\) we accept \(V_K\); otherwise we choose the smallest \(K^\star\) with \(|V_{K^\star+1}|>T_{\max}\), and if no full shells fit, we subsample the frontier \(S_{K^\star}\) to meet the budget while preserving the crystal’s molecular stoichiometry (\Cref{fig:co-crystal}). For example, if molecule type $i$ appears $N_i^{\mathrm{ASU}}$ times in $\mathcal{A}$ and $N_i$ times in $S_{K^\star}$, we sample molecules of type $i$ in $S_{K^\star}$ with weight $\omega_i\propto (N_i^{\mathrm{ASU}}/N_i)$. The resulting set is our training crop, denoted $\mathbf{A}_{\mathrm{crop}}$.

Compared to centroid- or $k$NN-based heuristics, this “shell cropping” better respects local interaction networks by respecting anisotropic packing motifs and interactions beyond the strongest ones (\Cref{fig:app:knn-radius}), while mitigating truncation biases common in large-graph learning \citep{zeng2019graphsaint}. %
\textcolor{black}{\S\ref{app:cropping_ablations} shows that training with \shellsampshort outperforms $k$NN and centroid cropping methods, and \S\ref{app:inf_analysis_larger_blocks} shows that training with \shellsampshort can generalize to long-range periodicity beyond training token sizes. Hyperparameter ablations of $r_{\text{cut}}$ can be found in \S\ref{app:shell_radius_ablation}.} 
Finally, we bound the error due to cropping. In the next proposition (see~\S\ref{app:theory} for proof), we show that the error in the loss due to cropping with \shellsampshort decreases with the cube root of the number of tokens.
\begin{mdframed}[style=MyFrame2]
\begin{restatable}{proposition}{SfourValidity}
    \looseness=-1
    Let $\partial \mathbf{A}_{\text{crop}} = \{\{u,v\} \in E : u \in \mathbf{A}_{\text{crop}} , v \notin \mathbf{A}_{\text{crop}} \}$ represent the boundary of $\mathbf{A}_{\text{crop}}$. Denote the number of atoms in a volume $C$ as $T(C)$. Let $L_\partial(\mathbf{A}_{\text{crop}})  = \sum_{\{u,v\} \in \partial \mathbf{A}_{\text{crop}} } \mathcal{L}(u, v)$ be the boundary loss. Assuming $\exists r^0$ s.t.\ $\mathcal{L}(u, v) = 0, \forall u,v$ s.t.\ $\|u - v\| > r^0$, i.e. $\mathcal{L}$ is local, and there exist $0 < a \le b < \infty$ s.t.\ $ a |S| \le T(S) \le b |S|$ for any $S \subseteq V$. Then,
    \begin{equation}
        \frac{L_\partial(\mathbf{A}_{\text{crop}})}{T(\mathbf{A}_{\text{crop}})} = O((1 + r_{\text{cut}})T(\mathbf{A}_{\text{crop}})^{-1/3})
    \end{equation}
\end{restatable}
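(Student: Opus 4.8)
The plan is to run a surface-to-volume (isoperimetric) argument: the boundary loss scales like the \emph{surface area} of the crop region, while the total atom count scales like its \emph{volume}, so for a ball-like region in $\mathbb{R}^3$ their ratio is $\Theta(R^{-1}) = \Theta(T^{-1/3})$. First I would reduce $L_\partial(\mathbf{A}_{\text{crop}})$ to a count of boundary atoms. Since $\mathcal{L}$ is local, only pairs $\{u,v\}$ with $\|u-v\|\le r^0$ contribute, and I take each such term to be bounded, $\mathcal{L}(u,v)\le \mathcal{L}_{\max}$ (a finite-range pair loss is bounded on its support). The upper density bound $T(C)\le b|C|$ (reading $|\cdot|$ as Euclidean volume) caps the number of atoms within distance $r^0$ of any fixed atom by a constant $\Delta := b\cdot\tfrac{4}{3}\pi (r^0)^3$, so the range-$r^0$ contact graph has degree $\le\Delta$. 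Hence $L_\partial(\mathbf{A}_{\text{crop}}) \le \mathcal{L}_{\max}\,\Delta\,|\partial V|$, where $\partial V$ is the set of crop atoms possessing at least one neighbor outside the crop, and it remains to bound $|\partial V|$.

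Second, I would confine $\partial V$ to a thin surface shell. By construction $V_K=\bigcup_{j\le K}S_j$ lies inside the ball $B_R(m_c)$ of radius $R:=(K+1)r_{\text{cut}}+\delta$ (with $\delta$ a fixed molecular-diameter constant), while every molecule excluded from the crop sits at minimum-image distance $\ge (K+1)r_{\text{cut}}$ from $m_c$. Any atom of $\partial V$ is within $r^0$ of an exterior atom, so it lies in the spherical shell of outer radius $R$ and thickness $w=O(r^0+r_{\text{cut}})$. Applying the upper density bound to this shell gives $|\partial V|\le b\,\mathrm{vol}(\text{shell}) \le b\cdot 4\pi R^2 w = O\!\big(R^2(1+r_{\text{cut}})\big)$, where the fixed range $r^0$ is absorbed into the constant ``$1$''. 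I would note in one line that subsampling the frontier shell $S_{K^\star}$ only removes outermost atoms and therefore cannot push $\partial V$ beyond this shell nor spoil the volume lower bound used next.

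Third, I would relate $R$ to $T(\mathbf{A}_{\text{crop}})$ via the lower density bound. The retained inner shells $S_0,\dots,S_{K-1}$ fill the ball up to radius $\asymp R$ at density $\ge a$, so $T(\mathbf{A}_{\text{crop}})\ge a\cdot\mathrm{vol}\gtrsim a R^3$, i.e.\ $R=O\!\big(T(\mathbf{A}_{\text{crop}})^{1/3}\big)$ and thus $R^2=O\!\big(T(\mathbf{A}_{\text{crop}})^{2/3}\big)$. Substituting into the previous step yields $L_\partial(\mathbf{A}_{\text{crop}}) = O\!\big((1+r_{\text{cut}})\,T(\mathbf{A}_{\text{crop}})^{2/3}\big)$, and dividing by $T(\mathbf{A}_{\text{crop}})$ gives the claimed $O\!\big((1+r_{\text{cut}})\,T(\mathbf{A}_{\text{crop}})^{-1/3}\big)$.

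The main obstacle is the geometric confinement and volume bookkeeping in the second and third steps: I must translate between atom counts and Euclidean volumes in \emph{both} directions, which is exactly the role of the two-sided density assumption $a|S|\le T(S)\le b|S|$, and I must argue the crop is ball-like so that the surface-to-volume ratio really is the Euclidean $R^2/R^3$. The shell construction makes ball-likeness immediate, since the crop is sandwiched between balls of radius $\sim Kr_{\text{cut}}$ and $\sim (K+1)r_{\text{cut}}$; the only genuine care is verifying that frontier subsampling and the \emph{molecular} (rather than atomic) granularity of the shells perturb the boundary-shell width $w$ and the volume lower bound by fixed constants only, which they do because both effects are controlled by the constants $r^0$ and the molecular diameter.
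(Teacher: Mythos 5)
Your proposal is correct and follows essentially the same surface-to-volume argument as the paper: a bounded per-edge loss and bounded contact-graph degree reduce the boundary loss to a count of boundary atoms, the two-sided density assumption converts between atom counts and Euclidean volumes, and the ball-like geometry of the shell construction gives $|\partial V| = O(R^2) = O(T^{2/3})$ with the partially retained frontier shell contributing the $(1+r_{\text{cut}})$ factor. The only packaging difference is that you confine the boundary atoms to a thin annulus of thickness $O(r^0 + r_{\text{cut}})$ directly, whereas the paper factors the same content into separate lemmas (bounded neighborhood, lattice isoperimetry, and a ball-plus-frontier decomposition); both likewise implicitly assume $\mathcal{L}$ is bounded on its support.
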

\end{mdframed}

\subsection{Model Architecture}
\label{methods:architecture}
\namelong is comprised of: (1) an \textbf{Atom encoder} that embeds both physical and structural information; (2) a \textbf{Pairformer trunk} that propagates information across all atoms in the crop; (3) a \textbf{Diffusion module} that takes in the single and pairwise representations and outputs a generated crystal structure. The overall architecture of \namelong is depicted in~\cref{fig:model-architecture}.

\looseness=-1
\xhdr{Atom encoder} Given an input SMILES sequence $s$, we generate a $3\text{D}$ conformer with RDKit ETKDG followed by relaxation by the semi-empirical quantum chemical method GFN2-xTB \citep{pracht2020xtb}. The atomic number, positions, formal charges, Mulliken partial charges, and bond information are used as \textcolor{black}{\textit{feature embeddings}} for the model. \textcolor{black}{\namelong is not particularly sensitive to the feature conditioning conformer coordinates (\S\ref{app:conformer_analysis}).} Finally, we resolve ambiguity of identical molecular copies via relative position encoding on entity identifiers \citep{af3}. %

\looseness=-1
\xhdr{Pairformer trunk}
We adapt existing state-of-the-art architectures for protein folding to generate single and pair representations for each molecular crystal \citep{af3}. Instead of tokenizing protein residues, we simplify the tokenization such that each token directly represents a single atom $a_i$ in the molecule. We then apply the Pairformer stack from AlphaFold3, which leverages triangular self-attention to update the single and pair representations. Unlike AlphaFold2, which relied on the equivariant Evoformer \citep{af2} architecture, this simpler Pairformer module is not explicitly equivariant, allowing for training on larger sequences.

\looseness=-1
\xhdr{Diffusion module}
The design of our diffusion module follows that of AlphaFold3 \citep{af3}, consisting of an atom attention encoder which combines token information given by the pairformer with an encoded representation of $x_t$. This is followed by a large 70M parameter diffusion transformer \citep{peebles2023scalable}, before a final atom attention decoder predicts the denoised atomic positions. We broadly follow \cite{karras2022elucidating} for pre-conditioning of model inputs.
\textcolor{black}{
A diffusion module size ablation is further provided in \S\ref{app:model_size_ablation}.}

\subsection{Training}
\label{methods:training}

\looseness=-1
\namelong is trained using a procedure similar to those successfully employed for protein structure prediction \citep{af3}. The training objective is a composite loss designed to capture both the global structure and the accuracy of the local chemical environment. This loss is comprised of two main components: (1) a mean squared error loss $\mathcal{L}_{\text{mse}}$, (2) a smooth local difference distance test $\mathcal{L}_{\text{sLDDT}}$ as defined in \citet{af3}. Both losses compare the predicted structure $\hat{x}_0 := D_\theta(x_t, t)$ and an aligned ground truth structure $x_0^{\text{align}} = \texttt{align}(x_0, \hat{x}_0)$, and the latter emphasizes the pairwise interactions within the crop via a surrogate of the interatomic distances. To round off our training, we also include a distogram loss on a separate head branching from the trunk $\mathcal{L}_{\text{dist}}(\hat{d}, d)$ to ensure the trunk output contains binned pairwise distance information. Additional details for computing component losses are provided in \S\ref{app:losses}. The final loss is then a weighted sum of these components:
\begin{equation}\label{eq:loss}
    \mathcal{L}(\theta) = \mathbb{E}_{t\sim\gU(0,1), x_t \sim p_t \left(x_t | x_0\right)} \left [\mathcal{L}_{\text{mse}}(\hat{x}_0, x_0^{\text{align}}) +\mathcal{L}_{\text{sLDDT}}(\hat{x}_0, x_0^{\text{align}})\right ] + \lambda_{\text{dist}}\mathcal{L}_{\text{dist}}(\hat d, d).
\end{equation}
\looseness=-1
We next curate a training dataset from the Cambridge Structural Database (CSD) that contains $\sim 600$k crystals. Specific details regarding model training and configuration are outlined in \S\ref{app:technical-details}.

\section{Experiments}
\label{sec:experiments}
We evaluate \namelong on several different datasets for molecular CSP, comparing against a range of ML-based (\S\ref{exp:ml-baselines}) and DFT energy-based methods (\S\ref{exp:competition}). These results are complemented with a broader chemical survey (\S\ref{exp:chem-analysis}). See \S \ref{app:exp-details} for exact specifications.

\looseness=-1
\xhdr{Baselines} We compare against \textit{ab initio} ML methods and energy-based methods. For ML methods, most models for inorganic CSP are incompatible, so we evaluate against the pre-trained AssembleFlow, a molecular CSP method that infers crystal packing from rigid-body molecules ~\citep{guo2025assembleflow} and A-Transformer, an all-atom transformer flow matching model (\S \ref{app:simpTrans}). Additional results for zero-shot inference from AlphaFold3 are presented in \S\ref{app:alphafold3}. For energy-based methods, we compare against computational chemistry baselines that submitted to CCDC's 5th, 6th, and 7th CSP blind tests \citep{cspcomp5,cspcomp6,cspcomp7}.%

\looseness=-1
\xhdr{Metrics} 
We adopt standard CSP metrics and report both \emph{sample-} and \emph{crystal-level} scores:
\begin{enumerate}[topsep=0pt, partopsep=0pt, itemsep=0pt, parsep=0pt, leftmargin=*]
    \item \textit{Collision rate} (Col$_S$): Fraction of generated samples with any intermolecular distance $< r_w-0.7$\,\AA{} where $r_w$ is the sum of atomic van der Waals radii \citep{guha2006blue}. Lower is better. %
    \item \textit{Packing similarity rate} (Pac$_S$ and Pac$_C$): Using CSD COMPACK, a sample's packing is \emph{partially} similar if at least 8 of 15 molecules \emph{could} be aligned to an experimental cluster (cluster size per CSP5; see \S\ref{app:exp-details}). Pac$_C$ is the fraction of targets with at least one match. 
    \item \textit{Conformer recovery} (Rec$_S$ and Rec$_C$): $\mathrm{RMSD}_1<0.5$\,\AA{} (non-hydrogen) to a solid-state conformer. Rec\textsubscript{S} averages over all samples; Rec\textsubscript{C} is the fraction of targets with at least one match.
    \item \textit{Approximately solved} ($\widetilde{\text{Sol}}_C$): \emph{Any} collision-free, packing-similar sample with $\mathrm{RMSD}_{15}<2$\,\AA{} on a 15-molecule cluster.
\end{enumerate}

\subsection{CSP for Rigid and Flexible Molecules}
\label{exp:ml-baselines}
\looseness=-1
First, we compare \namelong to \textit{ab initio} ML models on two different test sets comprised of representative rigid and flexible molecules with ground-truth structures in CSD (see \S\ref{app:ml-datasets} for details). For each crystal target, we generate $n_S=30$ samples using each model. Training-time exclusions ensure no target appears in the training sets of \namelong or A-Transformer; AssembleFlow uses a public checkpoint that may include overlap. DFT baselines are omitted here due to prohibitive cost.

\begin{figure}[t!]
  \vspace{-20pt}
  \centering
  \includegraphics[width=\textwidth]{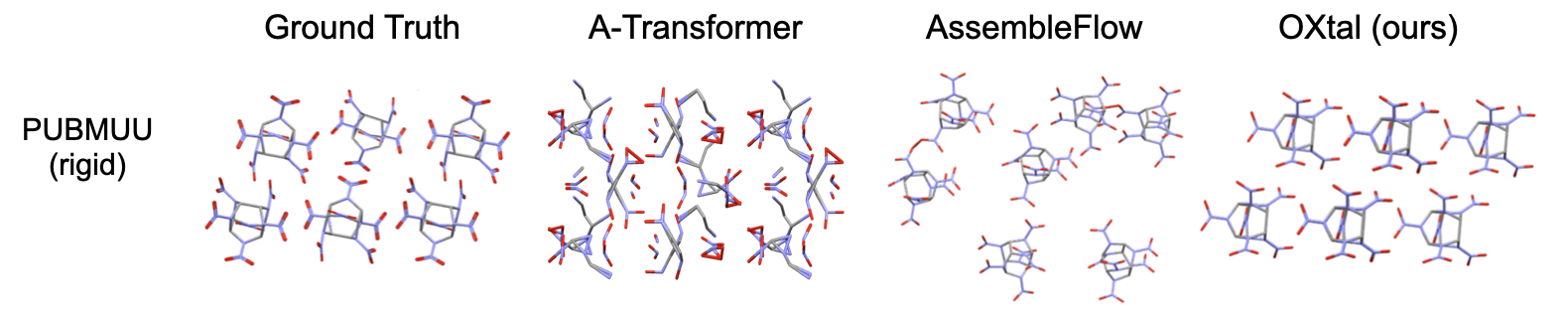}
  \vspace{-20pt}
  \caption{Example crystal packing generated by A-Transformer, AssembleFlow, and \namelong.}

  \label{fig:ml-examples}
\end{figure}

\begin{figure}[t!]
    \centering
    \begin{minipage}{0.62\textwidth}
{\renewcommand{\arraystretch}{1}
\setlength{\tabcolsep}{2.5pt}
\captionof{table}{Performance of \emph{ab initio} ML models on rigid and flexible molecular CSP in 30 samples. \namelong achieves an order of magnitude improvement and is the only model able to approximately solve any crystals in the flexible dataset.}
\label{tab:ml-results}
\centering
\small
\begin{tabular}{lcccccc}
\toprule
Model & Col$_S \downarrow$ & Pac$_S \uparrow$ & Pac$_C \uparrow$ & Rec$_S \uparrow$ & Rec$_C \uparrow$ & $\widetilde{\text{Sol}}_C \uparrow$ \\
\midrule
& \multicolumn{6}{c}{Rigid Dataset} \\
\cmidrule(lr){2-7}
A-Transformer    &  0.731&  0.015& 0.060&  0.033& 0.120& 0.060\\
AssembleFlow     & 0.524 & 0.001 & 0.040 & 0.211 & 0.760 & 0 \\
\textbf{\namelong}   & \textbf{0.011}& \textbf{0.873}& \textbf{1.000}& \textbf{0.737}& \textbf{0.960}& \textbf{0.300}\\

\midrule
 & \multicolumn{6}{c}{Flexible Dataset} \\
 \cmidrule(lr){2-7}
A-Transformer &  0.900&  0.001& 0.020&  0& 0& 0\\
AssembleFlow     & 0.883 & 0 & 0 & 0.021 & 0.140 & 0 \\
\textbf{\namelong}   & \textbf{0.097}& \textbf{0.291}& \textbf{0.900}& \textbf{0.048}& \textbf{0.400}& \textbf{0.220}\\
\bottomrule
\end{tabular}
}     \end{minipage}
    \hfill
    \begin{minipage}{0.36\textwidth}
        \vspace{-2pt}
        \includegraphics[width=0.9 \linewidth]{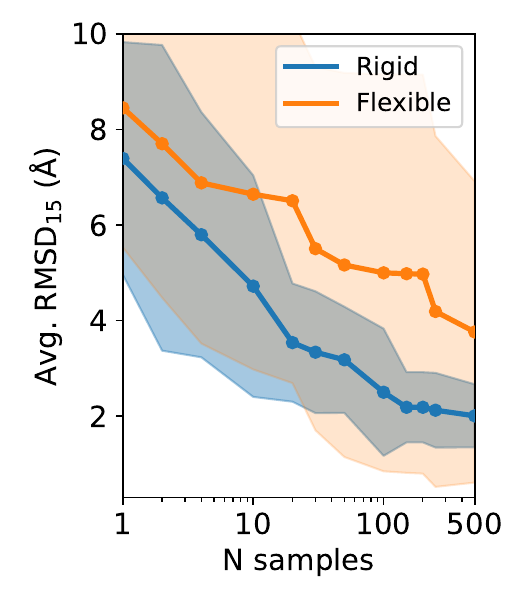} %
        \caption{\namelong sample efficiency for 10 rigid \& flexible molecules.}
        \label{fig:sample-efficiency}
    \end{minipage}
\end{figure}

\looseness=-1

\xhdr{Results} 
\Cref{tab:ml-results} shows \namelong outperforms existing ML methods across all metrics on both datasets. Intramolecularly (Rec\textsubscript{$C$}), \namelong recovers up to 90\% of solid-state molecular conformers; intermolecularly, Col\textsubscript{$S$} is near zero on rigid targets and low on flexible ones. \namelong's predicted packings also attain strong packing similarity rate against experimental structures, with approximate solves for both rigid and flexible molecules. Qualitatively (\Cref{fig:ml-examples}), A-Transformer struggles to capture meaningful conformers despite being given $Z$, highlighting the limitations of a unit cell based model. AssembleFlow generates molecular assemblies with large spatial separations that lack periodicity (also reflected in low Pac$_C$ and Pac$_S$ scores), along with frequent interatomic clashes.%

\looseness=-1
\xhdr{Sample efficiency} For downstream screening and design, few-sample success is critical. \namelong exhibits a log-linear improvement in $\mathrm{RMSD}_{15}$ among packing-similar predictions as $n$ increases (\Cref{fig:sample-efficiency}), with several rigid targets reaching Pac$_C$ and $\widetilde{\text{Sol}}_C$ with $n<10$. This suggests the sampler (i) often lands near the correct motif and (ii) refines global periodicity with additional draws. %

\subsection{CCDC CSP Blind Tests}
\label{exp:competition}

\begin{wrapfigure}{r}{0.335\textwidth}  %

  \vspace{-10pt}
  \begin{subfigure}[t]{\linewidth}
    \includegraphics[width=0.9\linewidth]{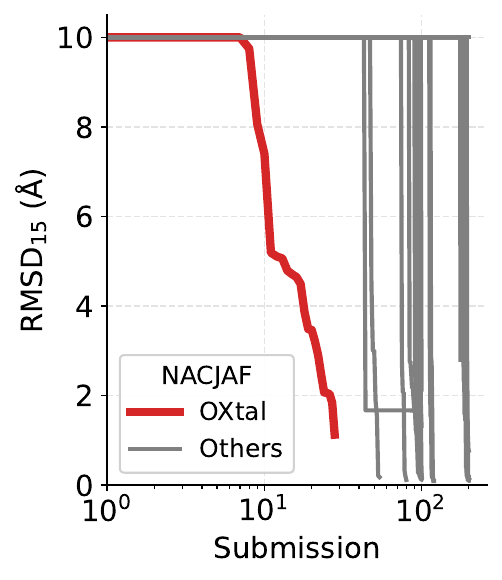}
    \label{fig:NACJAF}
    \vspace{-20pt}
  \end{subfigure}
  \begin{subfigure}[t]{\linewidth}
    \includegraphics[width=0.9\linewidth]{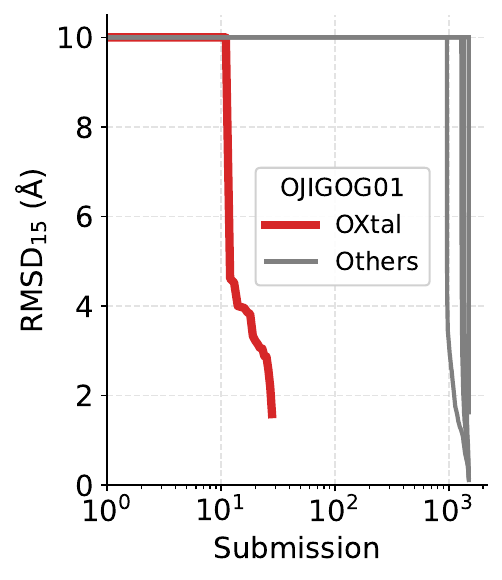}
    \label{fig:OJIGOG01}
  \end{subfigure}
  \caption{\looseness=-1 \namelong, for targets shown, achieves similar $\mathrm{RMSD}_{15}$ with fewer submissions compared to DFT.}
  \label{fig:comp-sample-stack}
  \vspace{-20pt}
\end{wrapfigure}

Every few years, CCDC holds a CSP blind test competition, which invites leading computational chemistry groups to solve a handful of hidden crystal structures %
\citep{cspcomp5,cspcomp6,cspcomp7}. We therefore evaluate \namelong on structures from the three most recent (5th, 6th, and 7th) blind tests. Metrics and experiment set up follow \S\ref{exp:ml-baselines}. We compare \namelong and other \textit{ab initio} ML baselines against the aggregate of reported expensive DFT-based submissions ($\text{DFT}_{\text{avg}}$). See \S\ref{app:csd-comp} for additional details and \S\ref{app:massive-tables} for per-structure DFT results.

\textbf{Results.}
\Cref{tab:comp-results} shows that \namelong strongly outperforms \textit{ab initio} ML baselines, and achieves the best or second best performance across all three tests \textcolor{black}{using only 30 samples per target}. While DFT methods attain higher approximate solve rates, when \namelong is allowed to generate the same number of samples as the DFT methods for CSP Blind Test 5, it matches the approximate solve rate of DFT\textsubscript{avg} (\S\ref{app:inf_analysis_more_samples}). \namelong consistently scores the best in terms of packing similarity rate. From a per-sample basis, only $6-30\%$ of DFT samples match the packing cluster, whereas \namelong generally recapitulates the packing structure ($48-67\%$). This suggests that while DFT identifies many local energetic minima, \namelong can better capture the joint energy and kinetic features that determine which motifs are more likely to be formed. Of the hundreds of submitted predictions (and potentially many more generated but unsubmitted structures), only a small percentage of DFT-identified minima are close to ground truth structures (\Cref{fig:comp-sample-stack}). Predicting the correct experimental structure in few shots is crucial for downstream discovery applications, and \namelong reliably approximates crystal packings \textit{sans} any ranking methods. %

\begin{figure}[b!]
    \centering
    \begin{minipage}{0.62\textwidth}
    \vspace{-10pt}
{\renewcommand{\arraystretch}{1}
\setlength{\tabcolsep}{2pt}
\captionof{table}{Results for the 5th, 6th, and 7th CCDC CSP blind tests. Classical chemistry methods are aggregated as DFT\textsubscript{avg}. The best model is \textbf{bolded}, and the second best is \underline{underlined}.} 
\centering
\vspace{-7pt}
\small
\begin{tabular}{lccccccc}
\toprule
Model 
& $n_{S}$ & Col$_S \downarrow$ & Pac$_S \uparrow$ & Pac$_C \uparrow$ & Rec$_S \uparrow$ & Rec$_C \uparrow$ & $\widetilde{\text{Sol}}_C \uparrow$ \\
\midrule
& \multicolumn{7}{c}{CSP Blind Test 5} \\
\cmidrule(lr){2-8}
A-Transformer & 30 & 0.833&  0& 0&  0& 0& 0\\
AssembleFlow & 30 & 0.717 & 0 & 0 & 0.150 & 0.500 & 0 \\
{DFT\textsubscript{avg}} & 464 &  \underline{0.039} & \underline{0.323} & \underline{0.661} & \textbf{0.784}   & \underline{0.733} & \textbf{0.544} \\
\textbf{\namelong}       & 30  & \textbf{0.006}     & \textbf{0.667}    & \textbf{0.833}    & \underline{0.572}& \textbf{0.833}& \underline{0.167}\\
\midrule

& \multicolumn{7}{c}{CSP Blind Test 6} \\
\cmidrule(lr){2-8}
A-Transformer & 30 &  0.967&  0& 0&  0& 0& 0\\
AssembleFlow & 30 & 0.800 & 0 & 0 & 0.073 & 0.200 & 0 \\
{DFT\textsubscript{avg}} &  83& \underline{0.067} & \underline{0.183} & \underline{0.520} & \textbf{0.655} &\underline{0.560} & \textbf{0.496} \\
\textbf{\namelong} & 30 & \textbf{0.013}& \textbf{0.660}& \textbf{1.000}& \underline{0.160} & \textbf{0.600} & \underline{0.200}\\
\midrule

& \multicolumn{7}{c}{CSP Blind Test 7} \\
\cmidrule(lr){2-8}
A-Transformer  & 30 & 0.950& 0& 0&  0& 0& 0\\
AssembleFlow & 30 & 0.808 & 0 & 0 & 0.063 & 0.250 & 0 \\
{DFT\textsubscript{avg}} & 868 &  \underline{0.072} & \underline{0.058} & \underline{0.511} & \textbf{0.337} &  \textbf{0.449} &  \textbf{0.421}\\
\textbf{\namelong} & 30 & \textbf{0.021}& \textbf{0.483} & \textbf{0.875} & \underline{0.129} & \underline{0.375} & \underline{0.125} \\

\bottomrule
\end{tabular}
\label{tab:comp-results}
}     \end{minipage}
    \hfill
    \begin{minipage}{0.335\textwidth}
    \vspace{-14pt}
        \includegraphics[width=0.95 \linewidth]{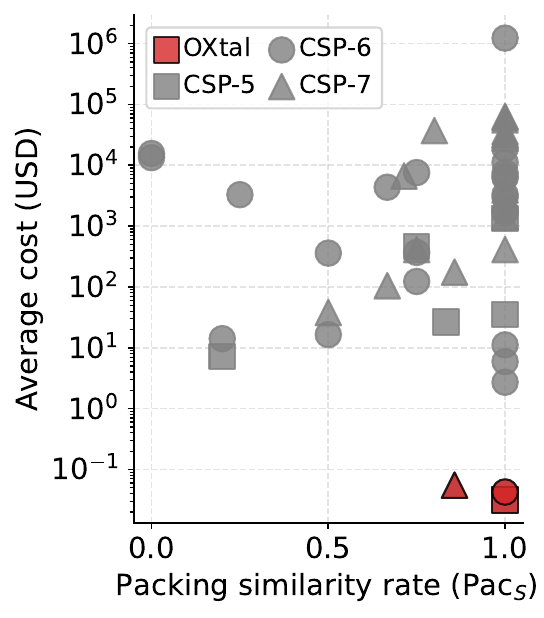} %
        \caption{Packing similarity rate per crystal attempted relative to average inference cost (in \$USD) for submitted CCDC competition methods. \namelong is denoted in \textcolor{red}{red}. Costs are normalized to a single on-demand AWS instance from Sept. 2025 (see \S\ref{app:inf-time-cost}).}
        \label{fig:inf-cost}
    \end{minipage}
\end{figure}

\textbf{Inference cost.}
A major pitfall of DFT-based methods is the extremely high computational cost required to run the atomistic simulations. Recently, CCDC raised concerns over the 46 million CPU core hours that were reportedly utilized by submitted methods in CSP7 to solve only 8 crystal targets \citep{cspcomp7}. Unlike traditional DFT methods that require new simulations for each new molecule, \namelong's upfront training cost (\S \ref{app:hardware}) is amortized at inference, enabling efficiently sampling for new molecules. Using standardized on-demand cloud pricing (\S\ref{app:inf-time-cost}), \textit{\namelong is over an order of magnitude cheaper at inference time}, while maintaining strong packing similarity rates (\Cref{fig:inf-cost}). Given CCDC's call for more efficient CSP algorithms, \namelong's cost profile enables broad screening and dense posterior sampling before optional physics-based refinement.

\begin{figure}[t!]

    \captionsetup[subfigure]{labelformat=simple, labelsep=none,
        justification=raggedright, singlelinecheck=false, position=top}
    \renewcommand\thesubfigure{(\alph{subfigure})}

    \vspace{-10pt}
    \centering
    \begin{subfigure}[t]{0.24\textwidth}
        \caption{}\vspace{-7pt}
        \centering
        \includegraphics[width=\linewidth]{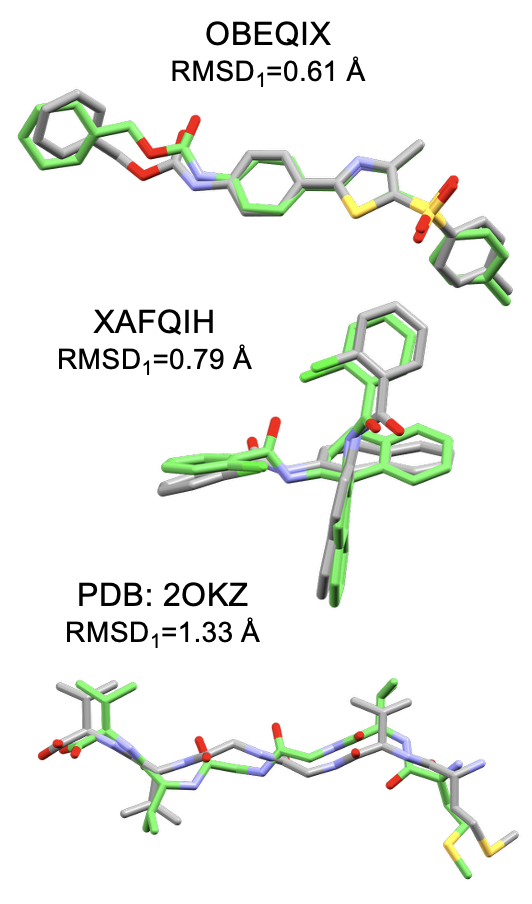}
        \label{fig:chem-analysis-a}
    \end{subfigure}%
    \hspace{10pt}%
    \begin{subfigure}[t]{0.24\textwidth}
        \caption{}\vspace{-8pt}
        \centering
        \includegraphics[width=0.8\linewidth]{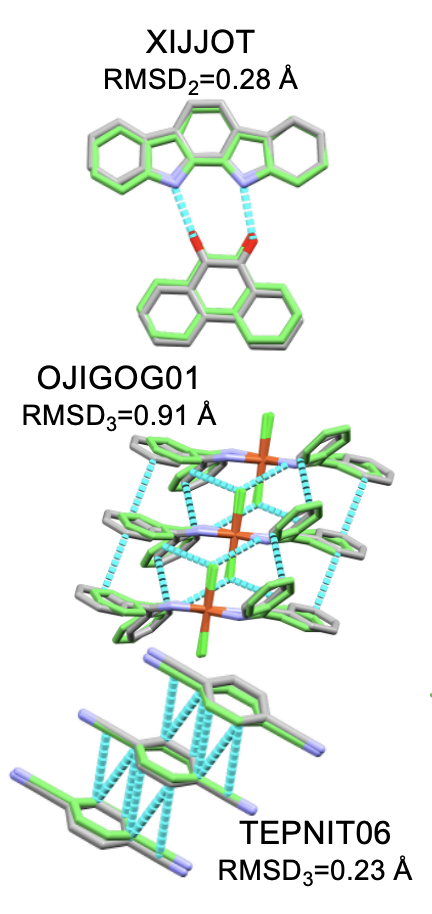} %
        \label{fig:chem-analysis-b}
    \end{subfigure}%
    \hspace{1pt}%
    \begin{subfigure}[t]{0.24\textwidth}
        \caption{}\vspace{-8pt}
        \centering
        \includegraphics[width=0.85\linewidth]{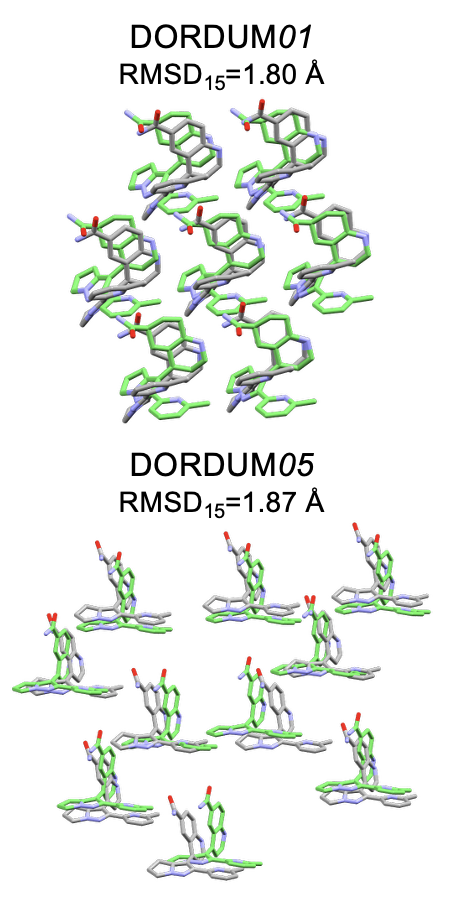}
        \label{fig:chem-analysis-c}
    \end{subfigure}
    \hspace{1pt}%
    \begin{subfigure}[t]{0.24\textwidth}
        \caption{}\vspace{-8pt}
        \centering
        \includegraphics[width=0.67\linewidth]{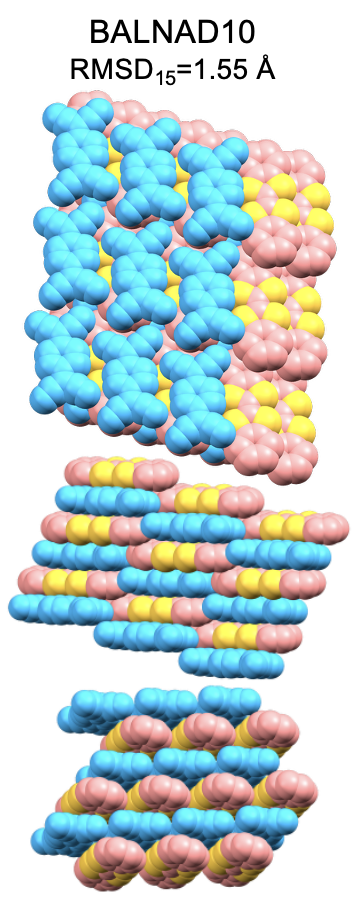}
        \label{fig:chem-analysis-d}
    \end{subfigure}
    \vspace{-15pt}

  \caption{\namelong (green) captures experimental (grey) (a) intramolecular and (b) intermolecular interactions in drug-like molecules, peptides, semiconductors, and catalysts. \namelong can further infer (c) distinct experimental polymorphs as well as (d) co-crystals.}
  \vspace{-10pt}
  \label{fig:chemical-analysis}

    \captionsetup[subfigure]{labelformat=parens, labelsep=colon, position=bottom}
\end{figure}

\looseness=-1
\subsection{Survey of Chemical Interpretability}
\label{exp:chem-analysis}

Beyond benchmark metrics, we examine \namelong{}'s ability to reproduce chemically meaningful intramolecular and intermolecular features in practically relevant crystals. \Cref{fig:example,fig:chemical-analysis} and \S \ref{app:chem-analysis} highlight accurate packings ($\mathrm{RMSD}_{15}<1.5$\,\AA{}) across diverse rigid and flexible chemotypes: drug-like molecules (HURYUQ), polymer precursors (CAPRYL), organometallics (ACACPD), $\pi$-conjugated materials (QQQCIG), and ROY (QAXMEH), the molecule which has the most known polymorphs.

\looseness=-1
\xhdr{Molecular interactions}
For intramolecular interactions, \namelong recovers \textit{solid-state} geometries for highly flexible molecules (which are highly influenced by packing), including small-molecule drugs as well as biomolecular fragments in the Protein DataBank (e.g. $\mathrm{RMSD}_{1}=1.3$\,\AA{} for a 6-mer peptide with 17 rotatable bonds (2OKZ) in \Cref{fig:chem-analysis-a}). 
For intermolecular interactions, \namelong accurately captures both strong and weak interactions, and both in registry and lengths. Examples in \Cref{fig:chem-analysis-b} include complementary hydrogen bonds in a semiconducting crystal (XIJJOT), %
 weak Cl-H halogen bonds in an organometallic catalyst (OJIGOG), $\pi$-$\pi$ stacking in $\pi$-functional molecule (TEPNIT), and multiple weak contacts in a cluster of the flexible drug aripiprazole (MELFIT) (\Cref{fig:app-chem-b}). 
Zooming out, \textcolor{black}{while \namelong may prefer planar motifs, it can still}  reproduce diverse packing motifs, including 1D columnar structures (e.g. BALNAD in \Cref{fig:chem-analysis-d}), quasi-2D herringbone structures (e.g. ANTCEN in \Cref{fig:app:anthracene}), and 2D brickwork structures (e.g. UMIMIO in \Cref{fig:example}).

\looseness=-1
\xhdr{Polymorphs} For molecules with many known polymorphs (including highly rotatable scaffolds), independent \namelong{} samples predict distinct experimental polymorphs (e.g., the drugs galunisertib DORDUM in \Cref{fig:chem-analysis-c} and indomethacin INDMET in \Cref{fig:app-chem-c}). This suggests the sampler can \textit{explore multiple kinetic and thermodynamic basins} rather than collapsing to a single motif. %
The diversity of sampled polymorphs is shown in~\cref{fig:many-samples}.

\looseness=-1
\xhdr{Multi-component systems}
\namelong is not restricted to single-component systems. \namelong can correctly predict the interactions between electron donor and acceptors (\Cref{fig:chemical-analysis,fig:app:chemical-analysis-cocrystal}), which dictate these crystals' electronic properties. For the charge-transfer semiconducting co-crystals BALNAD and PERTCQ, \namelong correctly reproduces the 1D $\pi$-stacked columns with alternating donors and acceptors, the inter-columnar brick-wall registry, as well as the $\pi$-$\pi$ stacking distances.  

\textcolor{black}{
\xhdr{Energy analysis}
Finally, we probe the energetic plausibility of \namelong samples using single-point GFN2-xTB calculations (\S \ref{app:energy_analysis}). Compared to the physics-based submissions to the CSD blind tests, \namelong samples occupy a similar, relatively tight energy basin. This indicates that even without any geometry relaxations, \namelong learns to avoid unphysical motifs and respects the coarse thermodynamic landscapes.}

\looseness=-1
\section{Related works}
\label{sec:related_work}

\looseness=-1
\xhdr{Physical approaches to crystal structure prediction} 
Classical crystal structure prediction (CSP) mostly applied search-based algorithms to sample a pre-defined search space \citep{cspcomp7}. 
While many such methods have shown varied degrees of success for different applications~\citep{van2002crystal, pickard2011ab, case2016convergence, tom2020genarris, banerjee2021crystal}, they fundamentally rely on many calls to an expensive evaluation function (e.g., energy computation using DFT) and struggle to leverage prior data effectively. 
Recent search and optimization approaches replace DFT with machine learning interatomic potentials~\citep{batatia2025mace_mp0,wood_uma, gharakhanyan2025fastcsp}.
In contrast, \namelong does not require explicit energy function calls and brings orders of magnitude improvements in speed and inference costs.

\looseness=-1
\xhdr{Generative models for inorganic crystal structure generation} 
Generative models have been applied for unconditional de novo generation of inorganic periodic crystals \citep{xie2022crystal} and later used for generation conditioned on crystal composition \citep{jiao2023crystal, jiao2024space, yang2024scalable, flowmm, levy2025symmcd}. In contrast to molecular crystals, %
inorganic crystals typically comprise smaller unit cells governed by strong inter-atomic bonding, resulting in rigid continuous lattices rather than flexible molecular packing. %

\looseness=-1
\xhdr{Protein structure prediction} %
In computational structural biology, the analogous task of protein structure prediction (PSP) conditioned on a protein sequence has seen transformative progress with landmark models like AlphaFold~\citep{af2,af3}
and ESMFold~\citep{lin2023evolutionary}, followed by de novo protein design approaches~\citep{watson2023novo,yim2023se3diffusionmodelapplication,bose2024se3stochasticflowmatchingprotein}. %
These models work with a few dozen residue types, compared to a larger chemical space in general molecular CSP, and use evolutionary information not present in molecular crystals.
A more detailed discussion of related work can be found in~\S\ref{app:extra_related_work}.

\section{Discussion}
\looseness=-1
In this paper, we introduce \namelong, a large-scale all-atom diffusion model for 3D molecular CSP that learns the joint distribution of molecular conformations and periodic packing conditioned on $2\mathrm{D}$ graphs. Discarding explicit equivariance and unit-cell parametrization for symmetry-aware augmentation and \shellsampshort, \namelong learns periodic motifs from locally consistent neighborhoods at scale, enabling efficient sampling at all-atom resolution. Empirically, \namelong achieves state-of-the-art results among \emph{ab initio} ML methods, and attains competitive packing similarity at \textcolor{black}{several orders of magnitude lower} cost compared to DFT-based methods. Our chemical survey supports \namelong's ability to capture diverse intra- and intermolecular interactions while avoiding unphysical motifs. Nevertheless, many improvements still remain. These include integrating reliable ranking, incorporation of local relaxation, conditioning on crystallization context (i.e.\ solvent, temperature), and further improving solve rate and sample efficiency.

\section*{Contributions Statement}
\looseness=-1
CL and AN conceived the idea for this project. CL, EJ, and AN led the project development. An earlier version of the model was developed by AN, CL, and MG. The current model was developed by EJ, CL, and JRB. The theoretical work was led by AT and AN, with help from EJ, ABJ, and CL. Data processing was led by CL. Baseline experiments were led by AN, CL, and EJ. Inference and chemical/energy analysis were led by CL, KL, and EJ. CL, AJB, EJ, and AT led the writing with help from all authors. CL, AT, and AJB supervised the project with input from MB, SM, and FA.

\section*{Acknowledgments}
\looseness=-1
The authors thank fruitful discussions with Tara Akhound-Sadegh, Xiang Fu, Francesca-Zhoufan Li, Dinghuai Zhang, and Hatem Titi. EJ is partially funded by AstraZeneca and the UKRI Engineering and Physical Sciences Research Council (EPSRC) with grant code EP/S024093/1. AJB acknowledges partial funding through an NSERC PDF scholarship. This research is partially supported by EPSRC Turing AI World-Leading Research Fellowship No. EP/X040062/1 and EPSRC AI Hub on Mathematical Foundations of Intelligence: An ``Erlangen Programme" for AI No. EP/Y028872/1. AJB is partially supported by an NSERC PDF. The authors acknowledge funding from UNIQUE, CIFAR, Amgen,
NSERC, Intel, and Samsung. The research was enabled
in part by computational resources provided by the Digital
Research Alliance of Canada (\url{https://alliancecan.ca}), Mila (\url{https://mila.quebec}), and NVIDIA.

\clearpage

\bibliography{references}
\bibliographystyle{iclr2026_conference}

\clearpage

\appendix
\clearpage{}%
\section*{Appendix}
\section{Theory}
\label{app:theory}
\begin{mdframed}[style=MyFrame2]
\SfourValidity*
\end{mdframed}

To prove this proposition, we first label our assumptions. 
\begin{enumerate}
    \item[A1] \textbf{Locality.} $\exists r_0$ s.t.\ $L(u, v) = 0$ for all $u,v$ s.t. $\|u - v\| > r_0$
    \item[A2] \textbf{Uniform Density.} there exist $0 < a \le b < \infty$ s.t.\ $ a |S| \le T(S) \le b |S|$ for any $S \subseteq V$.
\end{enumerate}

Next we investigate the \shellsampshort algorithm. We first recall the definition of shells and $\mathbf{A}_{\text{crop}}$:
\begin{equation}
    \mathcal{S}_k(g_c) = \{ g_i \in \mathcal{C}^{(U)} : kr_{\text{cut}} \leq d(g_c, g_i) < (k+1)r_{\text{cut}} \}, \quad k=0,1,2,\ldots
\end{equation}
where $d$ is the distance between molecules $g_c$ and $g_i$ in the infinitely repeating lattice, and $r_{\text{cut}}$ is some predefined contact radius. Molecules from each subsequent full shell $k$ are added until $|V_{k+1}| > T_{\text{max}}$, where $T_{\max}$ is the crop budget. If at least one full shell fits, we train on the crop
$\mathbf{A}_{\text{crop}} = \{g_c\} \cup  \bigcup_{k \in \mathcal{K}} \mathcal{S}_k$. This implies that $\mathbf{A}_{\text{crop}}$ can be well approximated by a ball centered around $g_c$ of radius $(k+1) r_{\text{cut}}$.

We define the ball of radius $r_k = r_{\text{cut}} k$ as $\mathcal{B}_{k} = \bigcup_{k \in \mathcal{K}} \mathcal{S}_k$. First, we present a short lemma to show the number of neighbors of a node is bounded.
\begin{lemma}\label{lemma:neighborhood}
    We define the \textbf{edge neighborhood} of $g$ as $\mathcal{N}(g) = \{\{u, v\} \text{ s.t. } u = g \cap \mathcal{L}(u,v) \ge 0\}$. Assuming [A1] and [A2], we have a uniform bound on the degree
    \begin{equation}
        | \mathcal{N}(g) | \le C
    \end{equation}
    for some $C$ independent of $g$.
\end{lemma}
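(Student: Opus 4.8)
The plan is to show that nonzero-loss edges incident to $g$ can reach only a bounded spatial neighborhood, and that such a neighborhood can contain only boundedly many vertices. \textbf{Step 1 (localize the neighborhood).} By the locality assumption [A1], any edge $\{u,v\}$ with $u=g$ that contributes to $\mathcal{N}(g)$ must have $\mathcal{L}(g,v)\neq 0$, which forces $\|g-v\|\le r_0$. Hence every neighbor $v$ lies in the closed ball $\overline{B}(g,r_0)$, and it suffices to bound the number of vertices $S_g := V\cap \overline{B}(g,r_0)$, since a simple graph has at most one edge per pair and therefore $|\mathcal{N}(g)|\le |S_g|$.

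\textbf{Step 2 (bound the vertex count in a fixed ball).} The ball $\overline{B}(g,r_0)$ has volume $\tfrac{4}{3}\pi r_0^3$, independent of $g$. Because physical crystals obey a steric (van der Waals) minimum-separation constraint---the same one underlying the collision metric used later in the paper---the number of atoms occupying a region of fixed volume is bounded above, so I would write $T(S_g)\le \rho_{\max}\cdot \tfrac{4}{3}\pi r_0^3 =: M$, where $\rho_{\max}<\infty$ is the maximal atomic number-density. Invoking the lower half of the uniform-density assumption [A2], namely $a|S_g|\le T(S_g)$, I then get $|S_g|\le T(S_g)/a \le M/a$.

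\textbf{Step 3 (uniformity in $g$).} The resulting bound $M/a$ depends only on $r_0$, $\rho_{\max}$, and $a$, none of which reference the particular vertex $g$; by the translation invariance of the infinite periodic lattice the identical estimate holds at every molecule. Setting $C := M/a$ therefore yields $|\mathcal{N}(g)|\le |S_g|\le C$ for all $g$, as claimed.

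\textbf{Main obstacle.} The delicate point is Step 2: assumption [A2] as stated relates the combinatorial cardinality $|S|$ of a vertex subset to its atom count $T(S)$, but it does not on its own convert a \emph{Euclidean volume} into an atom count. Bridging this gap requires a genuine spatial number-density bound $\rho_{\max}<\infty$, which I would justify either by reading ``uniform density'' as a per-volume statement or, more physically, via the hard-sphere/van der Waals minimum-separation property of real crystals. Once $\rho_{\max}$ is pinned down the remainder is an elementary volume-packing count, and I would additionally verify that edges are counted without multiplicity so that the passage $|\mathcal{N}(g)|\le|S_g|$ is exact rather than off by a constant factor.
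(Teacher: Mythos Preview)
Your approach is essentially the paper's: confine neighbors to a ball of radius $r_0$ via [A1], then invoke uniform density to bound the count in that ball. The paper simply reads [A2] as a per-volume statement (i.e., $|S|$ is the Euclidean volume of the region, so $T(\text{ball})\le b\cdot\tfrac{4}{3}\pi r_0^3$ directly), which is precisely the $\rho_{\max}$ bridge you flagged as the main obstacle; under that reading your extra pass through the lower bound $a$ is unnecessary and the constant is just $C=b\cdot\tfrac{4}{3}\pi r_0^3$.
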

\begin{proof}
    This follows from considering a ball of radius $r_k$ around $g$. That ball has volume $V = \frac{4}{3} r_k^3$. Using [A2], we have that the token count $T(\mathcal{B}_k(g)) \le \frac{4 b}{3} r_k^3$, and therefore $| \mathcal{N}(g) | < \frac{4 b}{3} r_k^3$ which is independent of $g$.
\end{proof}

We next note that for a regular lattice structure we have the following inequality for the size of the boundary relative to the total volume. Specifically for a lattice we have:
\begin{lemma}\label{lemma:surface_area}
    For $0 < a \le b < \infty$ on a 3D lattice, we have the following relationship between a sphere's surface area and volume: $a |\mathcal{B}_k|^{2/3}\le |\partial \mathcal{B}_k| \le b |\mathcal{B}_k|^{2/3}$. 
\end{lemma}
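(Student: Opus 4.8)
The plan is to reduce both sides of the claimed inequality to powers of the ball radius $r_k = k\,r_{\text{cut}}$ and then eliminate $r_k$, turning the statement into the classical Euclidean fact that surface area scales as volume raised to the power $2/3$. First I would note that, up to the discrepancy between the minimum-image molecular metric $d$ and the Euclidean metric (which are comparable), $\mathcal{B}_k$ is exactly the set of lattice molecules contained in a Euclidean ball of radius $r_k$ about the center $g_c$. Since that ball has volume $\tfrac{4}{3}\pi r_k^3$, the uniform-density assumption [A2] gives a two-sided count $c_1 r_k^3 \le |\mathcal{B}_k| \le c_2 r_k^3$ with $c_1,c_2$ depending only on $a,b$; equivalently $r_k \asymp |\mathcal{B}_k|^{1/3}$.

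Next I would count the boundary. By locality [A1] an edge of $\partial\mathcal{B}_k$ can only join molecules whose centers are within $r_0$ of one another, so every boundary edge is incident to a molecule lying in the frontier shell $\{x : r_k - r_0 \le \|x-g_c\| \le r_k\}$. This shell has Euclidean volume of order $r_k^2\,r_0$, so [A2] again shows its molecule count is of order $r_k^2$ (with constants depending on $a,b,r_0$); applying \Cref{lemma:neighborhood}, each frontier molecule carries at most $C$ boundary edges, so $|\partial\mathcal{B}_k|$ is bounded above by a constant times $r_k^2$. For the lower bound I would exhibit $\Omega(r_k^2)$ frontier molecules each possessing at least one neighbor just outside $\mathcal{B}_k$, giving $|\partial\mathcal{B}_k| \gtrsim r_k^2$. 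Combining these with $r_k \asymp |\mathcal{B}_k|^{1/3}$ yields $|\partial\mathcal{B}_k| \asymp r_k^2 \asymp |\mathcal{B}_k|^{2/3}$, which is the stated bound with $a,b$ absorbing the geometric constants.

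The hard part will be the lower bound, i.e.\ guaranteeing that the frontier shell is not anomalously sparse. The upper bound follows almost mechanically from applying [A2] to the shell volume and the bounded degree of \Cref{lemma:neighborhood}. The lower bound genuinely requires the density to be bounded \emph{below} ($a>0$) together with comparability of $d$ to the Euclidean distance, so that a spherical shell of fixed thickness $r_0$ really contains $\Omega(r_k^2)$ molecules, each with a genuine out-of-ball neighbor. A secondary technical point I would address is that $\mathcal{B}_k$ is assembled from entire shells $\mathcal{S}_j$ (and may have been truncated to respect the token budget $T_{\max}$): I would argue the outermost accepted shell sits within a Euclidean band of bounded thickness, so the surface-to-volume scaling is unaffected by the discrete, shell-by-shell construction.
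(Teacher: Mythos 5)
Your argument is sound, but you should know that the paper does not actually prove this lemma: it is asserted as a bare fact about 3D lattices, accompanied only by the remark that the constants are due to discretization error and locality and that the bounds tighten as $k\to\infty$. Your proposal therefore supplies a derivation where the paper supplies none. The route you take---establishing $r_k \asymp |\mathcal{B}_k|^{1/3}$ from the uniform-density assumption [A2], confining the relevant boundary edges to an annulus of thickness $r_0$ via locality [A1], and bounding the per-molecule edge count with \Cref{lemma:neighborhood}---is exactly what is needed to make the lemma rigorous, and it has the virtue of making explicit that the constants in the conclusion depend on $a$, $b$, $r_0$, and the ambient dimension, rather than literally being the density constants $a,b$ as the paper's phrasing suggests. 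Two caveats. First, your upper bound implicitly requires that $\partial\mathcal{B}_k$ count only edges carrying nonzero loss; on a complete interaction graph the raw edge boundary has size $\Theta\!\left(|\mathcal{B}_k|\cdot|V\setminus\mathcal{B}_k|\right)$, so the restriction to edges of length at most $r_0$ is essential. The paper makes the same implicit restriction, so you are no worse off, but it deserves an explicit sentence. Second, the lower bound $a|\mathcal{B}_k|^{2/3}\le|\partial\mathcal{B}_k|$, which you correctly identify as the delicate half, is never used downstream---the proof of the main proposition invokes only the upper bound---so while your planned construction of $\Omega(r_k^2)$ crossing edges is needed to justify the lemma as literally stated, its omission would not affect the cropping-error result.
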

We note that the constants are due to discretization error and locality. As we approach the continuous limit (i.e.\ $k \to \infty$), the bounds become tight.

Next, we have to bound how well $\mathbf{A}_{\text{crop}}$ is approximated by a ball of radius $k$.
\begin{lemma}\label{lemma:ball_plus_surface}
    For some constant $0 < c < \infty$, $\partial \mathbf{A}_{\text{crop}} \le \partial  \mathcal{B}_k + c r_{\text{cut}} |\mathcal{B}_{k}|^{2/3}$.
\end{lemma}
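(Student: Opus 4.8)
The plan is to realize $\mathbf{A}_{\text{crop}}$ as the clean ball $\mathcal{B}_k$ augmented by a single, possibly subsampled, outermost frontier shell, and then to charge every boundary edge of $\mathbf{A}_{\text{crop}}$ either to the boundary of $\mathcal{B}_k$ or to a frontier molecule. Write $\mathbf{A}_{\text{crop}} = \mathcal{B}_k \cup F$, where $F$ is contained in the spherical annulus of thickness $r_{\text{cut}}$ lying just outside $\mathcal{B}_k$ (this is precisely the shell that is trimmed to meet the token budget $T_{\max}$). Since every edge $\{u,v\} \in \partial \mathbf{A}_{\text{crop}}$ has its interior endpoint $u$ in exactly one of $\mathcal{B}_k$ or $F$, this gives a clean, non-overlapping case split on $u$.

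First I would handle the inner edges. If $u \in \mathcal{B}_k$, then since $v \notin \mathbf{A}_{\text{crop}} \supseteq \mathcal{B}_k$ we also have $v \notin \mathcal{B}_k$, so $\{u,v\} \in \partial \mathcal{B}_k$; hence these edges contribute at most $\partial \mathcal{B}_k$. (Any edge running from $\mathcal{B}_k$ into $F$ is now interior to $\mathbf{A}_{\text{crop}}$ and simply drops out, so this remains a valid upper bound.) For the remaining boundary edges the interior endpoint satisfies $u \in F$, so each such edge is incident to a frontier molecule. By \Cref{lemma:neighborhood} every molecule has degree at most $C$, so the number of frontier-incident edges is at most $C\,|F|$. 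Combining the two cases yields $\partial \mathbf{A}_{\text{crop}} \le \partial \mathcal{B}_k + C\,|F|$.

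It then remains to show $C\,|F| = O(r_{\text{cut}}\,|\mathcal{B}_k|^{2/3})$. The set $F$ lives in the annulus between radii $r_k = k\,r_{\text{cut}}$ and $r_k + r_{\text{cut}}$, whose volume is $\tfrac{4}{3}\pi\big[(r_k + r_{\text{cut}})^3 - r_k^3\big] = O(r_k^2\, r_{\text{cut}})$ once $r_k \gtrsim r_{\text{cut}}$. Applying the uniform-density bound [A2] converts this volume into a molecule count of the same order, while [A2] likewise gives $|\mathcal{B}_k| \asymp r_k^3$ and hence $r_k^2 \asymp |\mathcal{B}_k|^{2/3}$. Substituting gives $C\,|F| = O(r_{\text{cut}}\,|\mathcal{B}_k|^{2/3})$, and folding all constants into a single $c$ yields the claim $\partial \mathbf{A}_{\text{crop}} \le \partial \mathcal{B}_k + c\, r_{\text{cut}}\,|\mathcal{B}_k|^{2/3}$.

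The step I expect to be the main obstacle is the annulus volume estimate: I must ensure the subleading $r_k r_{\text{cut}}^2$ and $r_{\text{cut}}^3$ terms are dominated by the leading $r_k^2 r_{\text{cut}}$. This holds precisely in the regime $k \ge 1$, where at least one full shell fits inside the budget---exactly the construction's standing assumption---so the estimate is clean there. The only other point needing care is avoiding double counting in the edge charge, which is why I assign each boundary edge to its unique interior endpoint $u$ rather than to the edge itself.
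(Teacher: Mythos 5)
Your proof is correct and follows essentially the same route as the paper: decompose $\partial\mathbf{A}_{\text{crop}}$ into edges charged to $\partial\mathcal{B}_k$ versus edges incident to the frontier shell, bound the latter by the uniform degree bound times the frontier size, and estimate the frontier size via the annulus volume $(r_k+r_{\text{cut}})^3-r_k^3=O(r_k^2 r_{\text{cut}})$ together with $r_k^2\asymp|\mathcal{B}_k|^{2/3}$. Your explicit assignment of each boundary edge to its unique interior endpoint is a slightly cleaner justification of the decomposition than the paper's, but the substance is identical.
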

\begin{proof}
    We first note that $\mathcal{B}_k \subseteq \mathbf{A}_{\text{crop}}$ by construction of $\mathbf{A}_{\text{crop}}$. This means that we can consider $\mathbf{A}_{\text{crop}}$ as all the molecules in $\mathcal{B}_k$ plus (possibly) some additional molecules in $\mathcal{S}_k(g_c)$. We can then bound the total surface area as
    \begin{equation}\label{eq:cropped_bound}
        |\partial \mathbf{A}_{\text{crop}}| \le |\partial \mathcal{B}_k| + |\partial \mathcal{S}_k(g_c)|
    \end{equation}
    however we know that the surface area of molecules in $\mathcal{S}_k(g_c)$ is bounded by the number of nodes in $\mathcal{S}_k$ and some constant.
    \begin{align}
        |\partial \mathcal{S}_k(g_c)| &\le c |\mathcal{S}_k| \\
        &= c ((r_{\text{cut}}(k+1))^3 - (r_{\text{cut}}k)^3) \\
        &= c r_{\text{cut}}^3 (3k^2 + 3k + 1) \\
        &\le c r_{\text{cut}}^3 k^2\\
        &\le c r_{\text{cut}} |\mathcal{B}_k|^{2/3}
    \end{align}
    which combined with \eqref{eq:cropped_bound} proves the lemma.
\end{proof}

We are now ready to prove the main proposition. 
\begin{proof}
Assuming $\mathcal{L}_\partial(\mathbf{A}_{\text{crop}})$ is based on a sum of pairwise interaction terms i.e.
\begin{align}
    \mathcal{L}_\partial(\mathbf{A}_{\text{crop}}) &= \sum_{\{u, v\} \in \partial \mathbf{A}_{\text{crop}}} \mathcal{L}(u,v)
\end{align}
\cref{lemma:ball_plus_surface} allows us to first analyze $\mathcal{L}_\partial(\mathcal{B}_k)$ then use \cref{lemma:ball_plus_surface} for the final bound.
\begin{align}
    \mathcal{L}_\partial(\mathcal{B}_k) &= \sum_{\{u, v\} \in \partial \mathcal{B}_k} \mathcal{L}(u,v) \\
    &\le c | \partial \mathcal{B}_k | \max_{\{u, v\} \in \partial \mathcal{B}_k} \mathcal{L}(u,v) \\
    \intertext{using \cref{lemma:surface_area},}
    &\le c | \mathcal{B}_k |^{2/3} \max_{\{u, v\} \in \partial \mathcal{B}_k} \mathcal{L}(u,v)
    \intertext{Assuming [A1], [A2], and \cref{lemma:neighborhood},}
    &\le c | T(\mathcal{B}_k) |^{2/3}
\end{align}
Combining this with \cref{lemma:ball_plus_surface} we have the final result
\begin{equation}
        \frac{L_\partial(\mathbf{A}_{\text{crop}})}{T(\mathbf{A}_{\text{crop}})} = O( (1 + r_{\text{cut}})T(\mathbf{A}_{\text{crop}})^{-1/3}) \nonumber
    \end{equation}
\end{proof}
As a reminder, this proposition implies that the boundary surface becomes less of an issue as the number of tokens grows.

\clearpage
\section{Technical Details}
\label{app:technical-details}
Experiments were performed on heterogenous GPU clusters consisting of NVIDIA L40S and H100 GPUs. Models were primarily trained using multinode DDP training on L40S GPUs as cluster availability permitted. Inference is performed across all available hardware. Our model code is available to use here: \href{https://github.com/OXtal/OXtal}{https://github.com/OXtal/OXtal}.

\label{app:hardware}

\subsection{Training Data Processing}
\label{app:data-processing}
We curate training data from the Cambridge Structural Database (CSD, including releases up to May 1, 2025) under the following criteria: (i) $3\text{D}$ coordinates are present, (ii) the conventional $R\text{-factor} < 9\%$, (iii) the structure is derived from single crystal diffraction at ambient pressure, (iv) the structure is not polymeric, (v) the structure can be sanitized by RDKit with no missing heavy atoms, (vi) there is a known space group, and (vii) at most 250 non-hydrogen atoms are present per unit cell. To avoid leakage, we exclude any entry belonging to a test crystal family and any entry containing a molecular component that appears in test sets. To avoid oversampling certain clusters within each crystal family, near-duplicate polymorphs are collapsed using $\mathrm{RMSD}_{15}\le 0.25$ \AA, retaining the entry with the lowest $R$. Our final training dataset contains 594,202 crystals in total. 

For data preparation, crystallographic disorder is resolved by selecting the disorder group with the highest occupancy. We remove hydrogens for training and extract kekulized bonds from crystallographic metadata. Prior to building supercells, molecules of the crystals are centered to lie within the unit cell, and the unit cells are transformed to their unique Niggli-reduced forms. Supercells are constructed by tile translation $\mathbf{T}_{ijk} = i \mathbf{a}+j \mathbf{b}+k \mathbf{c}$ for $i,j,k \in {-1, 0, 1}$ such that molecules with centroids inside the supercell boundary are included.

\subsection{Additional Model Details}
\label{app:model-details}

\begin{figure}[t!]  %
    \captionsetup[subfigure]{labelformat=simple, labelsep=none,
        justification=raggedright, singlelinecheck=false, position=top}
    \renewcommand\thesubfigure{(\alph{subfigure})}
    \vspace{-10pt}
    \centering
    \begin{subfigure}[t]{0.49\textwidth}
        \caption{}\vspace{-7pt}
        \centering
        \includegraphics[width=0.8\linewidth]{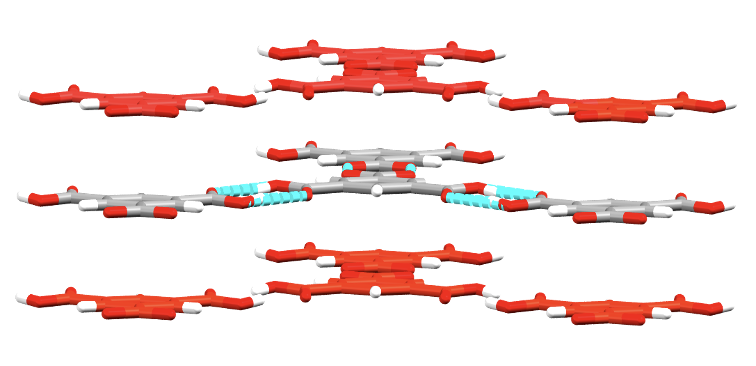}
        \label{fig:knn}
    \end{subfigure}%
    \hspace{4pt}%
    \begin{subfigure}[t]{0.49\textwidth}
        \caption{}\vspace{-7pt}
        \centering
        \includegraphics[width=0.8\linewidth]{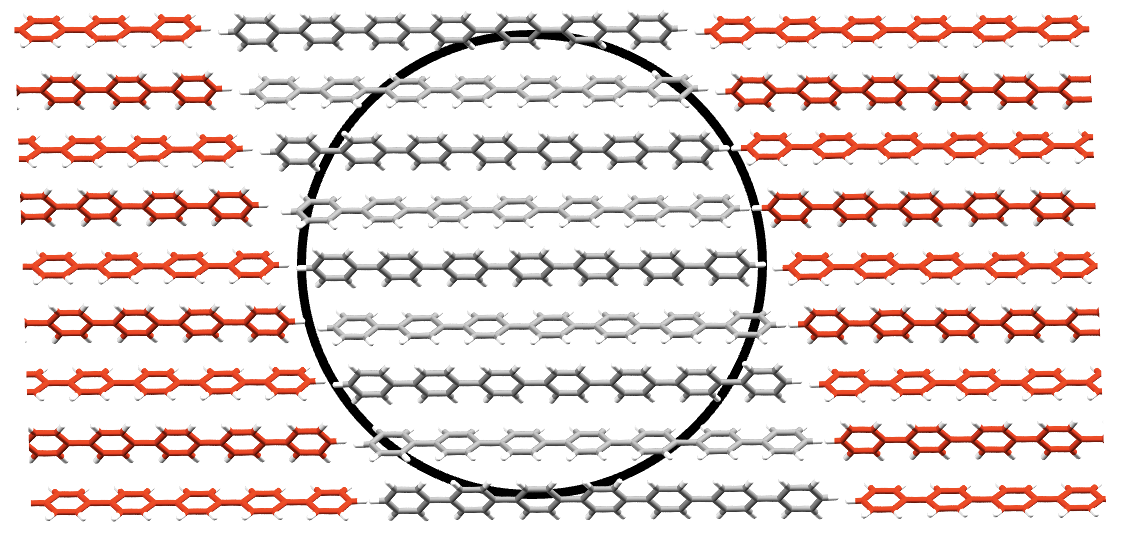} %
        \label{fig:radius}
    \end{subfigure}%

    \caption{(a) $k$NN cropping as used in AlphaFold3 captures only the closest interactions (e.g. hydrogen bonds in trimesic acid, highlighted in blue) but does not capture more distant interactions that are equally crucial for crystallization (e.g. $\pi$-$\pi$ stacking in trimesic acid, highlighted in red). (b) centroid-based approaches will create anisotropic crops in elongated molecules, for example only capturing a 1-dimensional column (grey) in the ellipsoid \textit{p}-quinquephenyl, while missing peripheral interactions (red).}
    \label{fig:app:knn-radius}

    \captionsetup[subfigure]{labelformat=parens, labelsep=colon, position=bottom}
\end{figure}

\subsubsection{\shellsamplong (\shellsampshort)}
\label{app:s4}
We provide the full algorithm for \shellsampshort cropping in \Cref{alg:s4}.  
Recall that for a crystal $\mathcal{C}$, $\mathcal{M}$ denotes the set of molecules within the crystal, $\mathbf{A}$ denotes the tokenized atom array for all molecules $m \in \mathcal{M}$, and $X \subset \mathbb{R}^3$ denotes the Cartesian coordinates of each atom $a \in \mathbf{A}$. Note that for practical purposes, we assume $\mathcal{M}$ has a finite size. Additionally, $\mathcal{A}$ is the crystal's minimal \emph{asymmetric unit}, and $d$ is a pre-computed intermolecular distance matrix, where $d(i,j) = \min_{x_i \in X(m_i), x_j \in X(m_j)} || x_i - x_j || _2$.

Given a specified shell radius $r_{\text{cut}}$, we first sample a central molecule $m_c \sim \text{Uniform}(\mathcal{A})$. Then, we assign all other molecules $m_i \in \mathcal{M} \setminus m_c$
to a shell layer, as defined by $r_{\text{cut}}$. Note that each molecule $m_i$ can only belong to one shell layer. Next, with probability $(1-p_{\max})$, we randomly sample how many shells to keep.  We then add complete shells of molecules to our selected set until the maximum token budget $T_{\max}$ is reached.  If no complete shell can fit within the token budget, we adaptively sample molecules in the first shell according to \cref{alg:stoic-samp} in order to preserve the stoichiometric ratios of the molecules in $\mathcal{A}$. 

\textbf{Implementation details.} In practice, we begin by first considering a $3 \times 3 \times 3$  crystal supercell $\mathcal{C}^{(U)}$, where $U = 3I_3$.  For each input crystal, we sample a molecule $m_c$ from the asymmetric unit of the central unit cell. After analysing the distribution of intermolecular distances for all crystals in the training set, we select $r_{\text{cut}} = 4.5$ Å (see \Cref{fig:app:distances} and \S \ref{app:shell_radius_ablation}). Furthermore, in order to encourage the selection of larger $\mathbf{A}_{\text{crop}}$ blocks, we set $p_{\max} = 0.8$ and $T_{\max}=640$.

\begin{figure}[h]  %
    \centering
    \includegraphics[width=0.7\textwidth]{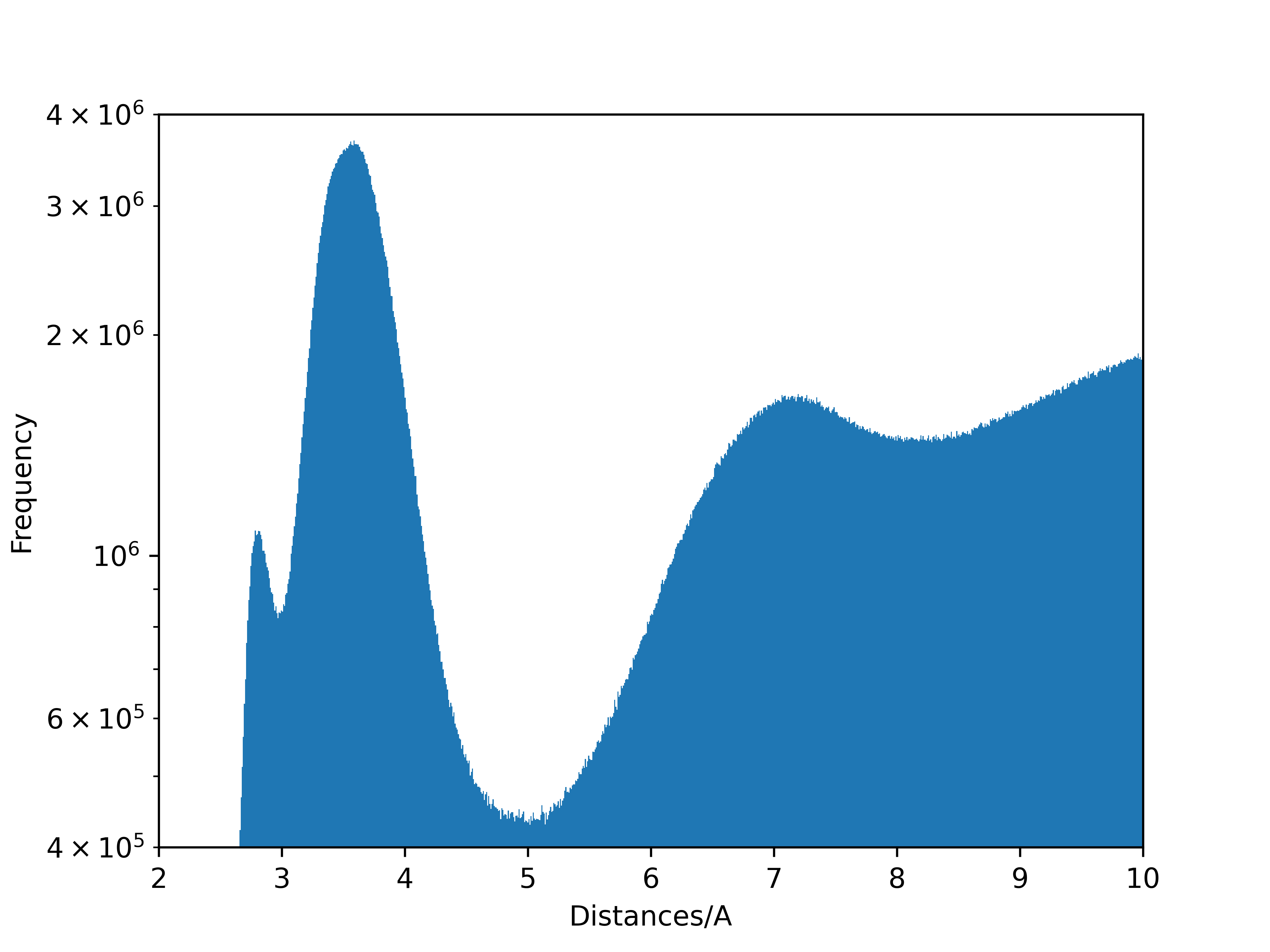}
    \caption{Truncated distribution of intermolecular distances in the processed training dataset.}
    \label{fig:app:distances}
\end{figure}

\begin{algorithm}[H]
    \caption{\shellsamplong}\label{alg:s4}
    \begin{algorithmic}[1]
        \State \textbf{Input:} $\mathcal{M}$, $\mathbf{A}$, $\mathcal{A}$, $d$, $r_{\text{cut}}$, $T_{\text{max}}$, $p_{\text{max}}$
        \State $m_c \sim \text{Uniform}(\mathcal{A})$ \Comment{Sample central molecule}
        \State $\mathcal{M} \gets \mathcal{M} \setminus \{ m_c \}$
        \State Initialize $\mathcal{S} = \emptyset$, $k \gets 1$
        \While{$\mathcal{M} \neq \emptyset$} \Comment{Compute shell layers around $m_c$}
            \State $S_k \gets \{ m_i \in \mathcal{M} : d(m_c,m_i) \leq k \cdot r_{\text{cut}}\}$
            \State $\mathcal{S} \gets \mathcal{S} \cup S_k$, $\mathcal{M} \gets \mathcal{M} \setminus S_k$, $k \gets k + 1$
        \EndWhile \\
        \State $b_{\text{max}} \sim \text{Bernoulli}(p_{\max})$ \Comment{Sample maximum number of shells $S_{k_{\max}}$ to keep}
        \If{$b_{\max} =\text{False}$}
            \State $k_{\max} \sim \text{Uniform}\{ 1, \dots, k-1 \}$
            \State $\mathcal{S} \gets \{ S_1,\dots, S_{k_{\max}} \}$ 
        \EndIf \\

        \State Initialize $\mathbf{A}_{\text{crop}} \gets \mathbf{A}(m_c)$, $i \gets 1$, 

        \While{$|\mathbf{A}_{\text{crop}}| \leq T_{\max}$} \Comment{Add full integer shells within token budget $T_{\max}$}
            \If{$|\mathbf{A}_{\text{crop}}| + |\mathbf{A}(S_i)| > T_{\max}$}
                \State \textbf{break}
            \EndIf
                \State $\mathbf{A}_{\text{crop}} \gets \mathbf{A}_{\text{crop}} \cup \mathbf{A}(S_i)$, $i \gets i + 1$
            \EndWhile \\

        \If{$i = 1$} \Comment{If no full shell fits, sample molecules according to stoichiometry}
            \State $\mathbf{A}_{\text{crop}} \gets $ \textsc{Adaptive Stoichiometric Sampling}($\mathbf{A}$, $m_c$, $\mathcal{A}$, $T_{\text{max}}$, $S_1$)
        \EndIf 

        \State \Return $\mathbf{A}_{\text{crop}}$
    \end{algorithmic}
\end{algorithm}

\begin{algorithm}[H]
    \caption{\textsc{Adaptive Stoichiometric Sampling}} \label{alg:stoic-samp}
    \begin{algorithmic}[1]
        \State \textbf{Input:} $\mathbf{A}$, $m_c$, $\mathcal{A}$, $T_{\text{max}}$, $S$
        \State Calculate proportion of each molecule type $p_t = \frac{|\{ m_i \in \mathcal{A} : \text{type}(m_i) = t\}|}{|\mathcal{A}|}$
        \State Initialize ideal targets: $R_t \gets \text{round}(p_t \cdot |S|)$ for each type $t$
        \State $\mathbf{A}_{\text{crop}} \gets \mathbf{A}(m_c)$, $R_{\text{type}(m_c)} \gets R_{\text{type}(m_c)} -1$ \Comment{$m_c$ is already pre-selected} \\
        
        \While{$|\mathbf{A}_{\text{crop}}| < T_{\max}$}
            \State Bucket remaining molecules by type: $B_t = \{m \in S : \text{type}(m) = t\}$
            
            \State Compute adaptive weights for each type:
            \ForAll{$t \in T$ with $B_t \neq \emptyset$}
                \State $w_t \gets R_t / |B_t|$
            \EndFor
            \State Normalize weights to probabilities: $P'_t = w_t / \sum_{t} w_{t}$
            
            \State Select type $t'$ randomly according to probabilities $P'_t$
            \State Sample $m' \sim \text{Uniform}(B_{t'})$
    
            \If{$|\mathbf{A}_{\text{crop}}| + |\mathbf{A}(m')| > T_{\max}$}
                \State \textbf{break} \Comment{Stop if adding this molecule exceeds token budget}
            \EndIf
            
            \State $\mathbf{A}_{\text{crop}} \gets \mathbf{A}_{\text{crop}} \cup \mathbf{A}(m')$
            \State $S \gets S \setminus \{ m'\}, B_{t'} \gets B_{t'} \setminus \{ m'\}$, $R_{t'} \gets R_{t'} - 1$
        \EndWhile
        \State \Return $\mathbf{A}_{\text{crop}}$
    \end{algorithmic}
\end{algorithm}

\subsubsection{Architecture}

Our model adopts the AlphaFold3 (\textsc{AF3}) architecture via the public Protenix implementation \citep{bytedance2025protenix}. We adopt Algorithm 5 from \citet{af3} to embed the reference conformer into the atom encoder. Atom-level tokens with relative position and entity encodings are processed by an AF3-style \emph{Pairformer} trunk with recycling to produce single and pair representations; the multiple sequence alignment (MSA) pathways are disabled. These representations condition a structure-denoising head consisting of an atom-attention encoder, a transformer-based diffusion module, and an atom-attention decoder that outputs 3D coordinates. %

\subsubsection{Losses}
\label{app:losses}

\looseness=-1
We use a combination of losses to encourage accurate structure prediction on both the global and local scales. Here we enumerate what the different losses are, how they are formulated, what they are useful for and how they lead to a correct diffusion model.

\looseness=-1
\xhdr{SmoothLDDTLoss} The Smooth local distance difference test (SmoothLDDT) loss is a smooth form of an existing LDDT loss~\citep{mariani2013lddt}. The LDDT measures how well two structures align over all pairs of atoms at a distance closer than some predefined threshold. For us, this is $R_0=15Å$. These pairs of atoms form a set of local distances $L$. The LDDT score is the average of fractions at four distance thresholds: [0.5, 1.0, 2.0, 4.0] Angstroms. The Smooth LDDT test, instead of using a binary test of whether or not the local distances are on the same side of the threshold, uses a sigmoid function. Let
\begin{align}
    L &= \| x - x^T\| \\
    L^{GT} &= \|x^{GT} - (x^{GT})^T\| \\
    \delta &= \text{abs}(L - L^{GT}) \\
    \epsilon &= \frac{1}{4} \left [ \text{sigmoid}(\frac{1}{2} - \delta) + \text{sigmoid}(1 - \delta) + \text{sigmoid}(2 - \delta) + \text{sigmoid}(4 - \delta) \right ] \\
    \text{mask} &= \delta < 15
\end{align}
\looseness=-1
Then the $\textsc{SmoothLDDT}$ between two molecules of size $d$ is:
\begin{equation}
    \textsc{SmoothLDDT}(x, x^{GT}) = \sum (\epsilon \cdot \text{mask}) / (d (d - 1))
\end{equation}

Our SmoothLDDT loss is then equal to
\begin{equation}
    \mathcal{L}_{\text{sLDDT}}(x, x^{GT}) = \textsc{SmoothLDDT}(x, \textsc{align}(x^{\text{GT}}, x))
\end{equation}
where $\textsc{align}(a,b)$ performs an optimal rigid alignment of $a$ to $b$ in 3D.

\subsubsection{Model Hyperparameters}
\label{model-hp}

\looseness=-1
\xhdr{Training}
Our training implementation is based off of the open-source Protenix model \citep{bytedance2025protenix}. We disabled all MSA components. For training, we use an Adam optimizer with $\beta_1=0.9$, $\beta_2=0.95$, a learning rate of 0.0018, and a weight decay of 1e-8. The learning rate is additionally set to a scheduler with 1,000 warmup steps and a decay factor of 0.95 for every 50,000 steps. We report results for our model trained at 110,000 steps. 

The atom encoder has 3 blocks with 4 heads. The atom embedding size is 128, whereas the pair embedding is size is 16. In terms of the main Pairformer trunk, we use 4 Pairformer blocks, each with 16 attention heads and a dropout rate of 0.25. The hidden dimension for the pair representation is 128, and the hidden dimension for the single representations are 384. Furthermore, \textcolor{black}{we do} 2 rounds of recycling. For the diffusion block, we use a diffusion batch size of 32 and a chunk size of 1. The actual diffusion transformer has 12 transformer blocks, with 8 attention heads. Crop size (and therefore token size for the diffusion transformer) is set to 640, with a 50/50 ratio of \shellsampshort and $k$NN cropping for training. All components are set to seed 42. 

\looseness=-1
\xhdr{Inference}
At inference time, we use the following hyperparameters for our diffusion sampling: $\gamma_0 = 0.8$, noise scale lambda = 1.003, steps = 200, and step scale eta = 1.5. When generating samples, we generate 1 sample from 30 different seeds for each target crystal structure. Seeds are set from 0 to 29. %

\section{Experimental Details}
\label{app:exp-details}

\subsection{Selection of Performance Metrics}
\label{app:exp-metrics}
The four metrics in the main text jointly probe physical plausibility (Col\textsubscript{$S$}), rough packing similarity (Pac\textsubscript{$S$}, Pac\textsubscript{$C$}), intramolecular fidelity (Rec\textsubscript{$S$}, Rec\textsubscript{$C$}), and a holistic solve indicator ($\widetilde{\text{Sol}}_C$). Reporting both sample-level and crystal-level aggregates is essential: the former characterizes the distributional quality of all proposals, while the latter answers whether a target was recovered at least once. Concretely, crystal-level rates are an OR-aggregation over a target’s samples. Evaluating both crystal and sample level metrics together helps diagnose failure modes such as mode collapse (high Pac\textsubscript{$C$} with low Pac\textsubscript{$S$}) and low recall (the reverse).

\looseness=-1
\xhdr{Thresholds and interpretability}
We adopt community conventions for RMSD\(_k\) \citep{nessler2022progressive}:
\begin{itemize}[leftmargin=*, itemsep=2pt]
  \item \(\mathrm{RMSD}_{15}<1.0\,\text{\AA}\) typically indicates the predicted packing reproduces the experimental match and lies in the exact energy basin as the experimental structure. This metric is the one used by the 5th CSP blind test \citep{cspcomp5}.
  \item \(1.0{-}2.0\,\text{\AA}\) usually indicates the prediction shares the correct topology with mild lattice strain or small reorientations. If the H-bond graph, Z/Z', density, and PXRD are consistent, this is very likely recoverable to $<1$ \AA\,with a brief local relaxation. Otherwise, the structure is often structurally related to the ground truth (i.e. in or near the correct packing motif/topology or space group but with slip, molecular misorientation, or a cell mismatch). It demonstrates that the method can correctly identify the neighborhood of the global energy minimum, even if it can't pinpoint the exact minimum itself.
  \item \(2.0{-}3.0\,\text{\AA}\) typically indicate a significant mismatch. At this level of deviation, key intermolecular interactions, like hydrogen bonds, may be incorrect. The overall packing symmetry might also be different. However, the prediction might still capture a general feature of the packing (e.g., identifying a layered structure vs. a herringbone packing). 
  \item above \(3.0\,\text{\AA}\), these values are generally not considered useful. The deviation is so large that the predicted structure is almost certainly in a completely different and incorrect energy basin. Any similarity to the experimental structure is likely coincidental. 
\end{itemize}
Our $\widetilde{\text{Sol}}_C$ (collision-free, packing-similar, \(\mathrm{RMSD}_{15}<2\,\text{\AA}\)) is a thus a strict, early-stage indicator: not “solved,” but reliably near-correct for downstream relaxation and re-ranking. 

\xhdr{Packing matching for non-periodic predictions}
Our generators output finite blocks without periodic conditions. For large inference blocks, it is possible to identify translation vectors that map the finite cluster onto itself (e.g., by correlating molecular centroids or local environments), least-squares fit three independent vectors to define a primitive lattice, and convert atoms to fractional coordinates. Nevertheless, CSD \emph{COMPACK} can be directly used for robust alignment of a block against a crystal structure while preserving standard CSP rigor. First, we employ standard practices of avoiding hydrogen-atom alignment and allowing mismatches in connectivity annotations (e.g., bond orders). In the absence of periodic images, greedy pruning can miss valid correspondences; therefore for all methods (including DFT baselines) we use a search time of 10 seconds with a distance threshold of 50\% and an angle threshold of 75 degrees. This enables reliable COMPACK outputs on non-periodic inputs \emph{without diluting the criterion}: acceptance still hinges on multi-molecule superposition (15-molecule cluster, \(\ge 8\) matched) and the same RMSD\(_k\) thresholds used in periodic CSP benchmarks. \textcolor{black}{Lastly, we note that it is possible to deduce the underlying Bravais lattice of a large periodic point cloud by analyzing the set of interatomic difference vectors (a Patterson analysis): in the resulting Patterson function, lattice translation vectors appear as a high-multiplicity, regularly spaced subset of the interatomic vectors, from which the lattice parameters (and hence a primitive cell) can be recovered.}

\subsection{Molecular CSP on Rigid and Flexible Datasets}
\label{app:ml-datasets}

We construct our rigid and flexible test datasets using crystals from the Cambridge Crystal Structure Database (CCDC), following the same processing procedure outlined in \S \ref{app:data-processing}. Both rigid and flexible datasets are comprised of 50 molecular crystals. These are generally crystals with practical relevance or newly-released crystals. We generally define rigid molecules to contain 0-3 rotatable bonds, and flexible molecules to contain more than that. We note there is a small nuance in flexibility, as we refine it in the molecular context (by ring restriction or number of known polymorphs, etc.), this means crystals such as rubrene (QQQCIG), tetraphenylporphryin (TPHPOR), CL-20 (PUBMUU) are defined as ``rigid;" ROY (QAXMEH), galunisertib (DORDUM), sulfathizaole (SUTHAZ), and flufenamic acid (FPAMCA) are defined as ``flexible." The exact CSD identifiers are provided below for reference.

Molecules in the rigid dataset: XULDUD, GUFJOG, QAMTAZ, BOQQUT, BOQWIN, UJIRIO, PAHYON, XATMIP, HAMTIZ, XATMOV, AXOSOW, SOXLEX, WIDBAO, HXACAN, ACSALA, PUBMUU, GLYCIN, TEPNIT, QQQCIG, ZZZMUC, FOYNEO, ANTCEN, URACIL, BTCOAC, CORONE, GUJTOX, TPHPOR, ACETAC, IMAZOL, CEBYUD, CILJIQ, CUMJOJ, DEZDUH, DOHFEM, GACGAU, GOLHIB, HURYUQ, IHEPUG, LECZOL, NICOAM, ROHBUL, UMIMIO, WEXREY, CAPRYL, ACACPD, BPYRUF, JIVNOV, MUBXAM, VUJBUB, NAVZAO.

Molecules in the flexible dataset: QAXMEH, DORDUM, BOTHUR, MOVZUW, SUTHAZ, TPPRHC, FPAMCA, INDMET, MELFIT, YIGPIO, TEHZIP, DMANTL, YIGDUP, UWEQUL, COWZIA, FUVLAN, RUTJUP, MUYRIL, QUTZUE, MURQUP, CUWLIT, QURPUS, DUZTOL, VUZQUG, MUZKOL, YUYLAJ, BUYWUR, HABMUX, QURBEO, HUTDOT, CUYGEM, MUSTIH, MUSXUX, DABHIC, FACDOH, RUWFAU, CAFHUR, HUWPUO, CAFHEB, HUTKOA, DUVCOQ, QUXHAW, CABFIZ, QUSQAA, MUSYAE, CUYVUR, JABXIY, RUSXEM, CUYWAY, YABFAN.

\looseness=-1
\subsection{\textit{Ab initio} Machine Learning Baselines}
We compare \namelong{} against three \emph{ab initio} machine learning baselines: (i) a lightweight transformer trained with flow matching on our dataset (\S\ref{app:simpTrans}), (ii) the publicly released \textsc{AssembleFlow-Atom} model \citep{guo2025assembleflow}, and (iii) \textsc{AlphaFold3} \citep{af3} used as a generative baseline. Our complete table of results are reported in \Cref{tab:app-ml-results} on both rigid and flexible molecular CSP benchmarks using $n=30$ samples per crystal target.

\subsubsection{A-Transformer (ours)}
\label{app:simpTrans}

\textbf{Model.}  
A-Transformer is a lightweight transformer encoder operating on atom tokens with unit-cell features. The model is a PyTorch \texttt{nn.TransformerEncoder} with hidden size $d_h=512$, $L=13$ layers, and $H=4$ heads. Each atom embedding includes element and charge information, time $t \in (0,1]$, molecular fingerprints, and, when enabled, unit-cell lengths $(a,b,c)$ and angles $(\alpha,\beta,\gamma)$. Two output heads are used: a coordinate head ($\mathbb{R}^3$ per token) and, optionally, a rotation head (unit quaternion, disabled in our main runs).

\textbf{Training.}  
The objective is rigid-cluster translation flow matching in $\mathbb{R}^3$. We linearly interpolate between ground-truth translations $s_0$ and random in-cell starts $s_1$,  
\[
s_t = (1-t)\,s_0 + t\,s_1, \quad t \sim \mathrm{Unif}[t_{\min},1],
\]
and predict denoised coordinates $\hat{x}_0$. The loss is an $x_0$ regression term:  
\[
\mathcal{L} = \lambda_{\text{trans}} \sum_i \|x^{(i)}_0 - \hat{x}^{(i)}_0\|^2 .
\]

\textbf{Inference and Evaluation.}  
At inference we integrate from $t{=}1$ to $t_{\min}$ with Euler steps, producing clusters in a P1 box. This baseline is deliberately minimal: it ignores rotation flow matching and lattice prediction, relying solely on translation flow matching and unit-cell features. It provides a capacity-matched reference against which to measure the benefits of \namelong{}’s architecture and training.  

\subsubsection{AssembleFlow}
\label{app:assembleflow}

We evaluate the newly released \textsc{AssembleFlow-Atom} model \citep{guo2025assembleflow} using the authors’ checkpoints without re-training or fine-tuning. For each molecule we generate clusters of $15$ rigid copies from an RDKit ETKDG conformer, applying random rotations and translations with uniform offsets $[-S,S]^3$, where $S \in \{10,15,20\}$~\AA. Three seeds $\{42,7,2024\}$ and seven checkpoints yield $63$ runs per molecule. To standardize evaluation, we randomly sample 30 outputs per crystal and compute metrics on those. AssembleFlow enforces rigid-molecule assembly and provides a strong baseline for rigid-packing quality, but is unable to handle flexible molecules, as reflected in the results presented in \Cref{tab:app-ml-results}.

\subsubsection{AlphaFold3}
\label{app:alphafold3}

We additionally evaluate \textsc{AlphaFold3} as a structural generative baseline for the Rigid molecule dataset. For each target molecule, we generate conformations using default protein–ligand generative settings, treating each conformer as a candidate crystal packing. Inference is performed based on 30 counts of the molecule, and thirty samples per target are evaluated in the same way as for other baselines. While \textsc{AlphaFold3} was not designed for crystal structure prediction, including it highlights the gap between general-purpose biomolecular structure generators and CSP-specific models.

\begin{table}[t!] 
\captionof{table}{Performance of \emph{ab-initio} machine learning models, including AlphaFold3, on the Rigid molecular CSP dataset. For each model, results are calculated using $n=30$ samples for each crystal target.}
\label{tab:app-ml-results}
\centering
\begin{tabular}{lcccccc}
\toprule
Model & Col$_S \downarrow$ & Pac$_S \uparrow$ & Pac$_C \uparrow$ & Rec$_S \uparrow$ & Rec$_C \uparrow$ & $\widetilde{\text{Sol}}_C \uparrow$ \\
\midrule
A-Transformer    &  0.731&  0.015& 0.060&  0.033& 0.120& 0.060\\
AssembleFlow     & 0.524 & 0.001 & 0.040 & 0.211 & 0.760 & 0 \\
AlphaFold3       & 0.114& 0.089& 0.340& 0.133& 0.480& 0\\
\namelong   & 0.011& 0.873& 1.000& 0.737& 0.960& 0.300\\

\bottomrule
\end{tabular}
\end{table} 

\begin{figure}[t]
  \centering
  \includegraphics[width=\textwidth]{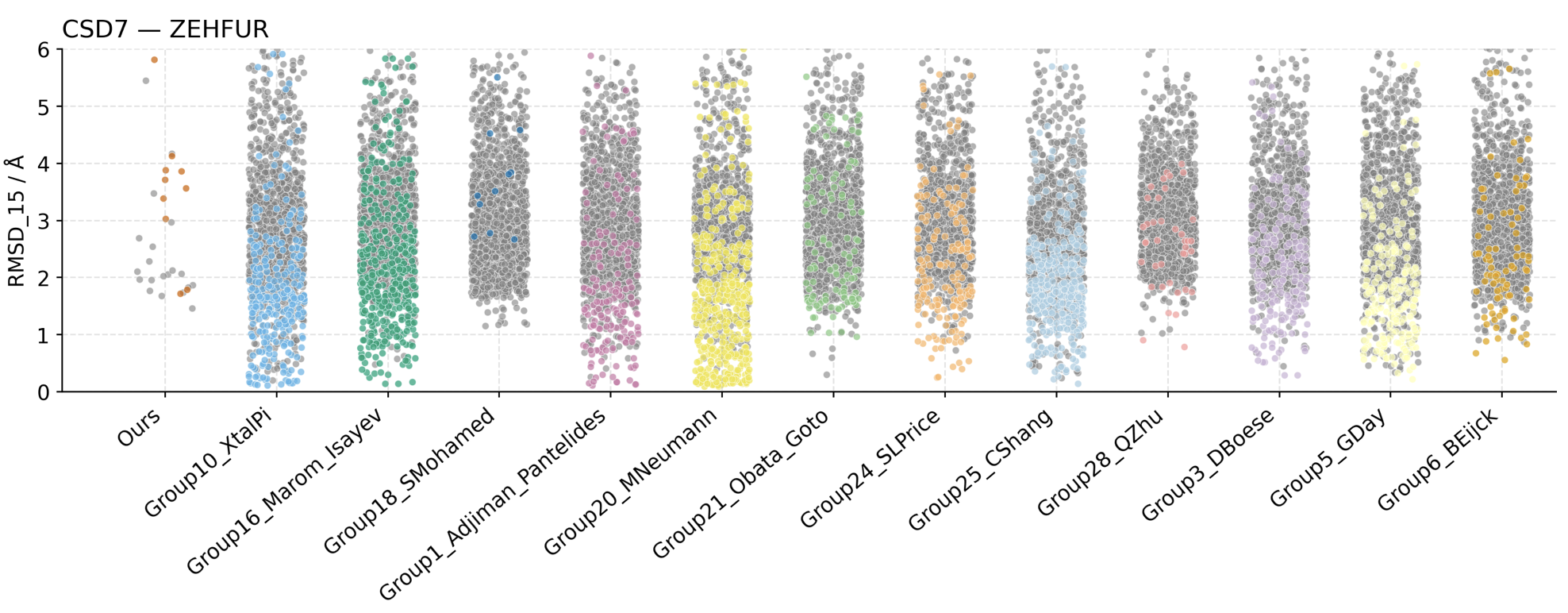}
  \caption{Beeswarm plot of submitted structures for Molecule XXXI (ZEHFUR) from CSD7. In 30 inference samples, \namelong obtains performance comparable to some DFT groups with thousands of submitted samples. Colored points signify a partial packing similarity and gray points signify packing mismatch (e.g. 3 out of 15 molecules match with low RMSD, i.e. overall packing is significantly different).}
  \label{fig:beeswarm}
\end{figure}

\subsection{CCDC CSP Blind Test Details}
\label{app:csd-comp}
To contextualize our work, we provide a summary of the 5th, 6th, and 7th CCDC CSP blind tests. These community-wide challenges have benchmarked the state-of-the-art in predicting the crystal structures of organic molecules from their chemical graphs alone, primarily relying on density functional theory (DFT). Here, we provide brief descriptions of these tests, and refer the audience to ~\cite{cspcomp5,cspcomp6,cspcomp7} for details. Performance is assessed using the COMPACK algorithm, which quantifies structural similarity through root mean square deviation (RMSD) of molecular clusters.

In the first four CSP blind tests, the field evolved from force-field landscapes to the first widespread use of periodic dispersion-corrected DFT (DFT-D), which proved decisive in 4th blind test and set the stage for DFT to become the de facto standard for final lattice-energy evaluation. The 5th blind test \citep{cspcomp5} marked a significant increase in the complexity of the target molecules. The six targets included rigid molecules, semi-flexible molecules, a 1:1 salt, a highly flexible pharmaceutical-like compound, and two co-crystals. This test highlighted what has become a standard workflow in CSP: broad structure generation (e.g., grid or quasi-random sampling, parallel tempering), followed by local minimization and hierarchical re-ranking. While at least one successful prediction was submitted for each target, the success rate was lower than in previous tests, underscoring the increased difficulty. DFT-D demonstrated its reliability for discriminating between competing structures for small and moderately flexible molecules, but handling high flexibility and complex solid forms remained a major challenge. A total of 15 groups submitted structures for this blind test, although the report contains only 14 groups.

The 6th blind test \citep{cspcomp6} continued with five challenging targets: a small, nearly rigid molecule; a polymorphic former drug candidate; a chloride salt hydrate; a co-crystal; and a large, flexible molecule. This test solidifed the “search–optimize–rank” pipeline. On the search side, more than half of the methods allowed intramolecular flexibility during exploration, and many adopted hierarchical filtering starting with generating conformer and packing, followed by progressively tighter optimization and pruning. In optimization and ranking, dispersion-corrected periodic DFT became mainstream, with van der Waals (vdW) models (e.g., D3/D3(BJ), MBD) and multipole-based electrostatics or SAPT-derived potentials used to refine close competitors. All experimental structures were predicted by at least one submission except a potentially disordered Z' = 2 polymorph. The results demonstrated that while DFT-D provided reliable baseline energetics, accurate treatment of conformational flexibility remained the primary bottleneck. A total of 25 groups participated in this blind test.

The 7th blind test \citep{cspcomp7} introduced a two-phase structure to separate structure generation from ranking challenges. The seven targets featured a silicon-iodine containing molecule, a copper coordination complex, a near-rigid molecule, a co-crystal, a polymorphic small agrochemical, a highly flexible polymorphic drug candidate, and a polymorphic morpholine salt. The test also featured one of the most challenging systems in the history of the blind tests: a large, highly polymorphic pharmaceutical drug candidate. A key finding from the structure generation phase was that while different search methods often identified overlapping sets of low-energy structures, their success rates tended to decrease as the complexity of the target molecule increased. For ranking, periodic dispersion-corrected GGA DFT methods produced results in excellent agreement with experiments across most targets, whereas higher-level corrections, full free-energy treatments, and ML on DFT potentials were less decisive for these specific systems than expected. Additionally, dynamic and static disorder (and high Z') remained difficult. A total of 22 groups participated in the structure generation phase of this blind test. 

\begin{figure}[t!]
    \centering
    \begin{minipage}[b]{0.23\textwidth}
        \centering
        \includegraphics[width=\linewidth]{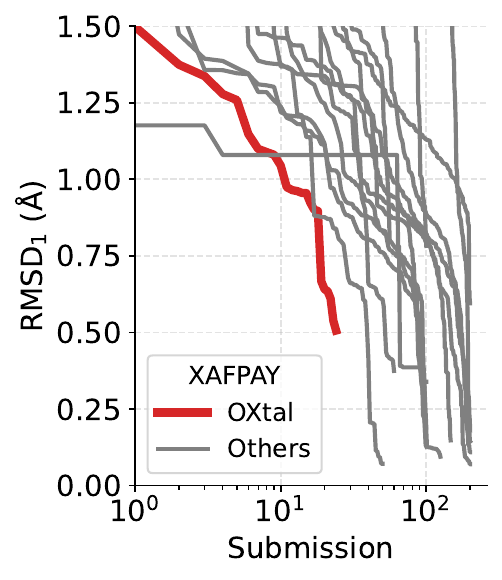}
    \end{minipage}%
    \begin{minipage}[b]{0.23\textwidth}
        \centering
        \includegraphics[width=\linewidth]{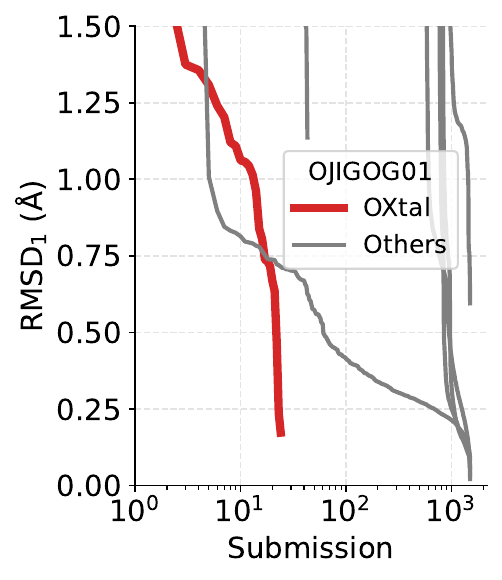}
    \end{minipage}%
    \begin{minipage}[b]{0.23\textwidth}
        \centering
        \includegraphics[width=\linewidth]{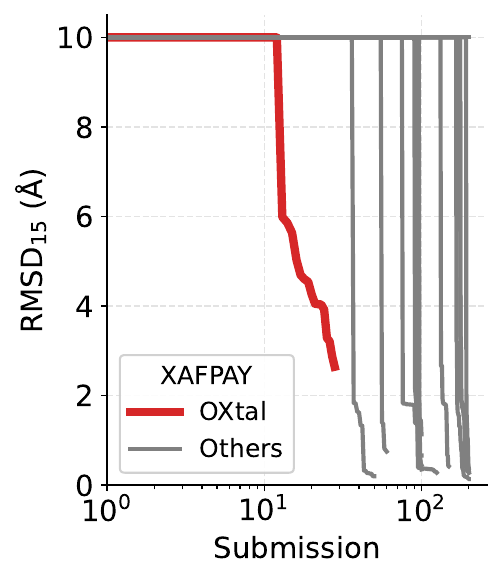}
    \end{minipage}%
    \begin{minipage}[b]{0.23\textwidth}
        \centering
        \includegraphics[width=\linewidth]{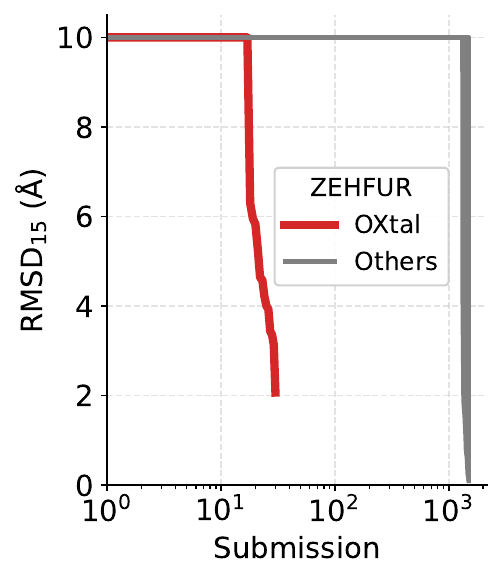}
    \end{minipage}
    \caption{Sample efficiency plots for XAFPAY (blind test 6, a flexible molecule), OJIGOG01 (blind test 7), and ZEHFUR (blind test 7). \namelong is able to quickly recover both molecular conformers (RMSD\textsubscript{1}) and periodicity (RMSD\textsubscript{15}).}
    \label{fig:app-sample-comparison}
\end{figure}

These blind tests shows a clear trend that DFT-based ranking has become reliable, but limitations persist: computational expense, difficulty capturing kinetic effects, challenges with disorder and flexibility and high Z' number, and the fundamental limitation that thermodynamic ranking often fails to predict which polymorphs are experimentally observable. Our approach aims to address these limitations by directly learning both thermodynamic and kinetic regularities from data, with the hope of eventually eliminating the need for extensive search, local optimization, and post-hoc ranking altogether. By generating a small number of high-quality structures directly, we bypass the traditional generate-optimize-rank pipeline that has dominated CSP. Tables \ref{tab:app-comp5-results}, \ref{tab:app-comp6-results}, and \ref{tab:app-comp7-results} in \S\ref{app:massive-tables} show comparisons for all submitted structures by each group (sans ranking)%
The beeswarm plot (\Cref{fig:beeswarm}) of submissions for molecule XXXI further highlights the extensive sampling currently required by DFT methods compared to \namelong. This point is further emphasized in \Cref{fig:app-sample-comparison}, which plots the best RMSD\textsubscript{1/15} achieved at a given $n$ submission size for a few blind test crystals. 

\xhdr{Result computation}
Because each submission group can choose which crystals to attempt, in \S \ref{exp:competition} we report metrics  that are averaged over submission groups. For sample-level metrics, we first compute each rate separately for every group, then take the mean of these per-group rates across all groups. $n_S$ is the average number of samples per group per target. For crystal-level metrics, we average the rates over each group, and across \emph{all} available targets for that competition. We count groups who did not submit any submissions for a crystal as a miss, since they did not show evidence of successfully solving it.  

Note that in blind test 7 \citep{cspcomp7}, Target XXX has two forms with different stoichiometries (MIVZEA and MIVZIE). For evaluation purposes, \textit{ab initio} machine learning methods attempted 30 structures for each stoichiometry. As such, per-sample metrics for competition 7 are computed with these additional samples, and we consider 8 total targets for per-crystal metrics.

\subsubsection{Inference time cost}
\label{app:inf-time-cost}
We compute computational costs by multiplying the reported wall–clock compute time by an AWS on–demand unit price with hours taken directly from the three blind tests references~\citep{cspcomp5,cspcomp6,cspcomp7}. Because the papers report heterogeneous processors and often normalize times to $\sim$3.0\,GHz core–hours, we treat one normalized CPU hour as one AWS vCPU–hour; we then price CPU time at the c5.large on–demand rate in Sept.~2025, i.e., \$0.085 per \emph{2} vCPUs $\Rightarrow$ \$0.0425 per vCPU–hour, so $\text{CPU cost}=\text{hours}\times\$0.0425$. For L40S GPU runs, we map directly to \texttt{g6e.xlarge} instances (which contains one NVIDIA L40S) and are priced at \$1.861 per instance–hour, so $\text{GPU cost}=\text{hours}\times\$1.861$. We apply the rates pro–rata without further rounding. All prices are on–demand EC2 instances as of September~2025. For reference, exact reported times used are presented in Tables \ref{app:tabl:csp5-times}, \ref{app:tabl:csp6-times}, and \ref{app:tabl:csp7-times}.

\color{black}{
\section{Model Ablation Studies}
Here, we provide a series of different model ablation studies. Due to the high computational demand of training \namelong, all results in this section are reported at 50k training steps instead of the 110k reported in the main body of the paper. Following our predefined evaluation protocol, we evaluate $n_S = 30$ generated samples with 30 molecular copies for each crystal target.

\subsection{Cropping Method Ablation}
\label{app:cropping_ablations}
In order to isolate the performance gain from \shellsampshort, we directly compare it against other standard cropping methods. Overall, \shellsampshort performs the best across all metrics and datasets with centroid radius performing the second best, indicating that there may not be many oblong molecules in the evaluation datasets. However, these results support the benefits of using \shellsampshort, since it is not subject to the same pitfalls of $k$NN and centroid radius cropping as illustrated in \Cref{fig:app:knn-radius}.

Also, note that our implementation of centroid radius cropping is more generous than the standard one used in the literature \citep{af3}. Specifically, naive radius cropping selects any atom that lies within a pre-defined spatial radius. However, this often results in partial molecules being cropped, which naturally leads to lower performance. Instead, we consider all atoms in a given molecule if its \textit{centroid} lies within 15 Å of a randomly sampled central molecule, and only consider complete molecules up to our token budget, which mitigates the issue of fragmented molecules during training. $k$NN is also implemented such that it only considers complete molecules up to the token budget, ordered by neighbor distance from a randomly sampled central molecule.
}

\begin{table}[ht!]
\centering
{\color{black}
\captionsetup{labelfont={color=black},textfont={color=black}}
\caption{Results for different cropping methods with best results \textbf{bolded} and second best \underline{underlined}.}
\label{tab:ablation-crop-method}
\begin{tabular}{l|lcccccc}
\toprule
Crop Method & Dataset & Col$_S \downarrow$ & Pac$_S \uparrow$ & Pac$_C \uparrow$ & Rec$_S \uparrow$ & Rec$_C \uparrow$ & $\widetilde{\text{Sol}}_C \uparrow$ \\
\midrule

\multirow{5}{*}{$k$NN}
 & Rigid    & 0.086 & 0.631 & 0.920 & 0.520 & 0.780 & 0.220 \\
 & Flexible & 0.444& 0.123& 0.700& 0& 0& 0.020\\
 & CSP5        & 0.085 & 0.470& 0.667 & \underline{0.433}& 0.833 & 0 \\
 & CSP6        & \underline{0.132} & \underline{0.353}& 0.800 & 0.007 & 0.200 & \textbf{0.200}\\
 & CSP7        & 0.312 & 0.165 & \textbf{0.750}& 0.005 & 0.125& 0 \\
\midrule

\multirow{5}{*}{Centroid Radius}
 & Rigid    & \underline{0.040} & \underline{0.669}& \textbf{0.980}& \underline{0.591}& \textbf{0.940}& \textbf{0.340}\\
 & Flexible & \underline{0.223}& \underline{0.184}& \underline{0.720}& \textbf{0.033}& \underline{0.300}& \underline{0.040}\\
 & CSP5        & \underline{0.054} & 0.470& \textbf{0.833}& 0.429 & 0.833 & 0 \\
 & CSP6        & 0.151 & 0.324 & \textbf{1.000}& \textbf{0.173}& \underline{0.400}& 0 \\
 & CSP7        & \underline{0.104} & \textbf{0.225}& \textbf{0.750}& \textbf{0.104}& \underline{0.250}& 0 \\
\midrule

\multirow{5}{*}{Ours (\shellsampshort)}
 & Rigid    & \textbf{0.026} & \textbf{0.688} & \underline{0.940} & \textbf{0.629} & \textbf{0.940} & \underline{0.280} \\
 & Flexible & \textbf{0.143}& \textbf{0.260}& \textbf{0.800}& \underline{0.031}& \textbf{0.360}& \textbf{0.140}\\
 & CSP5        & \textbf{0.000} & \textbf{0.500} & \textbf{0.833} & \textbf{0.478} & \textbf{1.000} & 0 \\
 & CSP6        & \textbf{0.047} & \textbf{0.440} & \textbf{1.000} & \underline{0.107} & \textbf{0.800} & \textbf{0.200} \\
 & CSP7        & \textbf{0.033} & \underline{0.221} & 0.625& \textbf{0.104} & \textbf{0.500}& 0 \\
 
\bottomrule
\end{tabular}
}
\end{table}

\color{black}{
\subsection{\shellsampshort Radius Size Ablation}
\label{app:shell_radius_ablation}
Next, we investigate the sensitivity of \namelong to the choice of \shellsampshort shell radius size. From our results in \Cref{tab:ablation-radius-size}, we see that there are generally minimal changes in performance across three different choices of radius size. Recall that our main \namelong model uses a shell radius size of 4.5, which was selected from analyzing the distribution of intermolecular distances presented in \Cref{fig:app:distances}. From this figure, both 4.5 and 5 Å are natural choices for the first shell, however due to the ``layered" nature of crystallization, we consider a second shell at 9 rather than 10 Å to be more fitting.

Overall, our current implementation of \shellsampshort may slightly favor smaller shell radius sizes due to the overall token budget constraint, which limits the number of full shells we are able to accommodate. Regardless, we see that all three radius settings for \shellsampshort outperform existing cropping methods from \Cref{tab:ablation-crop-method}, suggesting that \namelong is somewhat robust to reasonable choices in \shellsampshort radius size.
}

\begin{table}[h!]
\centering
{\color{black}
\captionsetup{labelfont={color=black},textfont={color=black}}
\caption{Effect of \shellsampshort shell radius size, with best results \textbf{bolded} and second best \underline{underlined}. }
\label{tab:ablation-radius-size}
\begin{tabular}{l|lcccccc}
\toprule
Radius Size & Dataset & Col$_S \downarrow$ & Pac$_S \uparrow$ & Pac$_C \uparrow$ & Rec$_S \uparrow$ & Rec$_C \uparrow$ & $\widetilde{\text{Sol}}_C \uparrow$ \\
\midrule

\multirow{5}{*}{3.5 Å}
 & Rigid    
   & \textbf{0.015} & \textbf{0.694} & \textbf{0.980} & 0.596 & 0.920 & \textbf{0.320} \\
 & Flexible 
   & \textbf{0.109}& \textbf{0.290}& \textbf{0.820}& \textbf{0.049}& \textbf{0.380}& \textbf{0.140}\\
 & CSP5        
   & \underline{0.006} & 0.439 & \underline{0.833} & 0.433 & \underline{0.833} & 0 \\
 & CSP6        
   & \underline{0.068} & \underline{0.409} & \textbf{1.000} & \textbf{0.174} & 0.600 & \textbf{0.400} \\
 & CSP7        
   & \underline{0.095} & \textbf{0.238} & \textbf{0.750}& 0.110 & \textbf{0.500}& 0 \\
\midrule

\multirow{5}{*}{4.5 Å}
 & Rigid    
   & \underline{0.026} & 0.688 & 0.940 & \textbf{0.629} & \underline{0.940} & \underline{0.280} \\
 & Flexible 
   & \underline{0.143}& \underline{0.260}& \underline{0.800}& \underline{0.031}& \underline{0.360}& \textbf{0.140}\\
 & CSP5        
   & \textbf{0.000} & \textbf{0.500} & \underline{0.833} & \textbf{0.478} & \textbf{1.000} & 0 \\
 & CSP6        
   & \textbf{0.047} & \textbf{0.440} & \textbf{1.000} & 0.107 & \textbf{0.800} & 0.200 \\
 & CSP7        
   & \textbf{0.033} & 0.221 & 0.625& 0.104 & \textbf{0.500}& 0 \\
\midrule

\multirow{5}{*}{5.5 Å}
 & Rigid    
   & 0.047 & \underline{0.690} & 0.940 & \underline{0.625} & \textbf{0.960} & 0.220 \\
 & Flexible 
   & 0.216& 0.241& 0.740& 0.021& 0.300& 0.080\\
 & CSP5        
   & 0.012 & \underline{0.470} & 0.667 & \underline{0.439} & 0.667 & 0 \\
 & CSP6        
   & 0.147 & 0.375 & \textbf{1.000} & \underline{0.147} & 0.600 & 0.200 \\
 & CSP7        
   & 0.101 & \underline{0.234} & \textbf{0.750}& \textbf{0.142} & 0.375& 0 \\
\bottomrule
\end{tabular}
}
\end{table}

\subsection{Model Size Ablation}
\label{app:model_size_ablation}
To further investigate the effect of model size, we train a smaller version of \namelong that contains roughly 50M total parameters, which is half the size of the one we report in the main paper. 

\begin{table}[h!]
\centering
{\color{black} %
\captionsetup{labelfont={color=black},textfont={color=black}}
\caption{Effect of \namelong model size, with best results highlighted in \textbf{bold}.}
\label{tab:ablation-model-size}
\begin{tabular}{l|lcccccc}
\toprule
Model & Dataset & Col$_S \downarrow$ & Pac$_S \uparrow$ & Pac$_C \uparrow$ & Rec$_S \uparrow$ & Rec$_C \uparrow$ & $\widetilde{\text{Sol}}_C \uparrow$ \\
\midrule

\multirow{5}{*}{A-Transformer}

 & Rigid        & 0.731&  0.015& 0.060&  0.033& 0.120& 0.060 \\
 & Flexible     & 0.900&  0.001& 0.020&  0& 0& 0.020\\
 & CSP5         & 0.833&  0& 0&  0& 0& 0 \\
 & CSP6         & 0.967&  0& 0&  0& 0& 0 \\
 & CSP7         & 0.950&  0& 0&  0& 0& 0 \\

 \midrule

 \multirow{5}{*}{AssembleFlow}

 & Rigid        & 0.524 & 0.001 & 0.040 & 0.211 & 0.760 & 0 \\
 & Flexible     & 0.883 & 0 & 0 & \textbf{0.021} & 0.140 & 0 \\
 & CSP5         & 0.717 & 0 & 0 & 0.150 & 0.500 & 0 \\
 & CSP6         & 0.800 & 0 & 0 & 0.073& 0.200 & 0 \\
 & CSP7         & 0.808 & 0 & 0 & 0.063 & \textbf{0.250}& 0 \\

 \midrule

\multirow{5}{*}{\namelong (50M)}

 & Rigid        & \textbf{0.066}& \textbf{0.712}& \textbf{0.980}& \textbf{0.621}& \textbf{0.940}& \textbf{0.320}\\
 & Flexible     & \textbf{0.281}& \textbf{0.221}& \textbf{0.700}& 0.017& \textbf{0.240}& \textbf{0.060}\\
 & CSP5         & \textbf{0.170}& \textbf{0.491}& \textbf{1.000}& \textbf{0.400}& \textbf{0.833}& 0\\
 & CSP6         & \textbf{0.261}& \textbf{0.370}& \textbf{1.000}& \textbf{0.094}& \textbf{0.400}& \textbf{0.200} \\
 & CSP7         & \textbf{0.159}& \textbf{0.173}& \textbf{0.750}& \textbf{0.105}& \textbf{0.250}& \textbf{0.125}\\

\bottomrule
\end{tabular}
}
\end{table}

As shown in \Cref{tab:ablation-model-size}, the smaller 50M parameter version of \namelong still outperforms existing \textit{ab-initio} ML methods, which we have provided again here for reference. This result reinforces the benefits of \namelong's lattice-free and non-equivariant architecture at a smaller scale.

\section{Conformer Analysis}

\begin{figure}[t!]
{\color{black}
\captionsetup{labelfont={color=black},textfont={color=black}}
    \captionsetup[subfigure]{labelformat=simple, labelsep=none,
        justification=raggedright, singlelinecheck=false, position=top}
    \renewcommand\thesubfigure{(\alph{subfigure})}

    \centering
    \begin{subfigure}[t]{0.47\textwidth}
        \caption{}\vspace{-7pt}
        \centering
        \includegraphics[width=\linewidth]{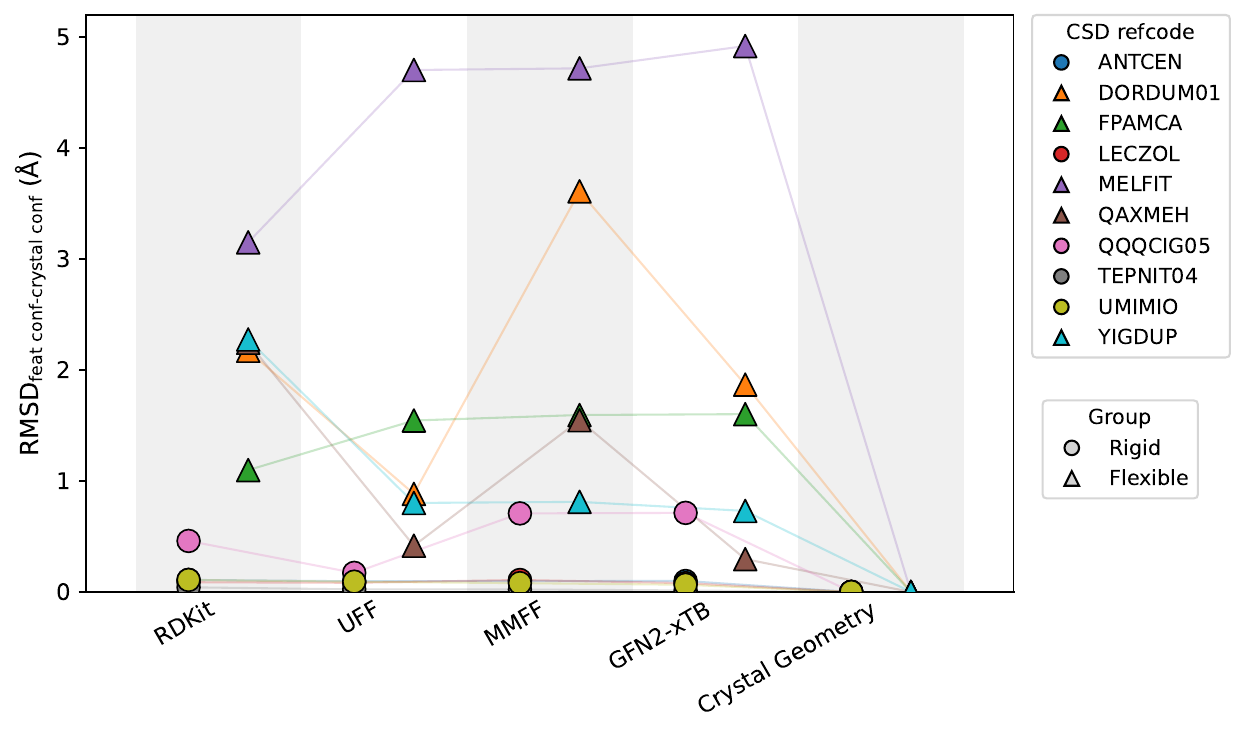}
      \label{fig:conformer_start_crystal}    
    \end{subfigure}%
    \hspace{10pt}%
    \begin{subfigure}[t]{0.47\textwidth}
        \caption{}\vspace{-7pt}
        \centering
        \includegraphics[width=\linewidth]{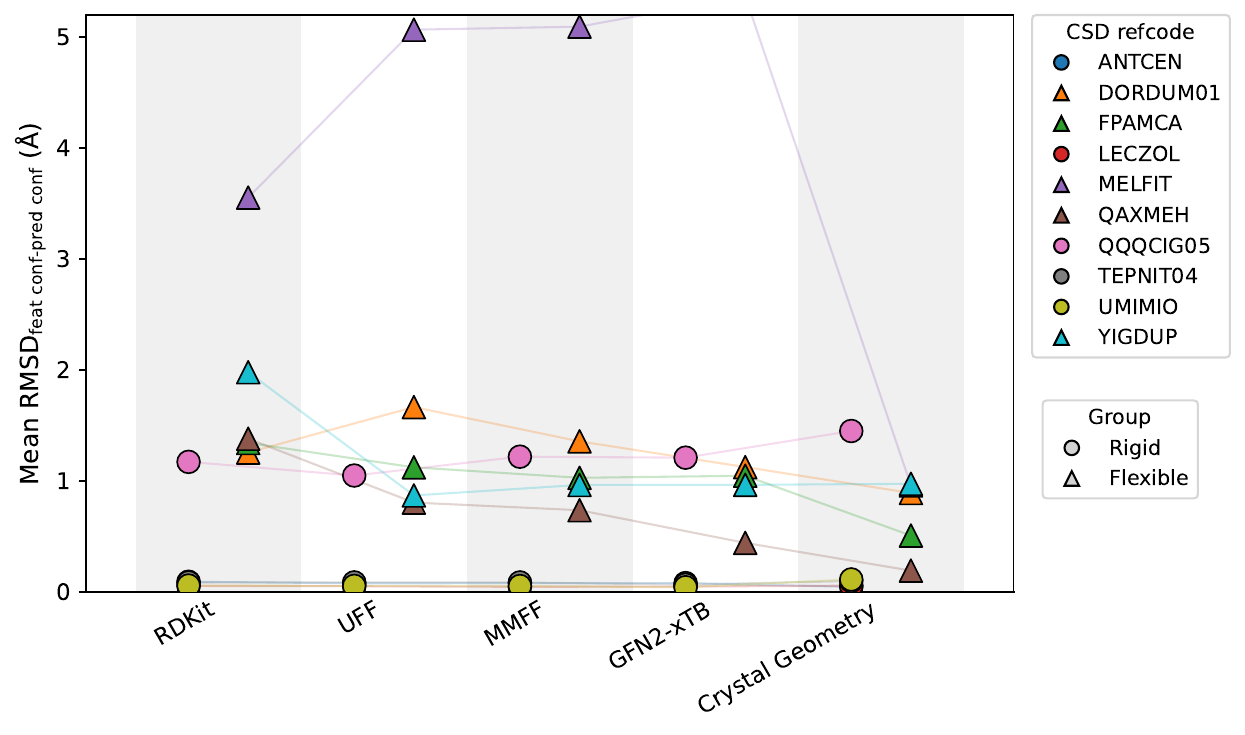} %
      \label{fig:conformer_start_final}    
    \end{subfigure}%

    \caption{Analysis of input and final conformer in ten crystals for which \namelong obtains RMSD$_1 < 0.5 \text{\AA}$. (a) Different input conditioning conformers and their differences with the ground truth. Generally, flexible molecules' conformers have significant deviations to the crystal geometry. (b) Different input conditioning conformers and their differences with the final, model-predicted conformer. Generally, the model final conformers that are significantly from the conditioning.}
    \label{fig:conformer-distances}

    \captionsetup[subfigure]{labelformat=parens, labelsep=colon, position=bottom}
}
\end{figure}

\label{app:conformer_analysis}
We now investigate the influence of the feature molecular conformer used by \namelong. For each molecule, the model is conditioned on (i) its bond information and (ii) a 3D structure obtained using RDKit ETKDG followed by relaxation with the semi-empirical quantum-chemical method GFN2-xTB. This conformer serves solely as a feature conditioning signal for the computational prior, rather than as an experimental oracle: the generative process always starts denoising from random atomic coordinates, with the conformer providing only a reference for physically plausible initialization derived from quantum-chemical approximations as a feature input. To quantify how this initialization influences generation quality, we perform the following targeted analyses:

\begin{enumerate}
    \item Distance of the feature conformer relative to ground truth. In~\cref{fig:conformer_start_crystal}, we show that the feature conditioning conformers for flexible molecules can be significantly different than the ground truth crystal conformation. This shows how often the model must depart from or refine the initial geometry.
    \item Distance of the feature conformer relative to the prediction. In~\cref{fig:conformer_start_final}, we show that there are significant differences between the feature conformer with the final, model-predicted conformer. Structures predicted by the diffusion process is different than the one conditioned in the input embedding, i.e. it is not merely reproducing the structure used in input embedding.
    \item Robustness to feature conformer quality. We assess robustness by replacing the GFN2-xTB initialization with alternative or deliberately perturbed conformers, \begin{itemize}
        \item RDKit-ETKDG conformers
        \item Universal forcefield conformers
        \item MMFF94 conformers
        \item GFN2-xTB conformers
        \item Ground truth crystal conformers.
    \end{itemize}
    In~\cref{fig:scatter_rmsd_15_conformer}, we plot generation performance (e.g., RMSD$_{15}$) as a function of input distortion level. We also plot the best RMSD$_{15}$ for each case in~\cref{fig:best_rmsd_15_conformer}. These plots show that the model's predicted crystal packing is not significantly affected by the different conformers or their differences with the ground truth crystal conformation. 
    \item The issue of $Z^\prime$. When $Z^\prime > 1$, the asymmetric unit contains several symmetry-independent molecules. These molecules can have similar conformation (but in different orientation) or genuinely different conformers. $\sim 10\%$ of the training set contains such examples, and in this subset, mean $Z^\prime$ is 2.14. \namelong~conditions every molecular copy with the same feature conformer, and can occasionally approximately solve crystals with $Z^\prime > 1$ (e.g. Target XXIII in CSP 6, XAFPAY02, RMSD$_{15} = 1.91 \text{\AA}$.) 
\end{enumerate}

\begin{figure}[t!]
{\color{black}
\captionsetup{labelfont={color=black},textfont={color=black}}

    \captionsetup[subfigure]{labelformat=simple, labelsep=none,
        justification=raggedright, singlelinecheck=false, position=top}
    \renewcommand\thesubfigure{(\alph{subfigure})}

    \centering
    \begin{subfigure}[t]{0.47\textwidth}
        \caption{}\vspace{-7pt}
        \centering
          \includegraphics[width=\textwidth]{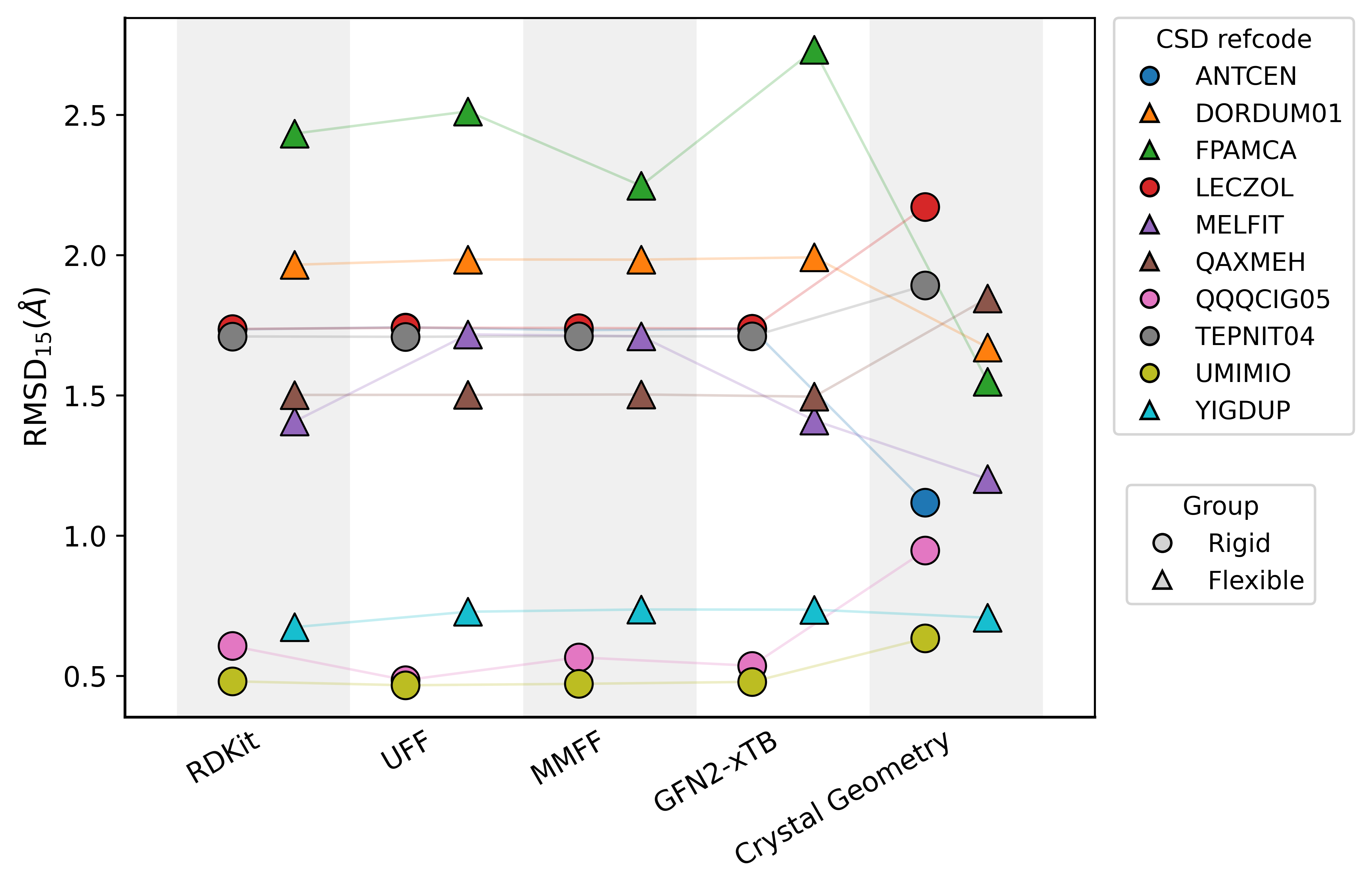}
          \label{fig:best_rmsd_15_conformer}
    \end{subfigure}%
    \hspace{10pt}%
    \begin{subfigure}[t]{0.47\textwidth}
        \caption{}\vspace{-7pt}
        \centering
          \includegraphics[width=0.8\textwidth]{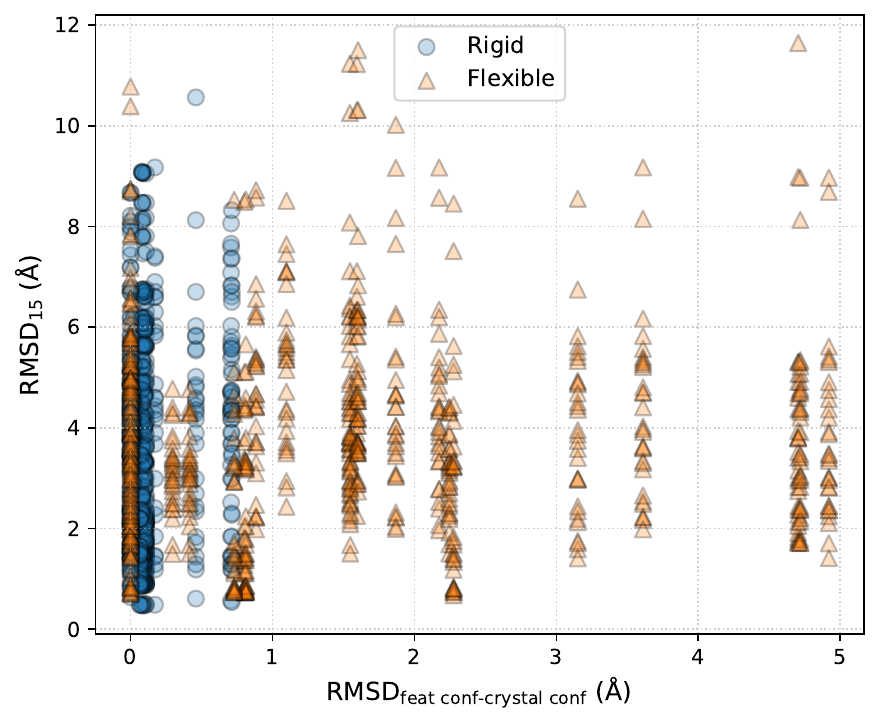}
          \label{fig:scatter_rmsd_15_conformer}
    \end{subfigure}%

    \caption{Analysis of input conformer with the final predicted crystal packing. (a) Best RMSD$_{15}$ from 30 samples per target when using different initial conformers. Generally, the model is robust against different sources of the input conditioning conformer. (b) All RMSD$_{15}$ across ten targets relative to the difference in input conditioning and crystal conformer differences. This shows the model can still produce good packings even if the input conditioning conformer there is very different from the crystal conformation}
    \label{fig:conformer-crystal-distance}

    \captionsetup[subfigure]{labelformat=parens, labelsep=colon, position=bottom}
}
\end{figure}

Together, these analyses provide a clear and quantitative view of how the initial conformer affects \namelong: how different it is from the target, how effectively the model corrects it, how robust generation is to poor or perturbed inputs, and whether supplying multiple conformers per \shellsampshort unit impacts performance.

\section{Inference Analysis}
\label{app:inf_analysis}
In this section, we provide a deeper analysis into the inference and generation capabilities of \namelong. Specifically, we investigate the performance of \namelong on generating significantly larger crystal blocks, evaluate how the model performs when allowing for additional sampling, and provide a small qualitative analysis on the diversity of generated samples.

\subsection{Inference on Larger Crystal Blocks}
\label{app:inf_analysis_larger_blocks}
To evaluate the long-range periodicity of \namelong, we perform inference on increasingly larger crystal blocks for a handful of different crystals (\Cref{fig:large_inf_crop}). The respective number of atoms, and therefore tokens, for each molecule are provided in parentheses as follows: CAPRYL (10), CUYVUR (18), HURYUQ (14), QQQCIG05 (42), UMIMIO (12). Recall that \namelong is trained with a maximum token budget of 640 (\S\ref{model-hp}), hence generating 200 copies of CAPRYL requires 2,000 tokens and 100 copies of QQQCIG05 requires 4,200 tokens, which are far beyond anything the training dataset would have seen. 

In general, conformer recovery (denoted by RMSD\textsubscript{1}) remains somewhat constant as more molecule copies are generated. This makes sense because if the model is already able to capture the correct crystal conformer, generating more copies of that should not change the conformer itself significantly. In terms of lattice periodicity (denoted by RMSD\textsubscript{15}), \namelong does struggle slightly with generating more molecules, since the packing arrangements cover distances that are further away. However, we note that although the RMSD\textsubscript{15} does increase for these larger blocks, they still mostly remain under the 2.0 Å threshold to be considered ``approximately solved." 

This analysis supports the long-range generalizability of \namelong to generate larger periodic packings, and we provide a visualization of the approximately solved larger packing for ANTCEN in \Cref{fig:app:anthracene}, which contains over 2,400 tokens.

\begin{figure}[t]
\centering
{\color{black} %
\captionsetup{labelfont={color=black},textfont={color=black}}
  \centering
  \includegraphics[width=\textwidth]{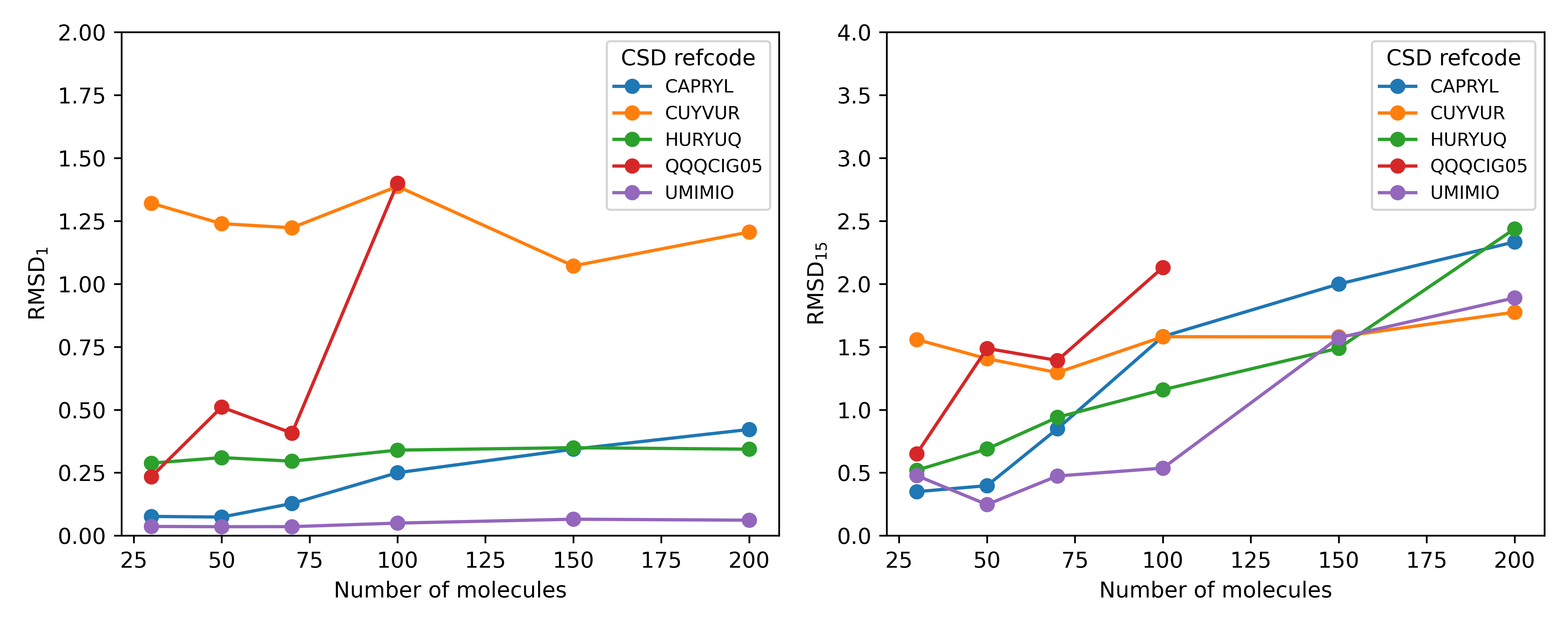}
  \caption{Performance of \namelong on increasingly larger inference crystal blocks (denoted by the number of molecule copies generated). RMSD\textsubscript{1} (left) remains roughly constant, whereas RMSD\textsubscript{15} increases slightly for larger predicted packings. Note that QQQCIG05, which contains 42 heavy atoms (i.e. tokens) per molecules, goes out of memory past 100 molecules.}
  \label{fig:large_inf_crop}
}
\end{figure}

\subsection{Additional Sample Evaluation for CSP Blind Test 5}
\label{app:inf_analysis_more_samples}
For all previously reported results, we evaluate \namelong using 30 generated samples per crystal target. This is done in order to highlight the sample efficiency of \namelong (cf. \Cref{fig:sample-efficiency,fig:comp-sample-stack}), since few-sample success is critical for downstream screening and design. However, we note that this puts \namelong at somewhat of a disadvantage when comparing against traditional DFT methods, which often generate hundreds or thousands of candidate packings. Here, we provide a more direct comparison of \namelong to DFT methods by evaluating \namelong using the same number of generated samples on the CSP Blind Test 5 dataset.

\begin{table}[h!]
\centering
{\color{black} %
\captionsetup{labelfont={color=black},textfont={color=black}}
\caption{Evaluation of \namelong on CSP Blind Test 5 with the same number of generated samples per crystal target (${n_S}$) as DFT\textsubscript{avg}.}
\label{tab:comp5-more-samples}
\begin{tabular}{lccccccc}
\toprule
Model & ${n_S}$ & Col$_S \downarrow$ & Pac$_S \uparrow$ & Pac$_C \uparrow$ & Rec$_S \uparrow$ & Rec$_C \uparrow$ & $\widetilde{\text{Sol}}_C \uparrow$ \\
\midrule

DFT$_{\text{avg}}$ & 464      & 0.039 & 0.323 & 0.611& \textbf{0.784}& 0.733 & \textbf{0.544} \\
 
\namelong  & 30        & \textbf{0.006} & \textbf{0.667} & \underline{0.833} & \underline{0.572} & \underline{0.833} & 0.167 \\

\namelong  & 500        & \underline{0.009} & \underline{0.536}& \textbf{1.000} & 0.548 & \textbf{1.000} & \underline{0.500} \\

\bottomrule
\end{tabular}
}
\end{table}

As expected, allowing \namelong to generate more candidate packings increases its performance on our per-crystal evaluation metrics, while remaining roughly the same on per-sample metrics. Notably, we also see that increasing the number of generated samples bring the approximate solve rate of \namelong equal to that of DFT\textsubscript{avg}. This suggests that the model sampler is indeed able to generate accurate crystal packings and compete with traditional DFT methods.

\color{black}
\section{Additional Related Work}
\label{app:extra_related_work}

\xhdr{Physical approaches to crystal structure prediction} 
Physical approaches mostly rely on search and sampling using an energy function. Some classical methods infused domain knowledge, such as starting with initial guesses like a unit cell, and employed varying parameters, random sampling \citep{case2016convergence, pickard2011ab, tom2020genarris}, guidance from force-fields \citep{van2002crystal}, or constructed guesses based on chemical principles \citep{ganguly2010long}. Recent methods have also applied more structured search algorithms, such as simulated annealing \citep{reinaudi2000simulated, earl2005parallel}, genetic algorithms \citep{curtis2018gator, lyakhov2013new}, particle swarm optimization \citep{wang2010crystal}, and basin-hopping \citep{banerjee2021crystal}. 

\xhdr{ML potentials} 
Computational complexity of DFT and availability of large datasets~\citep{omol25, opendac23, omat24, ani1ccx} enabled the development of universal machine learning interatomic potentials (MLIP)~\citep{gasteiger2021gemnet, gasteiger2022gemnet_oc, batatia2022mace, liao2023equiformerv2, batatia2025mace_mp0, wood_uma} to predict energy and forces of different atomistic systems (from small molecules to inorganic crystals) at a fraction of DFT costs.
The next generation of physical approaches to CSP replaced DFT with MLIPs and showed benefits in material discovery~\citep{merchant2023gnome} with the recent FastCSP model~\citep{gharakhanyan2025fastcsp} applied to organic CSP. 

\looseness=-1
\xhdr{Generative models for inorganic crystal structure prediction} 
Generative models have emerged as a promising new paradigm for crystal structure prediction, focusing mainly on conformer search in molecular structures as well as inorganic, periodic crystals. For inorganic crystal structures, which consist of a periodic unit cell of atoms, generative models were first applied for unconditional de-novo generation of new crystals \citep{xie2022crystal} and later used for structure generation conditioned on crystal composition \citep{jiao2023crystal, jiao2024space, flowmm, levy2025symmcd}. The use of generative models has spanned multiple modeling methods, including diffusion models with equivariant models \citep{jiao2023crystal, jiao2024space}, symmetry-aware diffusion models \citep{levy2025symmcd}, and flow-matching models on specialized manifolds \citep{flowmm}. Recent work have also applied large-language models to crystal generation and structure prediction with more varied success compared to other methods \citep{antunes2024crystal, ding2025matexpert}.

\section{Additional Chemical Analysis}
\label{app:chem-analysis}

\begin{figure}[h]
    \centering
    \includegraphics[width=0.5\linewidth]{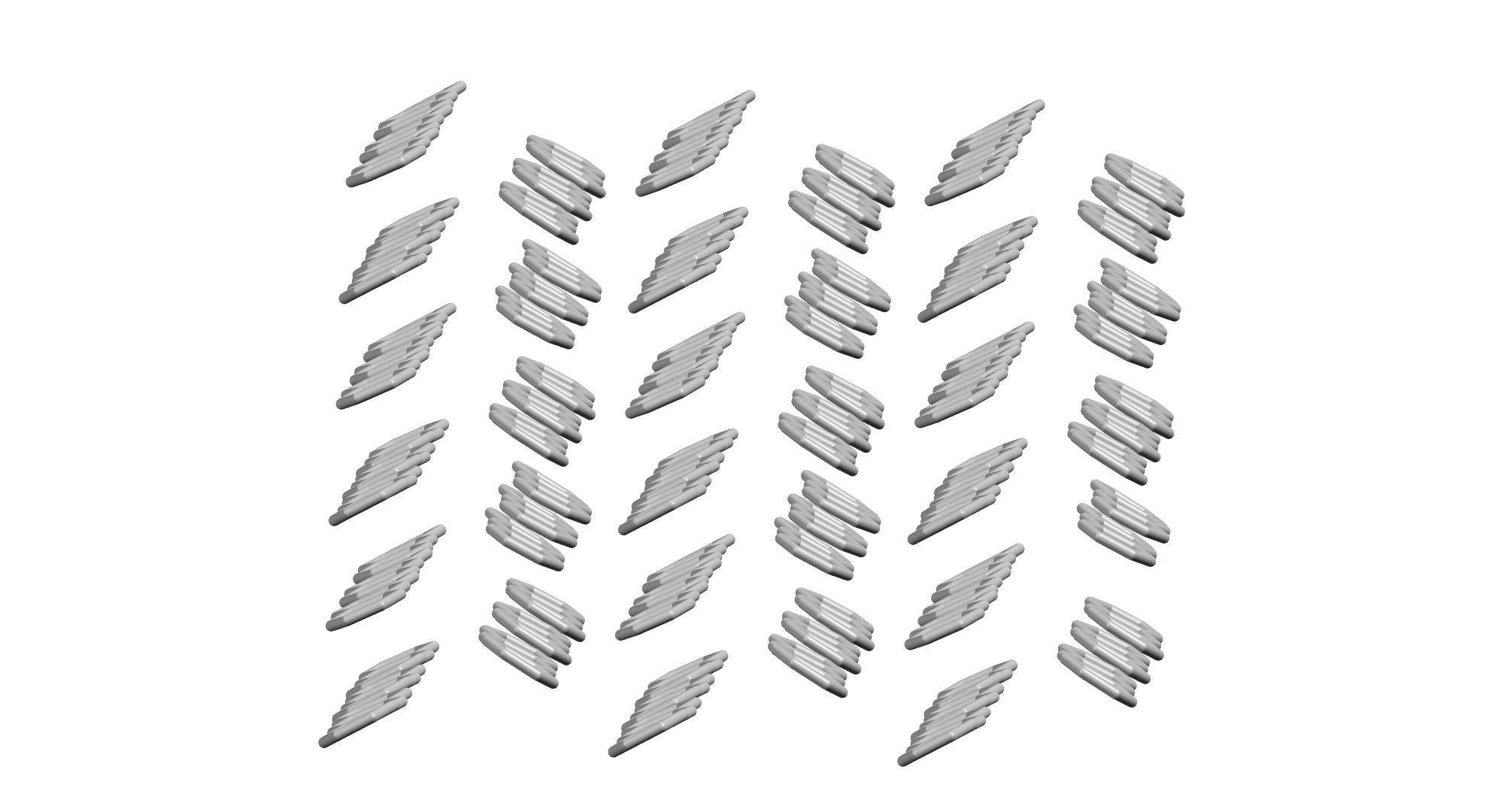}
    \caption{\namelong learns small local neighborhoods from \shellsampshort crops and generalizes to infer large and periodic structures. Example of ANTCEN with over 2400 tokens ($\text{RMSD}_{15}=1.9$\AA).}
    \label{fig:app:anthracene}
\end{figure}

\begin{figure}[h]  %
    \captionsetup[subfigure]{labelformat=simple, labelsep=none,
        justification=raggedright, singlelinecheck=false, position=top}
    \renewcommand\thesubfigure{(\alph{subfigure})}
    \vspace{-10pt}
    \centering
    \begin{subfigure}[t]{0.45\textwidth}
        \centering
        \includegraphics[width=\linewidth]{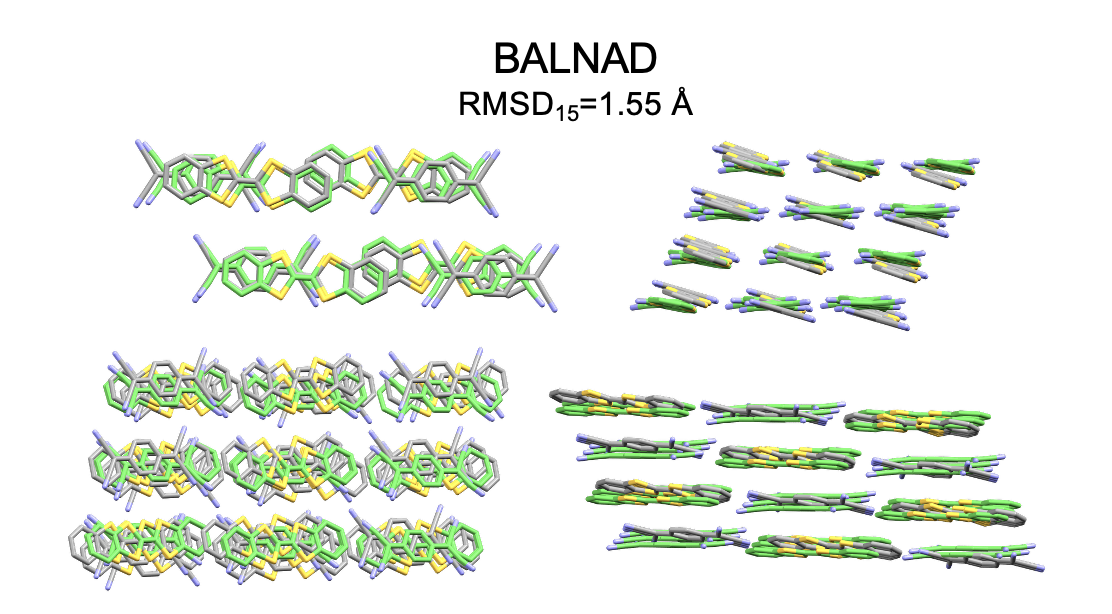}
        \label{fig:app-cocrystal-a}
    \end{subfigure}%
    \hspace{4pt}%
    \begin{subfigure}[t]{0.45\textwidth}
        \centering
        \includegraphics[width=0.95\linewidth]{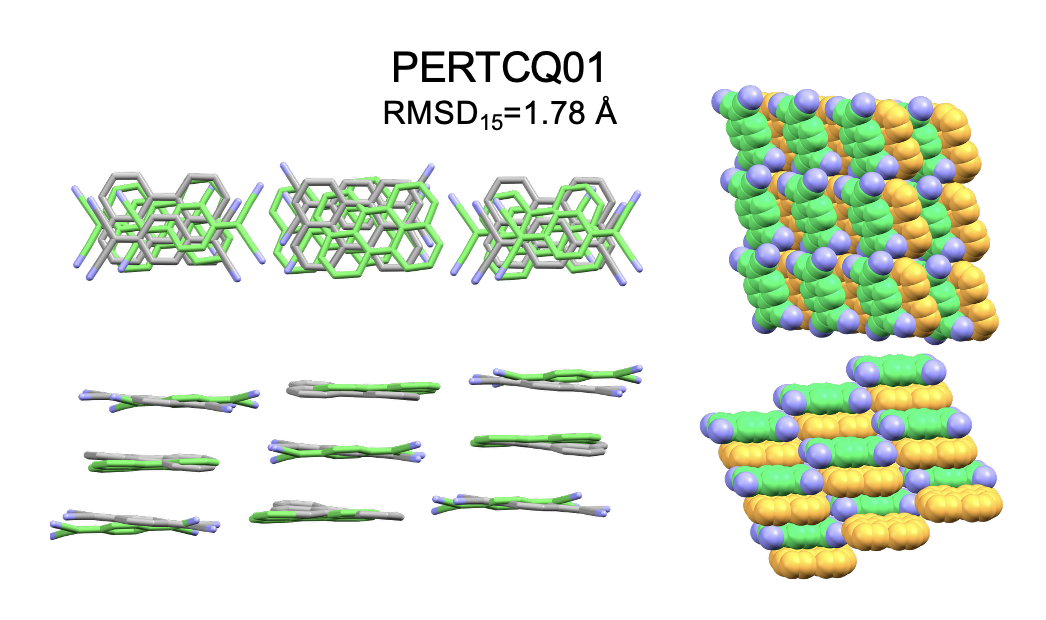} %
        \label{fig:app-cocrystal-b}
    \end{subfigure}%

    \caption{Examples of \namelong generated co-crystal structures (color) compared against experimental structures (gray).}
    \label{fig:app:chemical-analysis-cocrystal}

    \captionsetup[subfigure]{labelformat=parens, labelsep=colon, position=bottom}
\end{figure}

\begin{figure}[h]  %
    \captionsetup[subfigure]{labelformat=simple, labelsep=none,
        justification=raggedright, singlelinecheck=false, position=top}
    \renewcommand\thesubfigure{(\alph{subfigure})}
    \vspace{-10pt}
    \centering
    \begin{subfigure}[t]{0.24\textwidth}
        \caption{}\vspace{-6pt}
        \centering
        \includegraphics[width=0.9\linewidth]{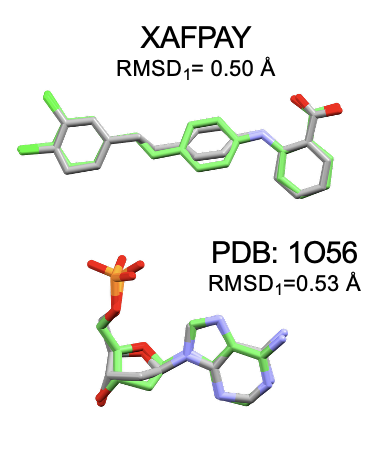}
        \label{fig:app-chem-a}
    \end{subfigure}%
    \hspace{5pt}%
    \begin{subfigure}[t]{0.24\textwidth}
        \caption{}\vspace{-4pt}
        \centering
        \includegraphics[width=0.95\linewidth]{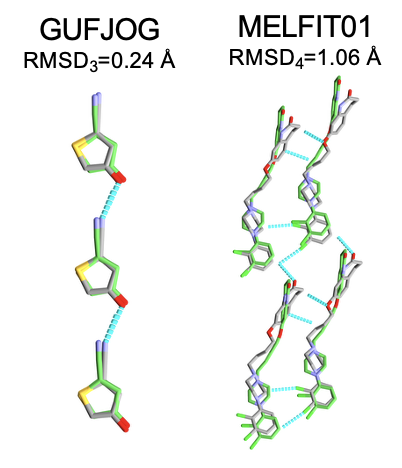} %
        \label{fig:app-chem-b}
    \end{subfigure}%
    \hspace{5pt}%
    \begin{subfigure}[t]{0.48\textwidth}
        \caption{}\vspace{-7pt}
        \centering
        \includegraphics[width=\linewidth]{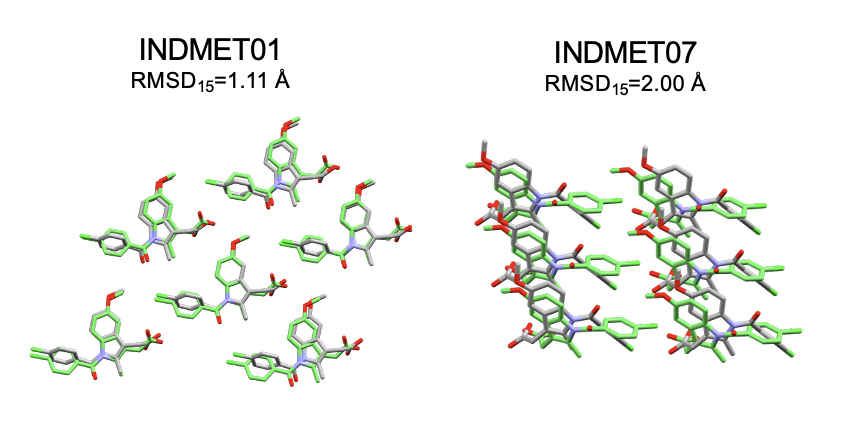} %
        \label{fig:app-chem-c}
    \end{subfigure}%

    \caption{Examples of \namelong generated structures (color) compared against experimental structures (gray) for (a) flexible conformers, (b) intermolecular interactions, and (c) polymorphs.}
    \label{fig:app:chemical-analysis}

    \captionsetup[subfigure]{labelformat=parens, labelsep=colon, position=bottom}
\end{figure}

\color{black}
\subsection{Energy analysis}
\label{app:energy_analysis}
To assess whether the generated samples contain (i) highly energetically unfavorable motifs or (ii) chemically plausible packing arrangements compared to physics-based methods, we utilized the semi-empirical GFN2-xTB method as a computational proxy. GFN2-xTB was selected for its balance of computational speed and accuracy in capturing non-covalent interactions. This analysis should be interpreted as a \emph{consistency check on gross energetic behaviour} rather than as a quantitative ranking of polymorph stability.

We retrieved 1,500 physics-based structure predictions submitted to the CSP blind tests to serve as a baseline. Both the physics-based submissions and the experimental ground truth were expanded into 2×2×2 supercells to match the approximate atom count of the \namelong generations (30 molecules). For \namelong samples, hydrogens were added using the CCDC API. 

We performed single-point energy (SPE) calculations on all structures. The energy difference is relative to the experimental structure and standardized by number of molecules. It is important to note that the DFT baseline consists of structures that underwent varying degrees of geometry optimization, whereas \namelong samples were evaluated as raw generative outputs with no geometry optimization, including the hydrogen atoms.

The results show that first, \namelong samples do not contain energetically catastrophic motifs. Second, despite lacking relaxation, the generative samples exhibit a narrow energy distribution comparable to the stable basin of the DFT samples. This confirms that the generative model implicitly learns to avoid severe steric clashes and produces metastable packing configurations.

\begin{figure}[h!]
  \centering
  \begin{subcaptionbox}{OBEQOD (XVII in CSP5)}[0.42\textwidth]
    {\includegraphics[width=0.99\linewidth]{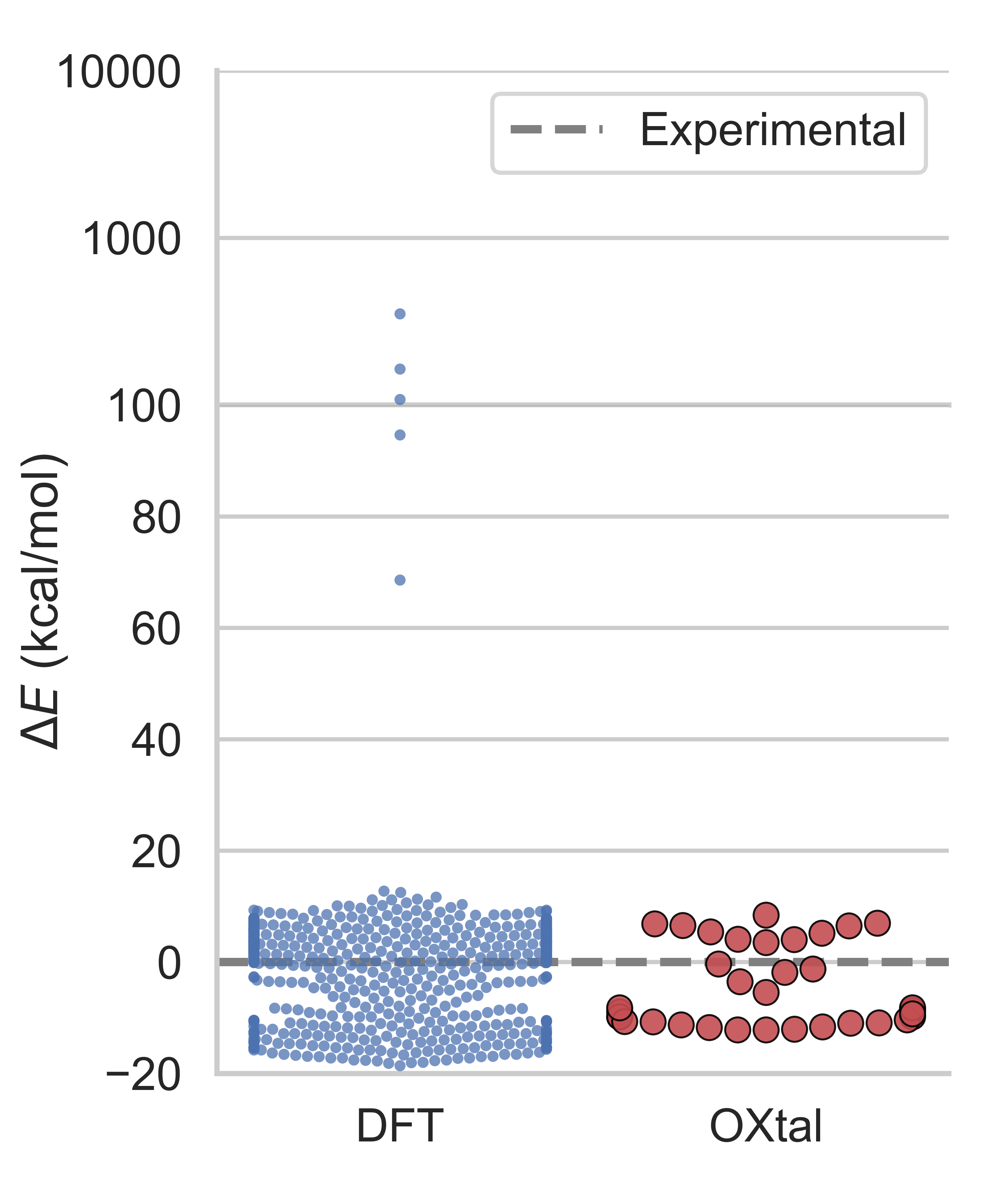}}
    \vspace{-2pt}
  \end{subcaptionbox}
  \begin{subcaptionbox}{NACJAF (XXII in CSP6)}[0.42\textwidth]
    {\includegraphics[width=0.99\linewidth]{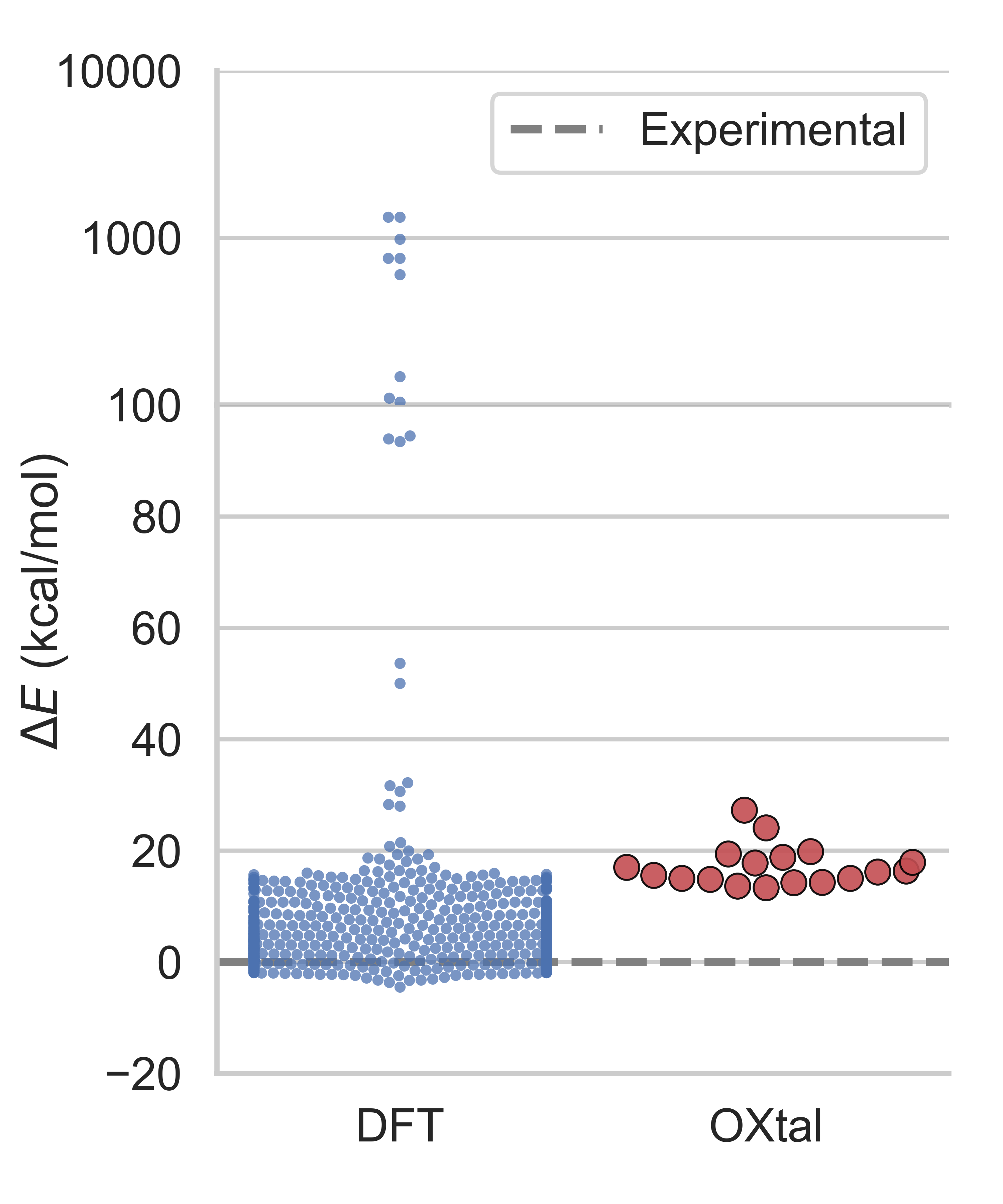}}
    \vspace{-2pt}
  \end{subcaptionbox}
  \vspace{-3pt}
  \caption{GFN2-xTB single point energy analysis of ground truth crystal structures, 1,500 subsampled physics-based samples from the blind test submissions, and \namelong samples.}
  \label{app:fig:energy}
\end{figure}

\subsection{Diversity of Generated Samples}
\label{app:sample_diversity}
In \Cref{fig:many-samples}, we provide a qualitative visualization of some example packings generated by \namelong. We see that although several samples lie in the same orientation, some generated samples also present a more complex herringbone packing. This is evident in the XATJOT co-crystal, which does not collapse the two different molecular components into the same orientation. These results suggest that \namelong may prefer planar packings, which are more prevalent in the training dataset. However, this may be mitigated with additional tuning on noise scheduler parameters.

\clearpage

\begin{figure}[h!]
    \centering
    {\color{black} %
    \captionsetup{labelfont={color=black},textfont={color=black}}
    \includegraphics[width=0.9\linewidth]{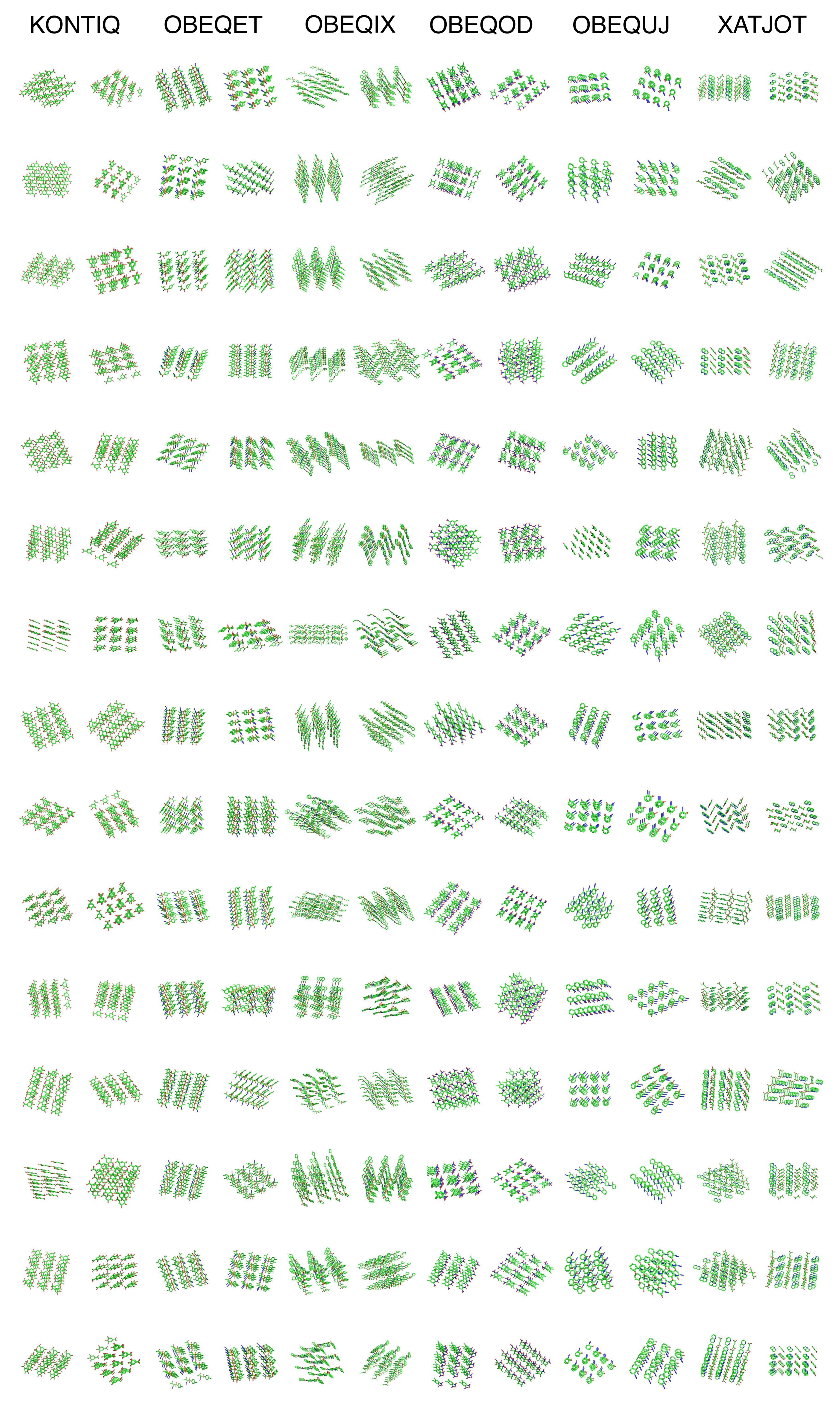}
    \caption{Example set of structures generated by \namelong for various crystals (labeled by CSD ID).}
    \label{fig:many-samples}
    }
\end{figure}
\color{black}

\pagebreak

\section{Detailed DFT Baselines from CSP Blind Tests}
\label{app:massive-tables}

\begin{table}[h] 
\caption{Results per crystal of submitted methods for CSP blind test 5.}
\label{tab:app-comp5-results}
\centering
\small
\resizebox{0.5\linewidth}{!}{
\begin{tabular}{lccccccc}
\toprule
Model & $n_\mathrm{S}$ & Col$_S \downarrow$ & Pac$_S \uparrow$ & Pac$_C?$ & Rec$_S \uparrow$ & Rec$_C ?$ & $\widetilde{\text{Sol}}_C ?$ \\
\midrule
& \multicolumn{7}{c}{Target XVI (OBEQUJ)} \\
\cmidrule(lr){2-8}
Group Ammon & 20 & 0.000 & 0.850 & Y & 1.000 & Y & Y \\
Group Boerrigter & 8456 & 0.000 & 0.113 & Y & 0.994 & Y & Y \\
Group Day & 549 & 0.000 & 0.222 & Y & 1.000 & Y & Y \\
Group Desiraju & 202 & 0.000 & 0.272 & Y & 1.000 & Y & Y \\
Group Hofmann & 103 & 0.000 & 0.000 & N & 1.000 & Y & N \\
Group Jose & 16 & 0.188 & 0.250 & Y & 0.438 & Y & N \\
Group Kendrick & 90 & 0.000 & 0.500 & Y & 1.000 & Y & Y \\
Group Maleev & 18 & 0.000 & 0.000 & N & 0.944 & Y & N \\
Group Misquitta & 103 & 0.000 & 0.214 & Y & 1.000 & Y & Y \\
Group Nikylov & 13 & 0.000 & 0.000 & N & 1.000 & Y & N \\
Group Orendt & 292 & 0.000 & 0.209 & Y & 1.000 & Y & Y \\
Group Price & 150 & 0.000 & 0.367 & Y & 1.000 & Y & Y \\
Group Scheraga & 61 & 0.000 & 0.393 & Y & 1.000 & Y & Y \\
Group vanEijck & 117 & 0.000 & 0.359 & Y & 1.000 & Y & Y \\
Group Venuti & 61 & 0.033 & 0.049 & Y & 0.475 & Y & Y \\
\midrule
& \multicolumn{7}{c}{Target XVII (OBEQOD)} \\
\cmidrule(lr){2-8}
Group Ammon & 6 & 0.000 & 0.167 & Y & 1.000 & Y & N \\
Group Boerrigter & 4894 & 0.000 & 0.054 & Y & 0.999 & Y & Y \\
Group Day & 164 & 0.000 & 0.220 & Y & 1.000 & Y & Y \\
Group Desiraju & 202 & 0.000 & 0.302 & Y & 1.000 & Y & Y \\
Group Hofmann & 103 & 0.000 & 0.000 & N & 1.000 & Y & N \\
Group Jose & 15 & 0.533 & 0.133 & Y & 0.267 & Y & N \\
Group Kendrick & 150 & 0.000 & 0.200 & Y & 1.000 & Y & Y \\
Group Maleev & 24 & 0.000 & 0.083 & Y & 1.000 & Y & N \\
Group Orendt & 272 & 0.000 & 0.051 & Y & 0.971 & Y & Y \\
Group Price & 139 & 0.000 & 0.122 & Y & 1.000 & Y & Y \\
Group Scheraga & 74 & 0.000 & 0.189 & Y & 0.986 & Y & Y \\
Group vanEijck & 132 & 0.000 & 0.129 & Y & 0.992 & Y & Y \\
Group Venuti & 71 & 0.563 & 0.014 & Y & 0.211 & Y & N \\
\midrule
& \multicolumn{7}{c}{Target XVIII (OBEQET)} \\
\cmidrule(lr){2-8}
Group Ammon & 5 & 0.000 & 0.000 & N & 0.600 & Y & N \\
Group Boerrigter & 5001 & 0.000 & 0.005 & Y & 0.448 & Y & Y \\
Group Day & 438 & 0.000 & 0.000 & N & 0.893 & Y & N \\
Group Desiraju & 156 & 0.000 & 0.038 & Y & 1.000 & Y & Y \\
Group Hofmann & 103 & 0.000 & 0.000 & N & 0.379 & Y & N \\
Group Jose & 11 & 0.364 & 0.000 & N & 0.000 & N & N \\
Group Kendrick & 79 & 0.000 & 0.127 & Y & 0.911 & Y & Y \\
Group Maleev & 10 & 0.000 & 0.000 & N & 0.800 & Y & N \\
Group Orendt & 165 & 0.000 & 0.012 & Y & 0.897 & Y & Y \\
Group Price & 259 & 0.000 & 0.008 & Y & 0.525 & Y & Y \\
Group Scheraga & 71 & 0.000 & 0.014 & Y & 0.986 & Y & Y \\
Group vanEijck & 113 & 0.000 & 0.000 & N & 0.000 & N & N \\
Group Venuti & 100 & 0.000 & 0.000 & N & 0.040 & Y & N \\
\midrule
& \multicolumn{7}{c}{Target XIX (XATJOT)} \\
\cmidrule(lr){2-8}
Group Boerrigter & 383 & 0.000 & 0.355 & Y & 1.000 & Y & Y \\
Group Day & 103 & 0.000 & 0.252 & Y & 1.000 & Y & Y \\
Group Desiraju & 202 & 0.000 & 0.262 & Y & 1.000 & Y & Y \\
Group Hofmann & 103 & 0.000 & 0.155 & Y & 1.000 & Y & Y \\
Group Kendrick & 350 & 0.000 & 0.326 & Y & 1.000 & Y & Y \\
Group Maleev & 23 & 0.000 & 0.217 & Y & 1.000 & Y & Y \\
Group Orendt & 279 & 0.000 & 0.140 & Y & 1.000 & Y & Y \\
Group Price & 164 & 0.000 & 0.079 & Y & 1.000 & Y & Y \\
Group Scheraga & 34 & 0.000 & 0.147 & Y & 1.000 & Y & Y \\
Group vanEijck & 129 & 0.000 & 0.457 & Y & 1.000 & Y & Y \\
Group Venuti & 100 & 0.000 & 0.050 & Y & 1.000 & Y & N \\
\midrule
& \multicolumn{7}{c}{Target XX (OBEQIX)} \\
\cmidrule(lr){2-8}
Group Ammon & 11 & 0.000 & 0.000 & N & 0.000 & N & N \\
Group Boerrigter & 5005 & 0.000 & 0.000 & N & 0.014 & Y & N \\
Group Day & 53 & 0.000 & 0.094 & Y & 0.358 & Y & Y \\
Group Hofmann & 103 & 0.971 & 0.000 & N & 0.000 & N & N \\
Group Kendrick & 117 & 0.000 & 0.205 & Y & 0.248 & Y & Y \\
Group Maleev & 8 & 0.000 & 0.000 & N & 0.000 & N & N \\
Group Orendt & 156 & 0.000 & 0.006 & Y & 0.006 & Y & Y \\
Group Price & 103 & 0.000 & 0.039 & Y & 0.243 & Y & Y \\
Group vanEijck & 121 & 0.000 & 0.000 & N & 0.017 & Y & N \\
Group Venuti & 100 & 0.020 & 0.000 & N & 0.000 & N & N \\
\midrule
& \multicolumn{7}{c}{Target XXI (KONTIQ)} \\
\cmidrule(lr){2-8}
Group Boerrigter & 5002 & 0.000 & 0.902 & Y & 0.914 & Y & Y \\
Group Day & 2350 & 0.000 & 0.964 & Y & 0.966 & Y & Y \\
Group Desiraju & 244 & 0.000 & 0.943 & Y & 0.943 & Y & Y \\
Group Hofmann & 103 & 0.000 & 0.981 & Y & 0.981 & Y & Y \\
Group Kendrick & 1886 & 0.000 & 0.990 & Y & 0.991 & Y & Y \\
Group Maleev & 26 & 0.000 & 0.769 & Y & 0.769 & Y & Y \\
Group Orendt & 594 & 0.000 & 0.928 & Y & 0.941 & Y & Y \\
Group Price & 449 & 0.002 & 0.978 & Y & 0.980 & Y & Y \\
Group vanEijck & 132 & 0.000 & 0.985 & Y & 0.985 & Y & Y \\
Group Venuti & 294 & 0.000 & 0.568 & Y & 0.058 & Y & Y \\
\bottomrule
\end{tabular}
}
\end{table}

\begin{table}[h] 
\caption{Results per crystal of submitted methods for CSP blind test 6.}
\label{tab:app-comp6-results}
\centering
\small
\resizebox{0.5\linewidth}{!}{
\begin{tabular}{lccccccc}
\toprule
Model & $n_S$ & Col$_S \downarrow$ & Pac$_S \uparrow$ & Pac$_C?$ & Rec$_S \uparrow$ & Rec$_C?$ & $\widetilde{\text{Sol}}_C ?$ \\
\midrule
& \multicolumn{7}{c}{Target XXII (NACJAF)} \\
\cmidrule(lr){2-8}
Group01-Chadha-Singh & 100 & 0.000 & 0.010 & Y & 1.000 & Y & N \\
Group02-Cole & 100 & 0.000 & 0.050 & Y & 1.000 & Y & Y \\
Group03-Day & 200 & 0.000 & 0.085 & Y & 1.000 & Y & Y \\
Group04-Dzyabchenko & 100 & 0.000 & 0.170 & Y & 1.000 & Y & Y \\
Group05-vanEijck & 100 & 0.000 & 0.070 & Y & 1.000 & Y & Y \\
Group06-Elking-FustiMolnar & 198 & 0.005 & 0.056 & Y & 0.939 & Y & Y \\
Group07-vandenEnde-Cuppen & 200 & 0.000 & 0.080 & Y & 1.000 & Y & Y \\
Group08-Facelli & 177 & 0.000 & 0.006 & Y & 0.977 & Y & N \\
Group09-Goto-Obata & 200 & 0.000 & 0.060 & Y & 1.000 & Y & Y \\
Group10-Hofmann & 100 & 0.000 & 0.000 & N & 1.000 & Y & N \\
Group11-Lv-Wang-Ma & 200 & 0.000 & 0.000 & N & 1.000 & Y & N \\
Group12-Marom & 600 & 0.028 & 0.032 & Y & 0.757 & Y & Y \\
Group13-Mohamed & 100 & 0.000 & 0.570 & Y & 1.000 & Y & Y \\
Group14-Neumann-Kendrick-Leusen & 100 & 0.000 & 0.070 & Y & 1.000 & Y & Y \\
Group15-Adjiman-Pantelides & 100 & 0.000 & 0.040 & Y & 1.000 & Y & Y \\
Group16-Pickard-et-al & 33 & 0.000 & 0.000 & N & 1.000 & Y & N \\
Group17-Podeszwa & 99 & 0.000 & 0.091 & Y & 1.000 & Y & Y \\
Group18-Price & 200 & 0.000 & 0.045 & Y & 1.000 & Y & Y \\
Group19-Szalewicz-Price & 100 & 0.000 & 0.030 & Y & 1.000 & Y & Y \\
Group20-Tuckerman-Szalewicz & 54 & 0.000 & 0.130 & Y & 1.000 & Y & Y \\
Group21-Zhu & 80 & 0.000 & 0.075 & Y & 0.988 & Y & Y \\
Group22-Boese-Hofmann & 31 & 0.000 & 0.000 & N & 1.000 & Y & N \\
Group23-Brandenburg-Grimme & 119 & 0.000 & 0.042 & Y & 0.992 & Y & Y \\
Group25-Tkatchenko & 120 & 0.000 & 0.067 & Y & 0.992 & Y & Y \\
\midrule
& \multicolumn{7}{c}{Target XXIII (XAFPAY)} \\
\cmidrule(lr){2-8}
Group01-Chadha-Singh & 100 & 0.000 & 0.000 & N & 0.000 & N & N \\
Group02-Cole & 100 & 0.000 & 0.050 & Y & 0.090 & Y & Y \\
Group03-Day & 200 & 0.000 & 0.040 & Y & 0.035 & Y & Y \\
Group05-vanEijck & 100 & 0.000 & 0.040 & Y & 0.060 & Y & Y \\
Group06-Elking-FustiMolnar & 149 & 0.000 & 0.114 & Y & 0.094 & Y & Y \\
Group07-vandenEnde-Cuppen & 200 & 0.000 & 0.000 & N & 0.000 & N & N \\
Group08-Facelli & 100 & 0.000 & 0.000 & N & 0.000 & N & N \\
Group09-Goto-Obata & 200 & 0.000 & 0.120 & Y & 0.135 & Y & Y \\
Group10-Hofmann & 100 & 0.000 & 0.000 & N & 0.000 & N & N \\
Group13-Mohamed & 100 & 0.000 & 0.090 & Y & 0.350 & Y & Y \\
Group14-Neumann-Kendrick-Leusen & 200 & 0.000 & 0.170 & Y & 0.270 & Y & Y \\
Group15-Adjiman-Pantelides & 100 & 0.000 & 0.100 & Y & 0.120 & Y & Y \\
Group18-Price & 200 & 0.000 & 0.120 & Y & 0.165 & Y & Y \\
Group21-Zhu & 60 & 0.000 & 0.083 & Y & 0.133 & Y & Y \\
Group22-Boese-Hofmann & 18 & 0.000 & 0.000 & N & 0.000 & N & N \\
Group23-Brandenburg-Grimme & 125 & 0.000 & 0.400 & Y & 0.304 & Y & Y \\
Group25-Tkatchenko & 50 & 0.000 & 0.280 & Y & 0.220 & Y & Y \\
\midrule
& \multicolumn{7}{c}{Target XXIV (XAFQON)} \\
\cmidrule(lr){2-8}
Group03-Day & 200 & 0.980 & 1.000 & Y & 0.020 & Y & Y \\
Group05-vanEijck & 100 & 0.000 & 0.980 & Y & 0.980 & Y & Y \\
Group06-Elking-FustiMolnar & 198 & 0.000 & 0.859 & Y & 0.005 & Y & Y \\
Group08-Facelli & 100 & 0.280 & 0.990 & Y & 0.720 & Y & Y \\
Group10-Hofmann & 100 & 0.000 & 0.990 & Y & 1.000 & Y & Y \\
Group14-Neumann-Kendrick-Leusen & 100 & 0.310 & 1.000 & Y & 0.690 & Y & Y \\
Group18-Price & 200 & 0.805 & 0.990 & Y & 0.195 & Y & Y \\
Group21-Zhu & 50 & 0.140 & 1.000 & Y & 0.860 & Y & Y \\
Group22-Boese-Hofmann & 15 & 0.200 & 1.000 & Y & 0.800 & Y & Y \\
Group23-Brandenburg-Grimme & 119 & 0.941 & 0.966 & Y & 0.059 & Y & Y \\
Group24-Szalewicz & 100 & 0.800 & 1.000 & Y & 0.200 & Y & Y \\
Group25-Tkatchenko & 50 & 0.600 & 1.000 & Y & 0.400 & Y & Y \\
\midrule
& \multicolumn{7}{c}{Target XXV (XAFQAZ01)} \\
\cmidrule(lr){2-8}
Group02-Cole & 100 & 0.000 & 0.000 & N & 1.000 & Y & N \\
Group03-Day & 200 & 0.000 & 0.100 & Y & 1.000 & Y & Y \\
Group04-Dzyabchenko & 161 & 0.000 & 0.056 & Y & 1.000 & Y & Y \\
Group05-vanEijck & 100 & 0.000 & 0.060 & Y & 1.000 & Y & Y \\
Group06-Elking-FustiMolnar & 127 & 0.000 & 0.087 & Y & 0.984 & Y & Y \\
Group07-vandenEnde-Cuppen & 200 & 0.000 & 0.000 & N & 0.990 & Y & N \\
Group08-Facelli & 100 & 0.000 & 0.040 & Y & 1.000 & Y & N \\
Group09-Goto-Obata & 200 & 0.000 & 0.030 & Y & 1.000 & Y & Y \\
Group10-Hofmann & 100 & 0.000 & 0.000 & N & 1.000 & Y & N \\
Group13-Mohamed & 100 & 0.000 & 0.020 & Y & 1.000 & Y & Y \\
Group14-Neumann-Kendrick-Leusen & 100 & 0.000 & 0.100 & Y & 1.000 & Y & Y \\
Group15-Adjiman-Pantelides & 100 & 0.000 & 0.090 & Y & 1.000 & Y & Y \\
Group18-Price & 200 & 0.000 & 0.110 & Y & 1.000 & Y & Y \\
Group21-Zhu & 25 & 0.000 & 0.040 & Y & 1.000 & Y & Y \\
Group22-Boese-Hofmann & 10 & 0.000 & 0.000 & N & 1.000 & Y & N \\
Group23-Brandenburg-Grimme & 100 & 0.000 & 0.110 & Y & 1.000 & Y & Y \\
Group25-Tkatchenko & 20 & 0.000 & 0.100 & Y & 1.000 & Y & Y \\
\midrule
& \multicolumn{7}{c}{Target XXVI (XAFQIH)} \\
\cmidrule(lr){2-8}
Group01-Chadha-Singh & 100 & 0.000 & 0.000 & N & 0.000 & N & N \\
Group02-Cole & 100 & 0.000 & 0.010 & Y & 0.010 & Y & Y \\
Group03-Day & 200 & 0.000 & 0.015 & Y & 0.000 & N & Y \\
Group04-Dzyabchenko & 71 & 0.000 & 0.014 & Y & 0.000 & N & Y \\
Group05-vanEijck & 100 & 0.000 & 0.000 & N & 0.000 & N & N \\
Group06-Elking-FustiMolnar & 138 & 0.000 & 0.312 & Y & 0.246 & Y & Y \\
Group10-Hofmann & 100 & 0.000 & 0.000 & N & 0.000 & N & N \\
Group13-Mohamed & 100 & 0.000 & 0.000 & N & 0.000 & N & N \\
Group14-Neumann-Kendrick-Leusen & 200 & 0.000 & 0.405 & Y & 0.490 & Y & Y \\
Group15-Adjiman-Pantelides & 100 & 0.000 & 0.020 & Y & 0.000 & N & Y \\
Group18-Price & 200 & 0.000 & 0.035 & Y & 0.040 & Y & Y \\
Group21-Zhu & 30 & 0.000 & 0.000 & N & 0.000 & N & N \\
Group22-Boese-Hofmann & 15 & 0.000 & 0.000 & N & 0.000 & N & N \\
Group23-Brandenburg-Grimme & 103 & 0.000 & 0.019 & Y & 0.058 & Y & Y \\
\bottomrule
\end{tabular}
}
\end{table}

\begin{table}[h] 
\caption{Results per crystal of submitted methods for CSP blind test 7.}
\label{tab:app-comp7-results}
\small
\centering
\resizebox{0.46\linewidth}{!}{
\begin{tabular}{lccccccc}
\toprule
Model & $n_S$ & Col$_S \downarrow$ & Pac$_S \uparrow$ & Pac$_C?$ & Rec$_S \uparrow$ & Rec$_C?$ & $\widetilde{\text{Sol}}_C ?$ \\
\midrule

& \multicolumn{7}{c}{Target XXVII (XIFZOF01)} \\
\cmidrule(lr){2-8}
Group06-BEijck & 1500 & 0.000 & 0.011 & Y & 0.006 & Y & Y \\
Group08-DWMHofmann & 1501 & 0.000 & 0.001 & Y & 0.000 & N & N \\
Group10-XtalPi & 1500 & 0.000 & 0.062 & Y & 0.038 & Y & Y \\
Group16-Marom-Isayev & 1500 & 0.000 & 0.027 & Y & 0.005 & Y & Y \\
Group17-Matsui & 1500 & 0.000 & 0.005 & Y & 0.005 & Y & Y \\
Group19-OpenEye & 1500 & 0.000 & 0.000 & N & 0.000 & N & N \\
Group20-MNeumann & 1500 & 0.000 & 0.015 & Y & 0.007 & Y & Y \\
Group21-Obata-Goto & 1500 & 0.000 & 0.011 & Y & 0.003 & Y & Y \\
Group22-AOganov & 1500 & 0.000 & 0.014 & Y & 0.000 & N & Y \\
Group24-SLPrice & 1500 & 0.000 & 0.005 & Y & 0.000 & N & Y \\
Group25-CShang & 1500 & 0.000 & 0.061 & Y & 0.007 & Y & Y \\
Group27-KSzalewicz-MTuckerman & 1500 & 0.000 & 0.027 & Y & 0.000 & N & Y \\
Group28-QZhu & 1500 & 0.000 & 0.000 & N & 0.000 & N & N \\
\midrule

& \multicolumn{7}{c}{Target XXVIII (OJIGOG01)} \\
\cmidrule(lr){2-8}
Group06-BEijck & 1500 & 0.076 & 0.002 & Y & 0.000 & N & Y \\
Group08-DWMHofmann & 31 & 1.000 & 0.516 & Y & 0.000 & N & N \\
Group10-XtalPi & 1500 & 1.000 & 0.141 & Y & 0.000 & N & N \\
Group20-MNeumann & 1500 & 0.947 & 0.127 & Y & 0.008 & Y & N \\
Group22-AOganov & 811 & 0.335 & 0.002 & Y & 0.000 & N & N \\
Group24-SLPrice & 1500 & 1.000 & 0.093 & Y & 0.000 & N & N \\
Group25-CShang & 1500 & 0.993 & 0.359 & Y & 0.005 & Y & Y \\
Group26-KSzalewicz & 1500 & 1.000 & 0.000 & N & 0.000 & N & N \\
\midrule

& \multicolumn{7}{c}{Target XXIX (FASMEV)} \\
\cmidrule(lr){2-8}
Group01-Adjiman-Pantelides & 1510 & 0.000 & 0.072 & Y & 0.961 & Y & Y \\
Group03-DBoese & 1510 & 0.000 & 0.060 & Y & 0.836 & Y & Y \\
Group05-GDay & 1510 & 0.000 & 0.143 & Y & 0.981 & Y & Y \\
Group06-BEijck & 1510 & 0.000 & 0.052 & Y & 0.668 & Y & Y \\
Group10-XtalPi & 1510 & 0.000 & 0.223 & Y & 0.997 & Y & Y \\
Group11-Johnson-Otero-de-la-Roza & 319 & 0.000 & 0.245 & Y & 0.997 & Y & Y \\
Group12-JJose & 2096 & 0.000 & 0.147 & Y & 0.000 & N & N \\
Group13-DKhakimov & 80 & 0.000 & 0.537 & Y & 1.000 & Y & Y \\
Group16-Marom-Isayev & 1510 & 0.000 & 0.476 & Y & 1.000 & Y & Y \\
Group18-SMohamed & 1510 & 0.000 & 0.054 & Y & 0.984 & Y & Y \\
Group19-OpenEye & 904 & 0.000 & 0.043 & Y & 0.893 & Y & Y \\
Group20-MNeumann & 1504 & 0.000 & 0.418 & Y & 0.999 & Y & Y \\
Group21-Obata-Goto & 1510 & 0.000 & 0.079 & Y & 0.738 & Y & Y \\
Group22-AOganov & 1510 & 0.000 & 0.054 & Y & 0.719 & Y & Y \\
Group23-CJPickard & 1510 & 0.001 & 0.154 & Y & 0.994 & Y & Y \\
Group24-SLPrice & 1265 & 0.000 & 0.043 & Y & 0.999 & Y & Y \\
Group25-CShang & 1500 & 0.000 & 0.159 & Y & 0.988 & Y & Y \\
Group27-KSzalewicz-MTuckerman & 1510 & 0.000 & 0.175 & Y & 0.972 & Y & Y \\
Group28-QZhu & 209 & 0.000 & 0.096 & Y & 0.947 & Y & Y \\
\midrule

& \multicolumn{7}{c}{Target XXX - 2:1 (MIVZEA)} \\
\cmidrule(lr){2-8}
Group05-GDay & 1600 & 0.000 & 0.007 & Y & 0.379 & Y & Y \\
Group06-BEijck & 1600 & 0.000 & 0.007 & Y & 0.239 & Y & Y \\
Group10-XtalPi & 1600 & 0.000 & 0.042 & Y & 0.186 & Y & Y \\
Group12-JJose & 403 & 0.000 & 0.000 & N & 0.432 & Y & N \\
Group13-DKhakimov & 281 & 0.000 & 0.000 & N & 1.000 & Y & N \\
Group18-SMohamed & 1600 & 0.000 & 0.002 & Y & 1.000 & Y & Y \\
Group19-OpenEye & 1600 & 0.000 & 0.001 & Y & 0.052 & Y & Y \\
Group20-MNeumann & 1600 & 0.000 & 0.042 & Y & 0.207 & Y & Y \\
Group21-Obata-Goto & 1600 & 0.000 & 0.002 & Y & 0.541 & Y & Y \\
Group22-AOganov & 1138 & 0.000 & 0.000 & N & 0.149 & Y & N \\
Group24-SLPrice & 1600 & 0.000 & 0.000 & N & 0.269 & Y & N \\
Group27-KSzalewicz-MTuckerman & 1600 & 0.000 & 0.001 & Y & 0.287 & Y & Y \\
Group28-QZhu & 1600 & 0.000 & 0.001 & Y & 0.070 & Y & Y \\
\midrule

& \multicolumn{7}{c}{Target XXX - 1:1 (MIVZIE)} \\
\cmidrule(lr){2-8}
Group05-GDay & 1600 & 0.000 & 0.016 & Y & 0.099 & Y & Y \\
Group06-BEijck & 1600 & 0.000 & 0.004 & Y & 0.041 & Y & Y \\
Group10-XtalPi & 1600 & 0.000 & 0.024 & Y & 0.107 & Y & Y \\
Group12-JJose & 403 & 0.000 & 0.000 & N & 0.000 & N & N \\
Group13-DKhakimov & 281 & 0.000 & 0.007 & Y & 0.000 & N & N \\
Group18-SMohamed & 1600 & 0.000 & 0.000 & N & 0.000 & N & N \\
Group19-OpenEye & 1600 & 0.000 & 0.001 & Y & 0.046 & Y & N \\
Group20-MNeumann & 1600 & 0.000 & 0.015 & Y & 0.111 & Y & Y \\
Group21-Obata-Goto & 1600 & 0.000 & 0.006 & Y & 0.001 & Y & Y \\
Group22-AOganov & 1138 & 0.000 & 0.001 & Y & 0.000 & N & N \\
Group24-SLPrice & 1600 & 0.000 & 0.001 & Y & 0.000 & N & N \\
Group27-KSzalewicz-MTuckerman & 1600 & 0.000 & 0.001 & Y & 0.016 & Y & N \\
Group28-QZhu & 1600 & 0.000 & 0.003 & Y & 0.083 & Y & Y \\
\midrule

& \multicolumn{7}{c}{Target XXXI (ZEHFUR)} \\
\cmidrule(lr){2-8}
Group01-Adjiman-Pantelides & 1500 & 0.000 & 0.029 & Y & 0.330 & Y & Y \\
Group03-DBoese & 1500 & 0.000 & 0.029 & Y & 0.266 & Y & Y \\
Group05-GDay & 1500 & 0.005 & 0.055 & Y & 0.351 & Y & Y \\
Group06-BEijck & 1500 & 0.000 & 0.008 & Y & 0.029 & Y & Y \\
Group08-DWMHofmann & 1500 & 0.000 & 0.004 & Y & 0.199 & Y & Y \\
Group10-XtalPi & 1500 & 0.000 & 0.055 & Y & 0.513 & Y & Y \\
Group12-JJose & 1500 & 0.001 & 0.025 & Y & 0.000 & N & N \\
Group16-Marom-Isayev & 1500 & 0.000 & 0.063 & Y & 0.405 & Y & Y \\
Group18-SMohamed & 1500 & 0.000 & 0.001 & Y & 0.000 & N & N \\
Group19-OpenEye & 1500 & 0.000 & 0.030 & Y & 0.090 & Y & Y \\
Group20-MNeumann & 1500 & 0.000 & 0.123 & Y & 0.636 & Y & Y \\
Group21-Obata-Goto & 1500 & 0.000 & 0.006 & Y & 0.137 & Y & Y \\
Group22-AOganov & 1500 & 0.000 & 0.003 & Y & 0.098 & Y & Y \\
Group24-SLPrice & 1500 & 0.000 & 0.017 & Y & 0.244 & Y & Y \\
Group25-CShang & 1500 & 0.000 & 0.069 & Y & 0.678 & Y & Y \\
Group27-KSzalewicz-MTuckerman & 1500 & 0.000 & 0.009 & Y & 0.073 & Y & Y \\
Group28-QZhu & 1500 & 0.000 & 0.001 & Y & 0.045 & Y & N \\
\midrule

& \multicolumn{7}{c}{Target XXXII (JEKVII)} \\
\cmidrule(lr){2-8}
Group01-Adjiman-Pantelides & 1499 & 0.006 & 0.001 & Y & 0.000 & N & N \\
Group03-DBoese & 1500 & 0.000 & 0.000 & N & 0.000 & N & N \\
Group05-GDay & 1500 & 0.000 & 0.000 & N & 0.000 & N & N \\
Group06-BEijck & 1500 & 0.000 & 0.000 & N & 0.000 & N & N \\
Group10-XtalPi & 1500 & 0.001 & 0.030 & Y & 0.020 & Y & Y \\
Group18-SMohamed & 203 & 0.000 & 0.000 & N & 0.000 & N & N \\
Group19-OpenEye & 1500 & 0.000 & 0.000 & N & 0.000 & N & N \\
Group20-MNeumann & 1500 & 0.000 & 0.134 & Y & 0.066 & Y & Y \\
Group22-AOganov & 1500 & 0.000 & 0.000 & N & 0.000 & N & N \\
Group24-SLPrice & 1500 & 0.000 & 0.000 & N & 0.000 & N & N \\
Group25-CShang & 1500 & 0.000 & 0.085 & Y & 0.014 & Y & Y \\
Group27-KSzalewicz-MTuckerman & 1500 & 0.000 & 0.000 & N & 0.000 & N & N \\
Group28-QZhu & 1495 & 0.000 & 0.000 & N & 0.000 & N & N \\
\midrule

& \multicolumn{7}{c}{Target XXXIII (ZEGWAN)} \\
\cmidrule(lr){2-8}
Group01-Adjiman-Pantelides & 1500 & 0.000 & 0.011 & Y & 0.429 & Y & Y \\
Group05-GDay & 1500 & 0.000 & 0.013 & Y & 0.623 & Y & Y \\
Group06-BEijck & 1500 & 0.000 & 0.015 & Y & 0.338 & Y & Y \\
Group08-DWMHofmann & 1500 & 0.000 & 0.004 & Y & 0.522 & Y & Y \\
Group10-XtalPi & 1500 & 0.000 & 0.029 & Y & 0.797 & Y & Y \\
Group13-DKhakimov & 56 & 0.000 & 0.000 & N & 0.000 & N & N \\
Group19-OpenEye & 1500 & 0.000 & 0.022 & Y & 0.758 & Y & Y \\
Group20-MNeumann & 1500 & 0.000 & 0.052 & Y & 0.729 & Y & Y \\
Group21-Obata-Goto & 1500 & 0.000 & 0.010 & Y & 0.553 & Y & Y \\
Group22-AOganov & 1500 & 0.000 & 0.000 & N & 0.008 & Y & N \\
Group24-SLPrice & 1500 & 0.000 & 0.013 & Y & 0.363 & Y & Y \\
Group25-CShang & 1500 & 0.000 & 0.009 & Y & 0.545 & Y & Y \\
Group27-KSzalewicz-MTuckerman & 1500 & 0.000 & 0.009 & Y & 0.231 & Y & Y \\
Group28-QZhu & 1453 & 0.000 & 0.012 & Y & 0.551 & Y & Y \\

\bottomrule
\end{tabular}
}
\end{table}

{\renewcommand{\arraystretch}{1}
\setlength{\tabcolsep}{2.5pt}
\begin{table}[t!]
\centering
\caption{Summary of computational resources used by some of the participants as reported in the 5th CSP blind test. CPU hours are approximately normalized to 3.0 GHz. \namelong times are reported as elapsed wall time in hours on 1 L40S GPU and 6 CPUs.}
\label{app:tabl:csp5-times}
\begin{tabular}{lrrrrrrr}
\toprule
Group & XVI & XVII & XVIII & XIX & XX & XXI & Total \\ 
\midrule
Boerrigter & 90 & 100 & 350 & 650 & 2,105 & 600 & 3,800 \\ 
Day, Cruz-Cabeza & 110 & 1,941 & 21,051 & 6,097 & 54,090 & 22,197 & 91,400 \\ 
Desiraju, Thakur, Tiwari, Pal & 114 & 2,303 & 324 & 114 &  & 1,431 & 4,600 \\ 
Hofmann & 2 & 7 & 12 & 694 & 670 & 187 & 1,600 \\ 
Neumann, Leusen, Kendrick, van de Streek &  &  &  &  &  &  & 115,000 \\ 
Price et al. & 200 & 5,000 & 14,000 & 3,000 & 120,000 & 52,800 & 195,000 \\ 
Van Eijck & 27 &  &  &  &  &  & 9,500 \\ 
Della Valle, Venuti &  &  &  &  &  &  & 3,200 \\ 
Maleev, Zhitkov &  &  &  &  &  &  & 7,500 \\ 
Misquitta, Pickard \& Needs &  &  &  &  &  &  & 162,000 \\ 
Scheraga, Arnautova & 150 & 150 & 610 & 720 &  &  & 1,300 \\ 
\midrule
\namelong & 0.024& 0.011& 0.012& 0.013& 0.029& 0.011& 0.100\\
\bottomrule
\end{tabular}%
\end{table}
} %
{\renewcommand{\arraystretch}{1}
\setlength{\tabcolsep}{3pt}
\begin{table}[t!]
\centering
\caption{Summary of the computational resources used by each submission in terms of raw CPU hours as reported in the 6th CSP blind test. \namelong times are reported as elapsed wall time in hours on 1 L40S GPU and 6 CPUs. Note that Group 22 re-ranks structures submitted by Group 10, and Groups 23, 24, and 25 re-rank structures submitted by Group 18.}
\label{app:tabl:csp6-times}
\begin{tabular}{lrrrrrr}
\toprule
Group & XXII & XXIII & XXIV & XXV & XXVI & Total \\ 
\midrule
01-Chadha \& Singh & 350 & 450 &   &  & 600  & 1,400 \\ 
02-Cole et al. & 6 & 538 & & 46 & 246   & 836 \\ 
03-Day et al. & 12,714 & 394,948 & 15,241 & 121,701 & 179,897 & 724,501 \\ 
04-Dzyabchenko & 144 & 3,648 & 3,360 &   &   & 7,152 \\ 
05-van Eijck & 130 & 2,810 & 1,400 & 8,060 & 7,630 & 20,030 \\ 
06-Elking \& Fusti-Molnar & 418,540 & 242,000 & 235,400 & 135,000 & 190,000 & 1,220,940 \\ 
07-van den Ende, Cuppen et al. & 9,741 & 7,777 & &  6,388  & 23,906 \\ 
08-Facelli et al. & 268,012 & 38,500 & 11,500 & 39,000 &   & 357,012 \\ 
09-Obata \& Goto & 19,200 & 346,000 &  & 325,000   &   & 690,200 \\ 
10-Hofmann \& Kuleshova & 10 & 630 & 623 & 202 & 255 & 1,720 \\ 
11-Lv, Wang, Ma & 325,000  &  &   &   &   & 325,000 \\ 
12-Marom et al. & 30,000,000 &   &   &   &   & 30,000,000 \\ 
13-Mohamed & 26 & 106 &   & 81 & 61 & 274 \\ 
14-Neumann, Kendrick, Leusen & 32,160 & 146,120 & 103,700 & 84,680 & 356,844 & 723,504 \\ 
15-Pantelides, Adjiman et al. & 333 & 87,000 &   & 37,535 & 272,500 & 397,368 \\ 
16-Pickard et al. & 380,000  &   &   &  &   & 380,000 \\ 
17-Podeszwa et al. & 72,220   &   &   &  &   & 72,220 \\ 
18-Price et al. & 26,000 & 84,000 & 63,000 & 169,000 & 327,000 & 669,000 \\ 
19-Szalewicz et al. & 66,000  &   &   &  &   & 66,000 \\ 
20-Tuckerman, Szalewicz et al. & 81,000  &   &   &  &   & 81,000 \\ 
21-Zhu, Oganov, Masunov & 4,000 & 275,000 & 279,800 & 30,000 & 180,000 & 768,800 \\ 
\midrule
22-Boese & 80,000 & 80,000 & 80,000 & 80,000 & 80,000 & 400,000 \\ 
23-Brandenburg \& Grimme & 13,665 & 8,661 & 3,509 & 34,824 & 10,135 & 70,794 \\ 
24-Szalewicz et al. &   &   & 15,000 &   &   & 15,000 \\ 
25-Tkatchenko et al. & 100,000 & 2,100,000 & 500,000 & 500,000 &   & 3,200,000 \\ 
\midrule
\namelong & 0.012 & 0.019 & 0.011 & 0.030 & 0.042 & 0.114 \\
\bottomrule
\end{tabular}%
\end{table}
}

\begin{table}[t!] 
\caption{CPU core hours per target molecule for each prediction method as reported in the 7th CSP blind test. \namelong times are reported as elapsed wall time in hours on 1 L40S GPU and 6 CPUs.}
\label{app:tabl:csp7-times}
\centering
\small
\resizebox{\linewidth}{!}{
\begin{tabular}{lrrrrrrrr}
\toprule
Group & XXVII & XXVIII & XXIX & XXX & XXXI & XXXII & XXXIII & Total \\
\midrule
01-Adjiman-Pantelides  &        &        & 652,495 &        & 840,000 & 1,597,000 & 412,000 & 3,501,495 \\
03-DBoese  &        &        & 1,600,000 &        & 1,500,000 & 3,600,000 &        & 6,700,000 \\
05-GDay  & 768,766 &        & 33,000 & 2,900,000 & 510,563 & 846,698 & 228,957 & 5,287,984 \\
06-BEijck  & 8,120 & 1,350 & 1,310 & 9,800 & 1,470 & 2,900 & 4,980 & 29,930 \\
08-DWMHofmann  & 3,200 & 10 &        &  & 4,000       &  & 1,840 & 9,050       \\
10-XtalPi & 772,500 & 1,242,500 & 1,146,588 & 644,927 & 381,672 & 644,927 & 612,500 & 5,445,614 \\
11-Johnson-Otero-de-la-Roza &        &        & 643,882 &        &        &        &        & 643,882 \\
12-JJose &        &        & 20,000 & 80,000 & 20,000 &        &        & 120,000 \\
13-DKhakimov &        &        & 350 & 1,500 &        &        & 500 & 2,350 \\
16-Marom-Isayev & 1,700,000 &        & 2,128,000 &        & 630,000 &        &        & 4,458,000 \\
17-Matsui & 95,819 &        &        &        &        &        &        & 95,819 \\
18-SMohamed &        &        & 1,050 & 36,864 & 632 & 1,561 &        & 40,107 \\
19-OpenEye & 30,000 &        & 40,000 & 1,250,000 & 140,000 & 400,000 & 60,000 & 1,920,000 \\
20-MNeumann & 1,022,976 & 283,538 & 755,712 & 1,769,472 & 1,028,064 & 3,935,232 & 728,064 & 9,523,058 \\
21-Obata-Goto & 333,586 &        & 92,890 & 580,436 & 1,889,649 &        & 477,210 & 3,373,771 \\
22-AOganov & 20,000 & 2,000 & 15,000 & 180,000 & 20,000 & 25,000 & 25,000 & 287,000 \\
23-CJPickard &        &        & 10,000 &        &        &        &        & 10,000 \\
24-SLPrice & 450,290 & 89,666 & 76,541 & 100,000 & 49,177 & 244,520 & 123,427 & 1,133,621 \\
25-CShang & 55,150 & 29,691 & 4,784 &        & 6,476 & 76,161 & 34,648 & 206,910 \\
26-KSzalewicz &        &        &        &        & 28,332 &        &        & 28,332 \\
27-KSzalewicz-MTuckerman & 1,280,566 & 60,457 & 242,424 & 213,722 &        & 1,663,940 & 150,650 & 3,611,759 \\
28-QZhu & 1,600 &        & 1,500 & 7,680 & 1,500 & 1,500 & 1,500 & 15,280 \\
\midrule
\namelong & 0.060 & 0.026 & 0.011 & 0.056 & 0.015 & 0.050 & 0.017 & 0.235 \\
\bottomrule
\end{tabular}
}
\end{table}

\clearpage{}%

\end{document}